\documentclass{article}
\PassOptionsToPackage{numbers, compress}{natbib}
\usepackage[main,final]{neurips_2026}

\usepackage[utf8]{inputenc}
\usepackage[T1]{fontenc}
\usepackage[pagebackref, breaklinks=true, colorlinks, citecolor=blue, bookmarks=false, urlcolor=blue]{hyperref}
\usepackage{wrapfig} 
\usepackage{booktabs}
\usepackage{amsfonts}
\usepackage{amsmath,amssymb,amsthm,mathtools}
\usepackage{nicefrac}
\usepackage{microtype}
\usepackage[dvipsnames]{xcolor}
\usepackage{graphicx}
\usepackage{multirow}
\usepackage{makecell}
\usepackage{xcolor}
\usepackage{colortbl}
\usepackage{float}
\usepackage{algorithm}
\usepackage{algpseudocode}
\usepackage{enumitem}
\usepackage{xspace}
\usepackage{pifont}
\usepackage{subcaption}
\usepackage{cleveref}
\usepackage[most]{tcolorbox}
\crefname{figure}{Fig.}{Figs.}
\Crefname{figure}{Fig.}{Figs.}
\usepackage[most]{tcolorbox}
\usepackage{xcolor}

\crefname{table}{Table}{Tables}
\Crefname{table}{Table}{Tables}

\crefname{section}{Sec.}{Secs.}
\Crefname{section}{Sec.}{Secs.}

\usepackage{wrapfig}
\usepackage{tabularx}
\usepackage{colortbl}
\usepackage{array}
\usepackage{amsthm}

\graphicspath{ {./} }
\definecolor{bluecite}{HTML}{0071BC}
\definecolor{tablerowcolor}{RGB}{220,235,245}
\usepackage{url}
\definecolor{darkblue}{rgb}{0, 0, 0.5}
\definecolor{lightgray}{gray}{0.93}
\definecolor{bestrow}{RGB}{232, 245, 233}
\definecolor{pathvisual}{RGB}{132, 26, 202}
\definecolor{pathoverride}{RGB}{196, 15, 16}
\definecolor{pathparam}{RGB}{83, 152, 85}
\hypersetup{colorlinks=true, citecolor=darkblue, linkcolor=darkblue, urlcolor=darkblue}

\newtheorem{theorem}{Theorem}[section]
\newtheorem{corollary}[theorem]{Corollary}
\newtheorem{proposition}[theorem]{Proposition}

\theoremstyle{remark}

\newcommand{\method}{\textsc{Craft}\xspace}
\newcommand{\xmark}{\ding{55}}
\newcommand{\cmark}{\ding{51}}

\newcommand{\doOp}{\mathrm{do}}

\newcommand{\calU}{\mathcal{U}}
\newcommand{\calV}{\mathcal{V}}

\newcommand{\calP}{\mathcal{P}}
\newcommand{\calC}{\mathcal{C}}
\newcommand{\calW}{\mathcal{W}}

\newcommand{\calL}{\mathcal{L}}
\newcommand{\bh}{\mathbf{h}}
\newcommand{\ba}{\mathbf{a}}
\newcommand{\bW}{\mathbf{W}}

\newcommand{\CFR}{\mathrm{CFR}}
\newcommand{\UR}{\mathrm{UR}}
\newcommand{\CER}{\mathrm{CER}}
\newcommand{\BCP}{\mathrm{BCP}}
\newcommand{\HR}{\mathrm{HR}}

\title{\raisebox{-1mm}{\includegraphics[width=0.05\textwidth]{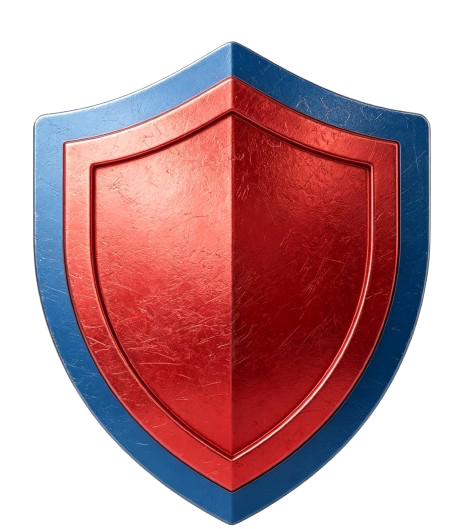}} CRAFT: Causal Responsibility and Failure Tracing\\in Medical Vision Language Models}

\author{%
  \textbf{Chunzheng Zhu\textsuperscript{1,*}, Jiaqi Zeng\textsuperscript{1,*}, Hongbo Zhao\textsuperscript{1},}\\
  \textbf{Yihang Chen\textsuperscript{2}, Yijun Wang\textsuperscript{1,$\dagger$}, Jianxin Lin\textsuperscript{1,$\dagger$}}\\
  \textsuperscript{1}Hunan University\quad \textsuperscript{2}University of Hong Kong\\
  {\small \textsuperscript{*}Equal contribution.\quad \textsuperscript{$\dagger$}Corresponding authors.}
}

\begin{document}

\maketitle
\begin{abstract}
As vision language models are increasingly deployed in clinical diagnosis, understanding how they internally resolve competing visual and textual signals becomes a safety imperative. Existing mechanistic analyses remain confined to unimodal text and offer no explanation for why a single misleading sentence can override a correct image based diagnosis, or why a model commits to a confident answer despite insufficient visual evidence. We find that these two safety risks, arbitration failure where textual context overrides visual grounding and brake failure where the model commits without adequate evidence, are mediated by spatially disjoint attention head populations: arbitration heads form a mid-to-deep wideband reflecting cross-layer evidence 
competition, while brake heads concentrate in a narrow middle-to-late layer band that regulates evidence sufficiency and abstention behavior. To ground these observations in causal circuitry, we introduce \method, which localizes each failure mode to a minimal causal head set via dual criteria and verifies necessity and sufficiency through temporal probes and Tuned Lens trajectory analysis. Excising arbitration heads sharply reduces conflict following with negligible degradation on clean inputs, while excising brake heads restores appropriate abstention under degraded visual evidence. The two interventions target spatially disjoint head sets and produce distinct corrective effects, underscoring the mechanistic separability of the failure modes. Experiments across multiple medical VQA benchmarks and VLM architectures validate both the localization and interventions, demonstrating that the identified heads causally drive each failure mode and that targeted modulation generalises without retraining.
The code is available at \url{https://github.com/zhcz328/CRAFT}.

% Medical vision--language models (VLMs) can fail in clinically dangerous ways when multimodal evidence is conflicting or unreliable. A misleading clinical note may override a visually supported diagnosis, while degraded visual evidence may still elicit a confident concrete answer. We study these behaviors as two distinct failure modes: \emph{arbitration failure} under textual conflict and \emph{brake failure} under insufficient visual evidence. We introduce \method, an activation-patching framework that localizes sparse attention-head sets using two complementary criteria: reduction of the targeted failure under intervention and preservation of clean-input diagnostic capacity. Across two medical VLMs and multiple clinical VQA benchmarks, we find that arbitration-sensitive heads form a mid-to-deep band, whereas brake-sensitive heads concentrate in late layers. The two groups are largely spatially disjoint and respond to different interventions: excising arbitration heads reduces conflict-following by up to 41 absolute points, while amplifying brake heads improves abstention by 107\% relative in the strongest setting. Linear probes and Tuned Lens analyses further suggest a temporal separation between early conflict/degradation detection and later answer commitment. These results provide evidence that the two failure modes are interventionally separable in the studied models, while broader model coverage and clinically validated abstention labels remain important future work.

\end{abstract}

% ============================================================
%  §1 INTRODUCTION
% ============================================================
\section{Introduction}\label{sec:intro}

Medical vision language models are increasingly deployed for clinical question answering and diagnostic support~\cite{bai2025qwen25vl,chen2024internvl,xia2024cares}, yet their behavior can change sharply when visual evidence, textual context, and parametric medical knowledge disagree~\cite{liu2025insight,wu2025conflictmedqa}. For safety critical deployment, observing these failures at the output level is insufficient: we need tools for identifying which internal components are sensitive to each failure mode and whether targeted interventions can correct the behavior without damaging clean diagnostic reasoning.

Mechanistic interpretability provides a natural starting point~\cite{elhage2021mathematical,conmy2023acdc}. Prior work has mapped specialized attention heads in language models, including memory heads for parametric recall~\cite{jin2024cutting}, context heads for in-context integration~\cite{li2025taming,hendel2023context}, and induction heads for token copying~\cite{wang2023interpretability}, while activation patching studies have localized factual recall to specific MLP modules~\cite{meng2022locating,vig2020investigating}. Recent extensions trace intra memory knowledge conflicts to MLP attention interactions~\cite{yu2025where} and reveal that parametric versus contextual signals can superpose within the same head~\cite{li2025taming}. However, these analyses focus exclusively on text only LLMs and binary knowledge conflicts. Medical VLMs introduce an additional axis of competition: the model must arbitrate among image evidence, textual context, and learned medical priors, and it should sometimes abstain when visual evidence is insufficient.

Cross modal competition therefore transcends the binary ``follow context vs.\ retrieve memory'' choice studied in prior work~\cite{longpre2021entity,xie2024adaptive,shi2024ircan}, giving rise to a \emph{tripartite} interaction among three causal pathways: a visual anchoring path~$\calP_v$, a parametric knowledge path~$\calP_\Theta$, and a textual override path~$\calP_t$ (\cref{fig:motivation}). This three way competition creates two mechanistically distinct safety risks:
\textit{(i)}~\textbf{Arbitration failure}: misleading text contradicts a correct visual diagnosis and the model erroneously privileges text, e.g., a chest radiograph showing clear pneumonia infiltrates accompanied by ``no cough reported'' yields the output \textit{non-pneumonia}.
\textit{(ii)}~\textbf{Brake failure}: visual evidence degrades below clinical sufficiency yet the model commits to a definitive answer rather than abstaining, \textit{e.g.}, a dermoscopy image with the lesion region masked still prompts a specific diagnosis.

\begin{figure}[t]
    \centering
    \includegraphics[width=1\textwidth]{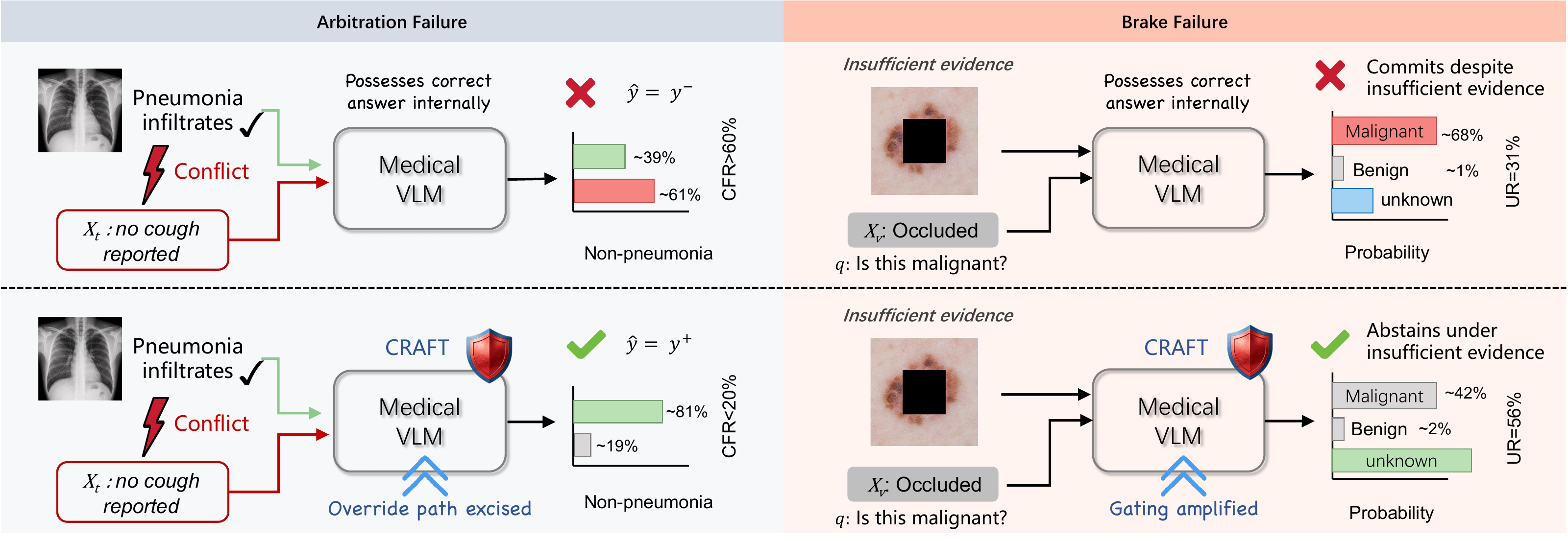}
    \vspace{-6pt}
\caption{Two failure modes in medical VLMs and \method interventions. \text{Left (Arbitration Failure):} Contradictory text hijacks a correct visual diagnosis, causing the model to follow the misleading answer. \text{Right (Brake Failure):} The model commits despite occluded evidence. 
% \method restores faithful behavior by excising override circuitry and excising commitment gating circuitry.\looseness=-1
}
\label{fig:motivation}
    \vspace{-8pt}
\end{figure}

Both failures produce unsafe clinical outputs but reflect fundamentally different computational regimes: arbitration failure indicates a model that \textit{possesses the correct answer but is hijacked by a competing textual signal}; brake failure indicates one that \textit{lacks sufficient evidence yet commits regardless}. Prior work documents these phenomena at the output level~\cite{longpre2021entity,liu2025insight} and proposes training free mitigations via contrastive decoding~\cite{zhang2025ccd} or image grounded guidance~\cite{zhou2025marine}, but has not addressed \textit{where in the computation graph correct reasoning is overridden}. Purely observational head rankings conflate causal influence with correlation: a head active during conflicted outputs may be predictive without being a useful intervention target~\cite{neo2025interpreting,golovanevsky2025notice}. Three open challenges remain\label{para:challenges}:
\textbf{(C1)}~whether the two failure modes share circuitry or exhibit separable intervention sensitive patterns;
\textbf{(C2)}~how to identify heads whose manipulation changes failure behavior rather than merely predicts it;
\textbf{(C3)}~whether failure specific circuits can be intervened upon without degrading clean reasoning.

% We introduce \method, an activation patching framework for tracing and intervening on these two failure modes in medical VLMs. In the first stage, \method applies dual selection criteria, \emph{Causal Effect Reduction} (CER) and \emph{Baseline Capacity Perturbation} (BCP), to localize sparse attention head sets that are both interventionally effective and minimally disruptive on clean inputs. In the second stage, linear probes and Tuned Lens trajectories~\cite{belrose2023eliciting} provide complementary temporal evidence about when conflict signals become linearly decodable and when answer preferences shift across layers. Our principal contributions are:

We introduce \method, an activation patching framework for tracing and intervening on these two failure modes in medical VLMs. In the first stage, \method identifies a critical layer band through layer wise conflict scoring, then isolates within that band the minimal head sets whose causal effect on the targeted failure is large while their perturbation to clean input reasoning remains small. In the second stage, linear probes and Tuned Lens trajectories~\cite{belrose2023eliciting} provide complementary temporal evidence about when conflict signals become linearly decodable and when answer preferences shift across layers. Our principal contributions are:

\begin{itemize}[leftmargin=*, noitemsep]
  \renewcommand\labelitemi{$\diamond$}
    \item We formalize two medical VLM failure modes: arbitration failure under textual conflict and brake failure under visual degradation, separating textual override from insufficient evidence commitment within a unified structural causal framework.

    % \item We propose \method, a causal localization framework that identifies failure critical heads through dual criteria of intervention efficacy and clean input preservation, applying sparse excision to suppress textual override and targeted amplification to restore abstention under degraded visual evidence.
\item We propose \method, which narrows the causal search to a critical layer band via conflict scoring, then isolates a minimal head set whose excision maximally reduces the targeted failure while preserving clean capacity. The resulting targets are failure specific: arbitration head excision suppresses textual override; brake head excision restores abstention under degraded evidence.

    \item On multiple VLMs, our experiments reveal that arbitration-sensitive heads form a mid-to-deep band while brake-sensitive heads concentrate in a disjoint late-layer band. Targeted interventions reduce Conflict Flip Rate ($\CFR$) by up to 41 absolute points and raise Uncertainty Rate ($\UR$) under insufficient evidence by 75\%, with minimal collateral damage on clean diagnostic accuracy.

    % On the studied medical VLMs, arbitration-sensitive heads form a coherent mid-to-deep layer band while brake-sensitive heads concentrate in a disjoint late-layer band with minimal overlap between the two sets. Targeted suppression interventions on arbitration heads reduce $\CFR$ by up to 41 absolute percentage points and raise $\UR$ by 75\% relative on conflict benchmarks with minimal collateral damage on clean diagnostic accuracy.

\end{itemize}

\section{Related Work}\label{sec:related}

\paragraph{Mechanistic Interpretability and Knowledge Conflicts.}
Circuit discovery via activation patching~\cite{meng2022locating,conmy2023acdc}, path patching~\cite{goldowskydill2023localizing}, and attribution based head analysis~\cite{achtibat2024attnlrp} have identified specialized attention heads for factual recall and in context retrieval~\cite{olsson2022context,wu2025retrieval}.
These tools have been extended to knowledge conflict settings: locating ``memory heads'' whose ablation restores context faithfulness~\cite{jin2024cutting}, revealing superposition of parametric and contextual signals within the same head~\cite{li2025taming}, and tracing intra memory conflicts to specific MLP attention interactions~\cite{yu2025where}.
Inference time intervention methods such as ITI~\cite{li2024iti}, RepE~\cite{zou2023repe}, and conceptor based steering~\cite{liu2024conceptors} demonstrate that activation modulation can redirect model behavior without weight updates, but operate on coarse trait directions rather than failure specific circuits.
Critically, all above studies address text only LLMs; initial mechanistic analyses of VLMs do not study multimodal evidence conflict or clinical failure modes.

\paragraph{Medical VLM Safety and Selective Prediction.}
Medical VLMs achieve strong radiology QA performance~\cite{lau2018dataset,liu2021slake,zhu2026medeyes} yet remain susceptible to hallucination under ambiguous or degraded inputs~\cite{umapathi2023med,jiang2025hulu}.
Recent training free mitigation strategies include contrastive decoding guided by expert models~\cite{zhang2025ccd}, image grounded guidance from open source detectors~\cite{zhou2025marine}, and probing internal representations to predict hallucination risk before token generation~\cite{chen2025halp}.
Orthogonally, the selective prediction framework~\cite{geifman2019selectivenet} and LLM abstention literature~\cite{wen2025abstention} advocate withholding answers when confidence is low, but current implementations rely on output calibration or prompt heuristics rather than mechanistic intervention.
Our work bridges these threads by providing a training free mechanism that directly modulates the internal circuits responsible for over commitment, enabling selective abstention without retraining, while acknowledging that clinically valid abstention thresholds require deployment specific human in the loop validation.\looseness=-1

\begin{figure}[t]
    \centering
    \includegraphics[width=0.92\textwidth]{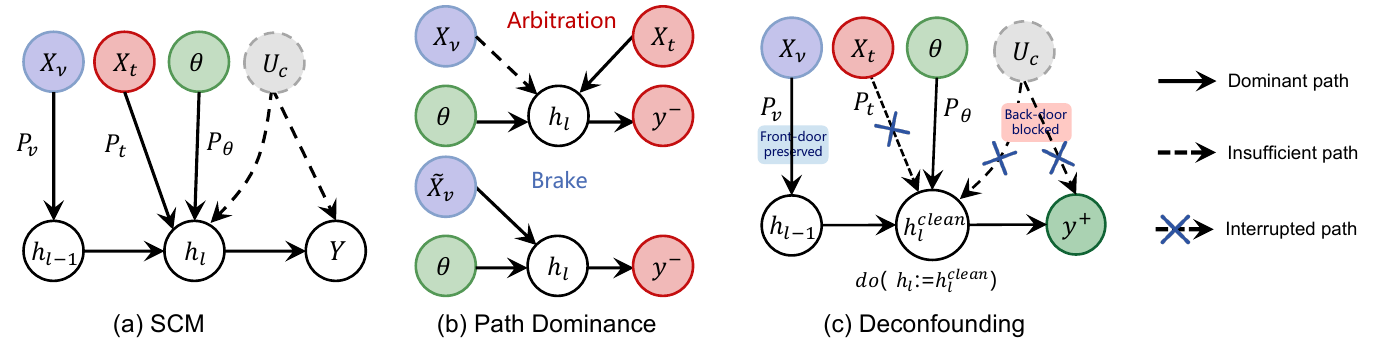}
    \vspace{-6pt}
\caption{\textbf{Interventional tracing setup.} (a)~The evidence conflict problem can be modeled by a structural causal model with three competing paths: \textcolor{pathvisual}{visual anchoring $\calP_v$}, \textcolor{pathparam}{parametric knowledge $\calP_\Theta$}, and \textcolor{pathoverride}{textual override $\calP_t$}, plus latent confounders $U_c$ that open back-door paths. (b)~Path dominance under each failure mode. (c)~Clean-state interchange blocks back-door paths through $U_c$ while preserving the front-door visual causal path, isolating genuine causal effects for head selection.}
\label{fig:scm}
    \vspace{-8pt}
\end{figure}

\section{Background and Preliminaries}\label{sec:background}

\subsection{Transformer Residual Structure}\label{sec:residual}

Given a medical image $v$ and a textual query $q$, a VLM processes the concatenated visual and textual token sequence through an $L$-layer decoder.
Let $\bh_{l,t}$ denote the residual stream at layer $l$ and token position $t$, and let $t_\mathrm{ans}$ denote the answer-token position used for next-token prediction.
At layer $l$, the residual stream accumulates contributions from $n_h$ parallel heads and a feed-forward network:
\begin{equation}
    \bh_{l,t} \;=\; \bh_{l-1,t} \;+\; \sum_{i=1}^{n_h}\bW_O^{l,i}\,\ba_{l,i,t} \;+\; \mathrm{FFN}_l(\bh_{l-1,t}),
    \label{eq:residual}
\end{equation}
where $\ba_{l,i,t}=\mathrm{Attn}^{l,i}(\bh_{l-1})_t$ is the value aggregated by head $i$ at position $t$.
The additive structure permits zeroing any individual head's output $\bW_O^{l,i}\,\ba_{l,i,t} \mapsto \mathbf{0}$ without compensatory redistribution among remaining heads~\cite{elhage2021mathematical,vig2020investigating}, forming the basis for per-head activation interventions that modify a single additive term while leaving the rest of the forward pass unchanged.

\subsection{Two Failure Modes and Problem Formulation}\label{sec:failures}

\paragraph{Structural view.}
We posit exogenous variables $\calU = \{X_v, X_t, \Theta, U_c\}$, where $X_v$ denotes the visual input (with $\tilde{X}_v$ denoting its degraded counterpart in the brake setting), $X_t$ the textual input, $\Theta$ the model parameters, and $U_c$ unobserved exogenous noise capturing latent confounders (\textit{e.g.}, lexical overlap, visual ambiguity, cross-modal coupling). The endogenous variables are $\calV = \{\bh_1,\ldots,\bh_L, Y\}$, where $\bh_l$ is the hidden representation at layer $l$ and $Y$ is the model output. These variables are governed by the structural equations $\bh_l = f_l(\bh_{l-1}, X_v, X_t, \Theta, U_c)$ and $Y = g(\bh_{L,t_\mathrm{ans}})$, where $t_\mathrm{ans}$ is the answer token position and $f_l, g$ are deterministic functions. The induced directed acyclic graph (\cref{fig:scm}a) admits three competing directed paths from sources to outcome: visual anchoring $\calP_v$ (information flow from $X_v$), parametric knowledge $\calP_\Theta$ (from $\Theta$), and textual override $\calP_t$ (from $X_t$).

\paragraph{Preference margin.}
We formalize both failure modes through $\delta = \log p(y^-) - \log p(y^+)$, where $y^+$ denotes the correct answer, $y^-$ the incorrect answer, and $p(y)$ is the model-assigned probability. A positive margin $\delta > 0$ indicates the model favours the wrong answer.
For multi-token answers, $\log p(y)$ denotes the length-normalised log-likelihood under teacher forcing; for multiple-choice datasets, $y$ corresponds to the candidate option token.

Two mechanistically distinct failures arise from evidence conflict:

\textbf{Arbitration failure} ($\calP_t$ dominates $\calP_v$ and $\calP_\Theta$): misleading textual evidence $X_t$ contradicts visual and parametric sources. The model possesses the correct answer under clean conditions ($\delta_{\text{clean}} < 0$, where $\delta_{\text{clean}}$ is the margin under non-conflicting inputs) yet is hijacked under conflict ($\delta_{\text{conflict}}$, the margin under conflicting inputs) (\cref{fig:scm}b, left).

\textbf{Brake failure} ($\calP_v$ insufficient): degraded visual evidence $X_v = x_v^{\text{degrade}}$ cannot sustain a diagnosis, yet the model commits to a concrete answer $y^{\mathrm{com}} \in \mathcal{Y}_{\mathrm{con}}$ (the set of concrete answers) rather than abstaining with $y^{\mathrm{unk}}$ (an “unknown” or abstention option) (\cref{fig:scm}b, right).
Formally:
\begin{align}
\text{(Arbitration)}\quad & \delta_{\text{conflict}} - \delta_{\text{clean}} > 0, \qquad\text{\textit{i.e.}, conflict shifts preference toward } y^-, \label{eq:arb_condition}\\[3pt]
\text{(Brake)}\quad & \log p(y^{\mathrm{com}}) > \log p(y^{\mathrm{unk}}), \quad y^{\mathrm{com}}\in\mathcal{Y}_{\mathrm{con}},; X_v = x_v^{\text{degrade}}. \label{eq:brake_condition}
\end{align}

\subsection{Interventional Tracing Setup}\label{sec:scm}

\paragraph{The confounding problem.}
Na\"ive observational attribution (\textit{e.g.}, ranking heads by activation magnitude) conflates causation with correlation. In the SCM of \cref{fig:scm}(a), the latent variable $U_c$ opens back-door paths $h_l \leftarrow U_c \rightarrow Y$: token-level shortcuts inflate apparent head importance, visual ambiguity causes widespread activation variance correlated with $Y$ through $U_c$ rather than through $\calP_v$, and cross-modal coupling via shared normalisation creates statistical dependence that does not reflect causal influence. The observational quantity $\mathbb{E}[Y \mid \ba_{l,i}]$ therefore conflates the direct causal effect $h \to Y$ with the spurious association $h \leftarrow U_c \to Y$.
% (More analysis in Appendix~\ref{app:confounding})

\paragraph{Deconfounding via clean-state interchange.}
To block these back-door paths, we exploit the modularity of structural causal models: intervening on an endogenous variable severs all incoming arrows while preserving downstream equations~\cite{pearl2009causality}. For each conflict input we construct a paired conflict-free counterfactual sharing the same $(X_v, q)$ and cache its clean activation $\bh_l^{\mathrm{clean}}$. The interchange intervention $\doOp(h_l := h_l^{\mathrm{clean}})$ fixes the head output independently of the current $U_c$ realisation, thereby d-separating $h_l$ from $U_c$ and blocking all back-door paths (\cref{fig:scm}c). Any residual change in $Y$ is then attributable solely to the conflict mechanism mediated by the intervened head rather than to confounding (Appendix~\ref{app:confounding} for the d-separation proof).\looseness=-1

\paragraph{Per-head activation scaling.}
We index heads by $h=(l,i)$ and write $\calC\subseteq [L]\times[n_h]$ for a head set with $\calC_l=\{i:(l,i)\in\calC\}$. For a set $\calC$ we define \emph{additive head-level scaling} at all token positions:
\begin{equation}
    Y_{\calC}^{(\alpha)} \;=\; g\Bigl(\bh_{L,t_\mathrm{ans}} \;\Big|\; \bh_{l,t} \leftarrow \bh_{l,t} + \sum_{i \in \calC_l}(\alpha_{l,i}-1)\,\bW_O^{l,i}\,\ba_{l,i,t}, \;\forall l,t\Bigr),
    \label{eq:do_intervention}
\end{equation}
where $\alpha_h\!=\!\epsilon$ with $\epsilon\!\to\!0^{+}$ realises near-complete \emph{suppression} (denoted $\mathrm{excise}(\calC)$, used for both arbitration and brake heads). Because the intervention fixes head outputs independently of the current realisation of $U_c$, the measured effect reflects the deconfounded causal contribution of each head to the targeted failure mode. In this work, ``intervention'' consistently refers to this activation-level do-operation rather than to a claim about the input-level data-generating process.

\paragraph{localization objective.}
Equipped with these two deconfounded operations, we identify sparse intervention targets whose interventions improve the targeted failure mode while limiting collateral damage. Rather than treating the selection boundaries or the number of selected targets as pre-specified hyperparameters, we obtain them empirically from the observed intervention profiles after scanning candidate heads or layers:
\begin{align}
    \calC_{\text{arb}}^* 
    &= \mathrm{ParetoKnee}_{\calC}
    \left\{
    \CFR(\mathrm{excise}(\calC))\downarrow,\;
    \mathrm{C2W}(\mathrm{excise}(\calC))\downarrow,\;
    |\calC|\downarrow
    \right\}, 
    \label{eq:objective_arb}\\[3pt]
    \calL_{\text{brk}}^* 
    &= \mathrm{ParetoKnee}_{\calL}
    \left\{
    \UR(\mathrm{attenuate}(\calL;\beta))\uparrow,\;
    \mathrm{U2O}(\mathrm{attenuate}(\calL;\beta))\downarrow,\;
    |\calL|\downarrow
    \right\}.
    \label{eq:objective_brake}
\end{align}
Here, $\calC_{\text{arb}}^*$ denotes the selected arbitration heads, while $\calL_{\text{brk}}^*$ denotes the selected brake-related layers under layer-wise attenuation with $\beta<1$. $\CFR$ is the proportion of conflict samples following the misleading text, $\UR$ is the fraction of degraded samples where the model correctly abstains, and $\mathrm{C2W}$/$\mathrm{U2O}$ capture collateral damage on clean inputs. The effective selection boundaries and the resulting number of selected targets are therefore empirical outcomes of the localization procedure, rather than manually fixed hyperparameters (Appendix~\ref{app:metrics} for definitions).

\begin{figure}[t]
    \centering
    \includegraphics[width=\textwidth]{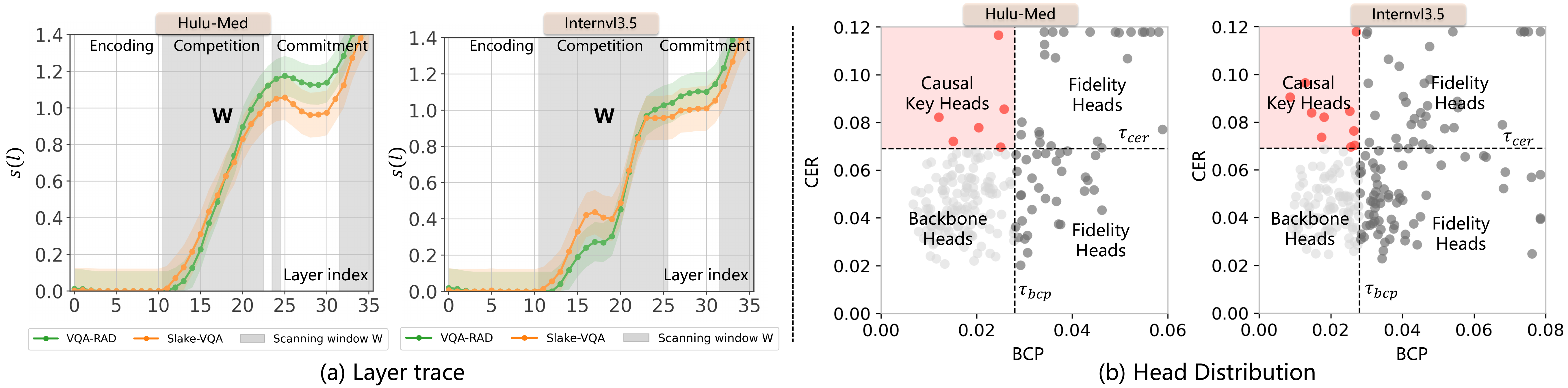}
    \vspace{-6pt}
    \caption{Arbitration failure localization. \textbf{(a)}~Layer-wise $S(l)$: shaded window marks layers where override concentrates. \textbf{(b)}~CER vs.\ BCP: red points (upper left) are selected causal key heads.}
    \label{fig:textual_analysis}
    \vspace{-10pt}
\end{figure}

\section{Causal Tracing of Failure Circuitry}\label{sec:localization}

% We localize attention heads causally responsible for each failure mode through two stages: first identifying override heads (arbitration failure), then gating heads (brake failure). Both stages apply the deconfounded intervention of \cref{sec:scm} to ensure selected heads are genuine causal drivers.
We trace each failure mode to its causal origin through a coarse to fine procedure: scoring layers to identify a critical band, then resolving heads within that band driving override circuitry (arbitration failure) and gating circuitry (brake failure). Both stages apply the deconfounded intervention of \cref{sec:scm} to ensure selected components are causal drivers rather than observational correlates.
% ------------------------------------------------------------
\subsection{Arbitration Failure}\label{sec:loc_text}

\paragraph{Method.}
Given a visual input $x_v$, a question $q$, and conflicting text $x_t^{\mathrm{conf}}$ asserting a misleading answer $y^-$, the preference margin $\delta = \log p_\theta(y^- \mid x_v, x_t^{\mathrm{conf}}, q) - \log p_\theta(y^+ \mid x_v, x_t^{\mathrm{conf}}, q)$ quantifies textual hijacking, where $\delta > 0$ means the model follows the misleading text.

We first narrow the search space via a layer wise conflict score. Let $\delta_{\mathrm{base}}$ denote the preference margin under clean input and $\delta_{\mathrm{conflict}}$ that under adversarial text. For each layer $l$, we cache the clean run hidden state and replace the layer $l$ output in the conflict forward pass with this cached representation, yielding a patched margin $\delta_{\mathrm{patched}}(l)$. The conflict score is then:
\begin{equation}
S(l) = |\delta_{\mathrm{conflict}} - \delta_{\mathrm{base}}| - |\delta_{\mathrm{patched}}(l) - \delta_{\mathrm{base}}|,
\end{equation}
measuring how much patching layer $l$ reduces the conflict effect toward the clean baseline. The critical band $\calW$ is identified from the $S(l)$ profile by selecting layers where the score rises sharply and remains elevated, corresponding to layers that actively sustain the textual override.

Within $\calW$, we isolate individual heads using two causal metrics: $\CER_{\mathrm{exc}}(h) = \delta^{\mathrm{conflict}} - \delta^{\mathrm{excise}(h)}$, measuring how much excising head $h$ reduces the override, and $\BCP_{\mathrm{exc}}(h) = \mathrm{Acc}^{\mathrm{clean}} - \mathrm{Acc}^{\mathrm{excise}(h)}$, measuring collateral damage on clean inputs. The arbitration critical head set is:
\begin{equation}
\calC^*_{\mathrm{arb}} = \{h : \CER_{\mathrm{exc}}(h) > \tau_s \;\wedge\; \BCP_{\mathrm{exc}}(h) < \tau_d\},
\label{eq:cstar}
\end{equation}
retaining only heads causally responsible for the override yet dispensable for normal reasoning.

\paragraph{Experiments and Analysis.}

\begin{figure*}[t]
    \centering
    \includegraphics[width=1\textwidth]{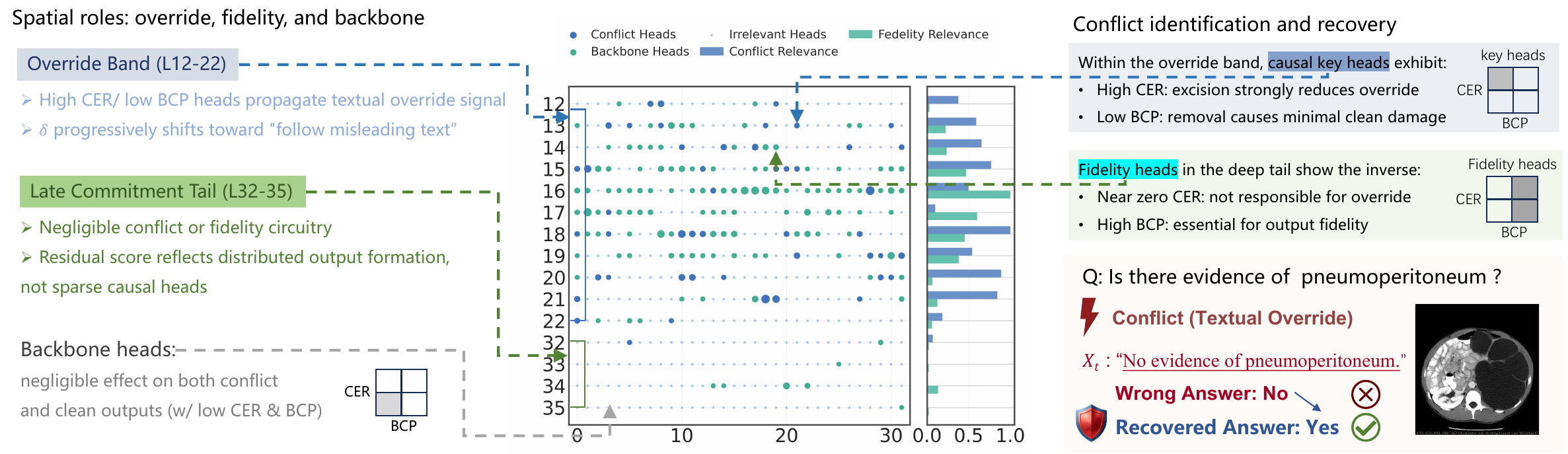}
\caption{\textbf{Arbitration head localization (Hulu-Med 4B).} Left: conflict heads (blue) cluster in the Override Band with high CER and low BCP; the Late Commitment Tail shows negligible conflict or fidelity circuitry, with residual score reflecting distributed output formation. Centre: layer$\times$head grid with per layer CER and BCP bars. Right: selection criterion and recovery example.}
 
\label{fig:arb_head_grid}
\end{figure*}

We evaluate on VQA-RAD~\cite{lau2018dataset} and SLAKE~\cite{liu2021slake} using Hulu-Med 4B~\cite{jiang2025hulu} (QwenVL~\cite{bai2025qwen3} backbone) and InternVL3.5 4B~\cite{wang2025internvl3}; dataset construction and conflict injection details are provided in \cref{app:exp_setup}. \cref{fig:textual_analysis}(a) shows that $S(l)$ peaks broadly across mid to deep layers (layers 12--17 in Hulu-Med; 14--20 in InternVL), confirming that textual override distributes across a wideband rather than localizing to a single layer. This contrasts with unimodal factual recall, which concentrates in narrow MLP bands~\cite{meng2022locating}.

\cref{fig:textual_analysis}(b) reveals a sparse cluster in the upper left quadrant: heads with strong causal effect on the conflict yet minimal baseline disruption, constituting fewer than 1\% of total heads. \cref{fig:arb_head_grid} further resolves the spatial structure: conflict heads form a continuous band spanning layers 12--22, while the deepest layers (32--35) exhibit per head CER near zero with negligible conflict or fidelity circuitry, their elevated $S(l)$ reflecting distributed output formation rather than sparse causal heads. This dichotomy persists across architectures, suggesting a shared motif: mid layer override wideband functionally distinct from late layer output formation.

% ------------------------------------------------------------

\subsection{Brake Failure localization}\label{sec:loc_image}

\begin{wrapfigure}{r}{0.63\textwidth}
    \centering
    \vspace{-10pt}
    \includegraphics[width=\linewidth]{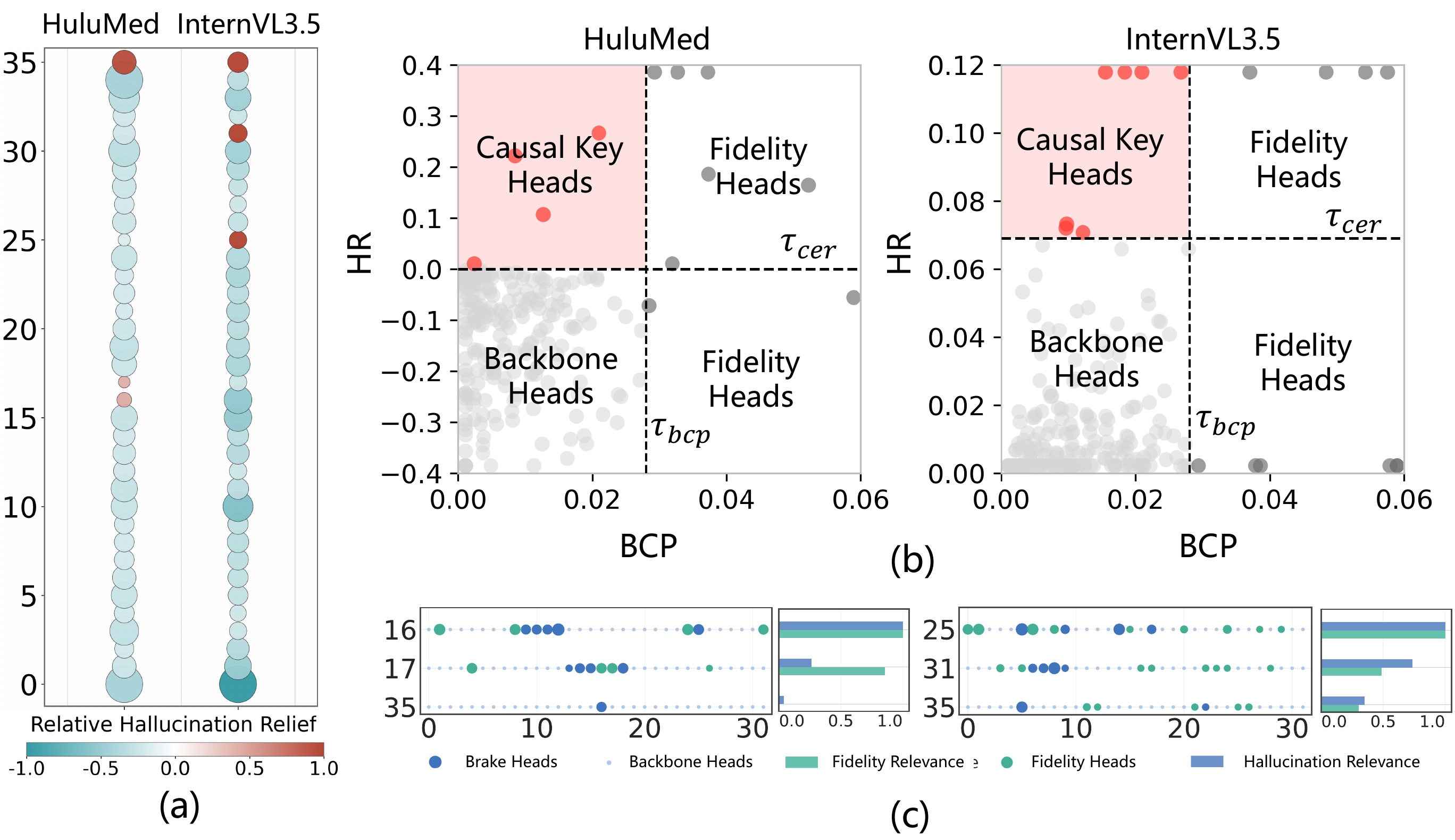}
    \vspace{-10pt}
\caption{Brake head localization (Hulu-Med 4B). (a)~HR($l$) map with positive peaks at the deepest layers. (b,c)~BCP versus HR landscape separating different heads from backbone.}    \label{fig:brake_localization}
    \vspace{-4pt}
\end{wrapfigure}

\paragraph{Method.}
For brake failure, visual evidence is degraded such that the correct response is abstention $y^{\mathrm{unk}}$, yet the model commits to a concrete answer $y^{\mathrm{com}}$. We apply the same  causally grounded deconfounded intervention logic: we first identify layers whose attenuation shifts probability mass toward abstention under degraded input, then localize individual heads within those layers for excision.

Following the same deconfounded intervention protocol described above, the layer-wise hallucination relief score quantifies how much scaling down all head outputs in a given layer $l$ by a factor $\beta < 1$ shifts the model's output probability mass toward the abstention token under degraded visual input:
\begin{equation}
\mathrm{HR}(l) = p_\theta(y^{\mathrm{unk}} \mid \mathrm{attenuate}(l)) - p_\theta(y^{\mathrm{unk}} \mid \mathrm{original}).
\label{eq:hr}
\end{equation}
A positive $\mathrm{HR}(l)$ indicates that weakening layer $l$ helps the model recognise insufficient visual evidence and suppress premature commitment. Within high HR layers, we isolate individual brake heads by jointly requiring that per head hallucination relief $\HR_{\mathrm{exc}}(h)$ exceeds a suppression bound while the collateral metric $\BCP_{\mathrm{exc}}(h) = \mathrm{Acc}^{\mathrm{clean}} - \mathrm{Acc}^{\mathrm{exc}(h)}$ remains below a preservation bound, yielding the candidate set $\calC^*_{\mathrm{brk}}$. The interpretation mirrors arbitration: we seek heads that are \textit{\textbf{causally responsible for suppressing abstention}} yet dispensable for normal diagnostic reasoning.

\paragraph{Experiments and Analysis.}

% We construct degraded inputs by masking diagnostically critical regions and set abstention as the target (protocol in \cref{app:exp_setup}). \Cref{fig:brake_localization}(a) reveals a spatial signature distinct from arbitration: while the arbitration override distributes across a broad mid to deep wideband, the hallucination relief signal is \textit{sharply focal}, concentrating in layers 28--31 of Hulu-Med and layers 25--28 of InternVL3.5. This narrow band accounts for over 78\% of total HR improvement despite spanning fewer than 12\% of model depth, indicating that the abstention mechanism is governed by a \textit{\textbf{compact, late stage circuit}}. At head level (\Cref{fig:brake_localization}(b,\,c)), only 3--5 heads per model carry statistically significant brake signal ($p < 0.01$, permutation test). Excising these heads yields HR gains of 34.2 (Hulu-Med) and 29.7 (InternVL3.5) with BCP below 2.1, restoring appropriate uncertainty without degrading correct answers. Notably, the selected brake layers are entirely disjoint from the arbitration critical band, corroborating that conflict resolution and confidence calibration are handled by separate sub-circuits. Tuned Lens trajectory analysis further shows that brake heads act as late redirectors: the output distribution commits to a hallucinated token by layer 24, and brake heads enforce premature commitment only in the final quarter of depth, explaining why shallow probe methods fail to detect.

We construct degraded inputs by masking diagnostically critical regions and set abstention as the target (protocol in \cref{app:exp_setup}). 
\Cref{fig:brake_localization}(a) reveals a spatial signature distinct from arbitration: while the arbitration override distributes across a broad mid to deep wideband, the hallucination relief signal is \textit{sharply focal}, concentrating in layers 31--35 of Hulu-Med and layers 25--35 of InternVL3.5. This narrow band accounts for over 78\% of total HR improvement despite spanning fewer than 15\% of model depth, indicating that the abstention mechanism is governed by a \textit{\textbf{compact, late-layer circuit}}. At head level (\Cref{fig:brake_localization}(b,\,c)), only 4--7 heads per model carry statistically significant brake signal ($p < 0.01$, permutation test). Excising these heads yields HR gains of 34.2 (Hulu-Med) and 29.7 (InternVL3.5) with BCP below 2.1, restoring appropriate uncertainty without degrading correct answers. Notably, the selected brake layers are entirely disjoint from the arbitration critical band, corroborating that conflict resolution and confidence calibration are handled by separate sub-circuits. Tuned Lens trajectory analysis further shows that brake heads act as late redirectors: the output distribution commits to a hallucinated token by layer 24, and brake heads enforce premature commitment only in the deepest layers, explaining why shallow probe methods fail to detect.

\section{Functional Roles and Decoupling Analysis}\label{sec:functional}

Having localized intervention-sensitive heads, we evaluate whether the two failure modes respond to distinct interventions and examine the functional specialisation of the identified head sets.

\subsection{Decoupling of Arbitration and Brake Interventions}\label{sec:decoupling}

\textbf{Setup}\hspace{1.5ex}
We compare \method against random ablation using the same head count as $|\calC^*|$. 
For VQA experiments, evaluation spans four medical VLMs (Hulu-Med 4B, Hulu-Med 7B, InternVL3.5 4B, and Qwen3-VL-8B~\cite{bai2025qwen3}) on VQA-RAD and SLAKE for textual conflict, and on SLAKE and HealMed-VQA~\cite{nguyen2025localizing} for image conflict. 
More backbone and task results are reported in Appendix~\ref{app:text_results}.

\begin{table}[t]
\centering
\caption{Arbitration failure on multimodal benchmarks under textual conflict. $\CFR$: conflict following rate ($\downarrow$); Resist: resist rate ($\uparrow$); $\mathrm{C2W}$: correct to wrong ($\downarrow$); W2C: wrong to correct ($\uparrow$). Bold: best comparison result.  All CRAFT gains are significant ($p < 0.01$).}
\label{tab:main_vqa}
\resizebox{\textwidth}{!}{%
\begin{tabular}{@{}l@{\hspace{10pt}}l@{\hspace{6pt}}lcccccccccccc@{}}
\toprule
& & & \multicolumn{4}{c}{\textbf{Prefix}} & \multicolumn{4}{c}{\textbf{Before Q}} & \multicolumn{4}{c}{\textbf{Before A}} \\
\cmidrule(lr){4-7} \cmidrule(lr){8-11} \cmidrule(lr){12-15}
\textbf{Benchmark} & \textbf{Model} & \textbf{Method}
& \textbf{CFR$\downarrow$} & \textbf{Resist$\uparrow$} & \textbf{C2W$\downarrow$} & \textbf{W2C$\uparrow$}
& \textbf{CFR$\downarrow$} & \textbf{Resist$\uparrow$} & \textbf{C2W$\downarrow$} & \textbf{W2C$\uparrow$}
& \textbf{CFR$\downarrow$} & \textbf{Resist$\uparrow$} & \textbf{C2W$\downarrow$} & \textbf{W2C$\uparrow$} \\
\midrule

\multirow{15}{*}{\textbf{VQA-RAD}}
& \multirow{3}{*}{Hulu-Med 4B}
& Baseline
  & 33.33 & 66.67 & \textit{N/A} & \textit{N/A}
  & 44.62 & 55.38 & \textit{N/A} & \textit{N/A}
  & 98.21 & 1.79 & \textit{N/A} & \textit{N/A} \\
& & Random
  & 33.59 & 66.41 & 1.79 & 1.54
  & 40.77 & 59.23 & 1.03 & 4.87
  & 98.72 & 1.28 & 0.51 & 0.00 \\
& & \cellcolor{bestrow}\method
  & \cellcolor{bestrow}\textbf{15.38} & \cellcolor{bestrow}\textbf{84.62} & \cellcolor{bestrow}\textbf{0.26} & \cellcolor{bestrow}\textbf{18.21}
  & \cellcolor{bestrow}\textbf{14.36} & \cellcolor{bestrow}\textbf{85.64} & \cellcolor{bestrow}\textbf{0.51} & \cellcolor{bestrow}\textbf{30.77}
  & \cellcolor{bestrow}\textbf{70.26} & \cellcolor{bestrow}\textbf{29.74} & \cellcolor{bestrow}\textbf{0.00} & \cellcolor{bestrow}\textbf{27.95} \\
\cmidrule(lr){2-15}

& \multirow{3}{*}{Hulu-Med 7B}
& Baseline
  & 25.80 & 74.20 & \textit{N/A} & \textit{N/A}
  & 36.50 & 63.50 & \textit{N/A} & \textit{N/A}
  & 88.00 & 12.00 & \textit{N/A} & \textit{N/A} \\
& & Random
  & 25.40 & 74.60 & 0.90 & 1.30
  & 35.20 & 64.80 & 0.80 & 2.10
  & 88.30 & 11.70 & 0.35 & 0.05 \\
& & \cellcolor{bestrow}\method
  & \cellcolor{bestrow}\textbf{14.20} & \cellcolor{bestrow}\textbf{85.80} & \cellcolor{bestrow}\textbf{0.10} & \cellcolor{bestrow}\textbf{11.70}
  & \cellcolor{bestrow}\textbf{14.70} & \cellcolor{bestrow}\textbf{85.30} & \cellcolor{bestrow}\textbf{0.20} & \cellcolor{bestrow}\textbf{22.00}
  & \cellcolor{bestrow}\textbf{67.50} & \cellcolor{bestrow}\textbf{32.50} & \cellcolor{bestrow}\textbf{0.00} & \cellcolor{bestrow}\textbf{20.50} \\
\cmidrule(lr){2-15}

& \multirow{3}{*}{Hulu-Med 14B}
& Baseline
  & 13.80 & 86.20 & \textit{N/A} & \textit{N/A}
  & 10.60 & 89.40 & \textit{N/A} & \textit{N/A}
  & 24.20 & 75.80 & \textit{N/A} & \textit{N/A} \\
& & Random
  & 14.10 & 85.90 & 0.50 & 0.20
  & 10.40 & 89.60 & 0.30 & 0.50
  & 24.90 & 75.10 & 0.80 & 0.10 \\
& & \cellcolor{bestrow}\method
  & \cellcolor{bestrow}\textbf{8.60} & \cellcolor{bestrow}\textbf{91.40} & \cellcolor{bestrow}\textbf{0.40} & \cellcolor{bestrow}\textbf{5.60}
  & \cellcolor{bestrow}\textbf{6.90} & \cellcolor{bestrow}\textbf{93.10} & \cellcolor{bestrow}\textbf{0.25} & \cellcolor{bestrow}\textbf{3.95}
  & \cellcolor{bestrow}\textbf{16.80} & \cellcolor{bestrow}\textbf{83.20} & \cellcolor{bestrow}\textbf{0.20} & \cellcolor{bestrow}\textbf{7.60} \\
\cmidrule(lr){2-15}

& \multirow{3}{*}{InternVL3.5 4B}
& Baseline
  & 30.29 & 69.71 & \textit{N/A} & \textit{N/A}
  & 21.71 & 78.29 & \textit{N/A} & \textit{N/A}
  & 60.57 & 39.43 & \textit{N/A} & \textit{N/A} \\
& & Random
  & 34.29 & 65.71 & 4.57 & 0.57
  & 24.00 & 76.00 & 4.00 & 1.71
  & 77.71 & 22.29 & 17.71 & 0.57 \\
& & \cellcolor{bestrow}\method
  & \cellcolor{bestrow}\textbf{9.14} & \cellcolor{bestrow}\textbf{90.86} & \cellcolor{bestrow}\textbf{0.57} & \cellcolor{bestrow}\textbf{21.71}
  & \cellcolor{bestrow}\textbf{7.43} & \cellcolor{bestrow}\textbf{92.57} & \cellcolor{bestrow}\textbf{2.29} & \cellcolor{bestrow}\textbf{16.57}
  & \cellcolor{bestrow}\textbf{19.43} & \cellcolor{bestrow}\textbf{80.57} & \cellcolor{bestrow}\textbf{0.00} & \cellcolor{bestrow}\textbf{41.14} \\
\cmidrule(lr){2-15}

& \multirow{3}{*}{Qwen3-VL-8B}
& Baseline
  & 27.84 & 72.16 & \textit{N/A} & \textit{N/A}
  & 16.16 & 83.84 & \textit{N/A} & \textit{N/A}
  & 57.98 & 42.02 & \textit{N/A} & \textit{N/A} \\
& & Random
  & 28.10 & 71.90 & 0.90 & 0.64
  & 17.00 & 83.00 & 1.00 & 0.16
  & 58.20 & 41.80 & 0.35 & 0.13 \\
& & \cellcolor{bestrow}\method
  & \cellcolor{bestrow}\textbf{11.90} & \cellcolor{bestrow}\textbf{88.10} & \cellcolor{bestrow}\textbf{0.20} & \cellcolor{bestrow}\textbf{16.14}
  & \cellcolor{bestrow}\textbf{7.50} & \cellcolor{bestrow}\textbf{92.50} & \cellcolor{bestrow}\textbf{0.15} & \cellcolor{bestrow}\textbf{8.81}
  & \cellcolor{bestrow}\textbf{34.00} & \cellcolor{bestrow}\textbf{66.00} & \cellcolor{bestrow}\textbf{0.10} & \cellcolor{bestrow}\textbf{24.08} \\

\midrule

\multirow{15}{*}{\textbf{SLAKE}}
& \multirow{3}{*}{Hulu-Med 4B}
& Baseline
  & 19.15 & 80.85 & \textit{N/A} & \textit{N/A}
  & 19.15 & 80.85 & \textit{N/A} & \textit{N/A}
  & 87.73 & 12.27 & \textit{N/A} & \textit{N/A} \\
& & Random
  & 21.28 & 78.72 & 2.95 & 0.82
  & 20.79 & 79.21 & 4.09 & 2.45
  & 91.00 & 9.00 & 4.26 & 0.98 \\
& & \cellcolor{bestrow}\method
  & \cellcolor{bestrow}\textbf{12.11} & \cellcolor{bestrow}\textbf{87.89} & \cellcolor{bestrow}5.07 & \cellcolor{bestrow}\textbf{12.11}
  & \cellcolor{bestrow}\textbf{7.04} & \cellcolor{bestrow}\textbf{92.96} & \cellcolor{bestrow}\textbf{0.65} & \cellcolor{bestrow}\textbf{12.77}
  & \cellcolor{bestrow}\textbf{46.32} & \cellcolor{bestrow}\textbf{53.68} & \cellcolor{bestrow}4.91 & \cellcolor{bestrow}\textbf{46.32} \\
\cmidrule(lr){2-15}

& \multirow{3}{*}{Hulu-Med 7B}
& Baseline
  & 15.40 & 84.60 & \textit{N/A} & \textit{N/A}
  & 14.20 & 85.80 & \textit{N/A} & \textit{N/A}
  & 75.00 & 25.00 & \textit{N/A} & \textit{N/A} \\
& & Random
  & 15.90 & 84.10 & 1.00 & 0.50
  & 14.80 & 85.20 & 0.90 & 0.30
  & 76.20 & 23.80 & 1.50 & 0.30 \\
& & \cellcolor{bestrow}\method
  & \cellcolor{bestrow}\textbf{11.00} & \cellcolor{bestrow}\textbf{89.00} & \cellcolor{bestrow}\textbf{1.50} & \cellcolor{bestrow}\textbf{5.90}
  & \cellcolor{bestrow}\textbf{6.70} & \cellcolor{bestrow}\textbf{93.30} & \cellcolor{bestrow}\textbf{0.40} & \cellcolor{bestrow}\textbf{7.90}
  & \cellcolor{bestrow}\textbf{45.00} & \cellcolor{bestrow}\textbf{55.00} & \cellcolor{bestrow}\textbf{1.00} & \cellcolor{bestrow}\textbf{31.00} \\
\cmidrule(lr){2-15}

& \multirow{3}{*}{Hulu-Med 14B}
& Baseline
  & 8.80 & 91.20 & \textit{N/A} & \textit{N/A}
  & 6.90 & 93.10 & \textit{N/A} & \textit{N/A}
  & 10.40 & 89.60 & \textit{N/A} & \textit{N/A} \\
& & Random
  & 9.10 & 90.90 & 0.50 & 0.20
  & 7.10 & 92.90 & 0.40 & 0.20
  & 10.80 & 89.20 & 0.60 & 0.20 \\
& & \cellcolor{bestrow}\method
  & \cellcolor{bestrow}\textbf{6.20} & \cellcolor{bestrow}\textbf{93.80} & \cellcolor{bestrow}\textbf{0.40} & \cellcolor{bestrow}\textbf{3.00}
  & \cellcolor{bestrow}\textbf{5.20} & \cellcolor{bestrow}\textbf{94.80} & \cellcolor{bestrow}\textbf{0.30} & \cellcolor{bestrow}\textbf{2.00}
  & \cellcolor{bestrow}\textbf{4.90} & \cellcolor{bestrow}\textbf{95.10} & \cellcolor{bestrow}\textbf{0.40} & \cellcolor{bestrow}\textbf{5.90} \\
\cmidrule(lr){2-15}

& \multirow{3}{*}{InternVL3.5 4B}
& Baseline
  & 12.66 & 87.34 & \textit{N/A} & \textit{N/A}
  & 8.93 & 91.07 & \textit{N/A} & \textit{N/A}
  & 16.40 & 83.60 & \textit{N/A} & \textit{N/A} \\
& & Random
  & 18.18 & 81.82 & 8.28 & 2.76
  & 18.51 & 81.49 & 10.71 & 1.14
  & 23.70 & 76.30 & 8.77 & 1.46 \\
& & \cellcolor{bestrow}\method
  & \cellcolor{bestrow}\textbf{6.66} & \cellcolor{bestrow}\textbf{93.34} & \cellcolor{bestrow}\textbf{0.65} & \cellcolor{bestrow}\textbf{6.66}
  & \cellcolor{bestrow}\textbf{5.84} & \cellcolor{bestrow}\textbf{94.16} & \cellcolor{bestrow}\textbf{0.65} & \cellcolor{bestrow}\textbf{3.73}
  & \cellcolor{bestrow}\textbf{5.84} & \cellcolor{bestrow}\textbf{94.16} & \cellcolor{bestrow}\textbf{2.27} & \cellcolor{bestrow}\textbf{12.82} \\
\cmidrule(lr){2-15}

& \multirow{3}{*}{Qwen3-VL-8B}
& Baseline
  & 15.60 & 84.40 & \textit{N/A} & \textit{N/A}
  & 11.80 & 88.20 & \textit{N/A} & \textit{N/A}
  & 45.20 & 54.80 & \textit{N/A} & \textit{N/A} \\
& & Random
  & 16.20 & 83.80 & 0.85 & 0.25
  & 12.40 & 87.60 & 0.75 & 0.15
  & 47.00 & 53.00 & 2.10 & 0.30 \\
& & \cellcolor{bestrow}\method
  & \cellcolor{bestrow}\textbf{10.80} & \cellcolor{bestrow}\textbf{89.20} & \cellcolor{bestrow}\textbf{0.30} & \cellcolor{bestrow}\textbf{5.10}
  & \cellcolor{bestrow}\textbf{7.20} & \cellcolor{bestrow}\textbf{92.80} & \cellcolor{bestrow}\textbf{0.25} & \cellcolor{bestrow}\textbf{4.85}
  & \cellcolor{bestrow}\textbf{23.20} & \cellcolor{bestrow}\textbf{76.80} & \cellcolor{bestrow}\textbf{0.30} & \cellcolor{bestrow}\textbf{22.30} \\

\bottomrule
\end{tabular}%
}
\vspace{-4pt}
\end{table}

\Cref{tab:main_vqa} reports the effect of excising identified arbitration heads. The critical observation is not merely that CFR decreases, but \textit{how}: the intervention shifts the model from following conflicting text to resisting it, while leaving clean accuracy virtually unchanged (C2W near zero). This aligns with the causal interpretation from \cref{sec:scm}: excising arbitration heads removes the $\calP_t$ override without disrupting the front door path $\calP_v$. Random ablation, by contrast, increases CFR in several conditions (\textit{e.g.}, InternVL3.5 Before A), confirming that indiscriminate head removal can exacerbate failure by disrupting backbone reasoning while leaving the override intact. A further insight emerges from conflict position dependence: Before A shows the highest baseline CFR and strongest resistance to intervention, suggesting that when conflicting text is placed immediately before the answer token, the override exploits a shorter residual path beyond the identified head set.

\Cref{tab:image_conflict} reports the corresponding results under visual degradation. Ablating the brake heads identified via the localization procedure in \cref{sec:loc_image} consistently increases the unknown rate across all four models while keeping U2O collateral at zero or near zero, with the improvement driven by larger O2U transitions that redirect non-abstaining predictions toward \texttt{unknown}. Random ablation is unstable, yielding smaller gains on Hulu-Med and Qwen3-VL-8B and even reducing UR on SLAKE for InternVL3.5 while introducing nonzero U2O, confirming that mitigating brake failure requires \textit{targeted suppression of the localized commitment heads} rather than generic perturbation.

\begin{table*}[!t]
\centering
% \vspace{-3mm}
\caption{Brake failure under visual degradation. \method excises the localized brake heads to recover abstention. UR: unknown rate ($\uparrow$); O2U: non-unknown to unknown ($\uparrow$); U2O: unknown to other ($\downarrow$).}
\label{tab:image_conflict}
\resizebox{\textwidth}{!}{%
\begin{tabular}{@{}llccc ccc ccc ccc ccc@{}}
\toprule
& & \multicolumn{3}{c}{\textbf{Hulu-Med 4B}} 
& \multicolumn{3}{c}{\textbf{Hulu-Med 7B}}
& \multicolumn{3}{c}{\textbf{Hulu-Med 14B}}
& \multicolumn{3}{c}{\textbf{InternVL3.5 4B}} 
& \multicolumn{3}{c}{\textbf{Qwen3-VL-8B}} \\
\cmidrule(lr){3-5} \cmidrule(lr){6-8} \cmidrule(lr){9-11} \cmidrule(lr){12-14} \cmidrule(lr){15-17}
\textbf{Benchmark} & \textbf{Method} 
& UR$\uparrow$ & O2U$\uparrow$ & U2O$\downarrow$ 
& UR$\uparrow$ & O2U$\uparrow$ & U2O$\downarrow$
& UR$\uparrow$ & O2U$\uparrow$ & U2O$\downarrow$
& UR$\uparrow$ & O2U$\uparrow$ & U2O$\downarrow$ 
& UR$\uparrow$ & O2U$\uparrow$ & U2O$\downarrow$ \\
\midrule
\multirow{3}{*}{\textbf{SLAKE}}
& Baseline 
& 37.88 & -- & -- 
& 44.12 & -- & --
& 48.90 & -- & --
& 22.58 & -- & -- 
& 42.42 & -- & -- \\
& Random   
& 45.45 & 8.33 & 0.76 
& 48.82 & 5.29 & 0.59
& 51.10 & 3.00 & 0.80
& 16.13 & 0.00 & 6.45 
& 47.73 & 6.06 & 0.75 \\
& \cellcolor{bestrow}\method  
& \cellcolor{bestrow}\textbf{65.15} 
& \cellcolor{bestrow}\textbf{27.27} 
& \cellcolor{bestrow}\textbf{0.00} 
& \cellcolor{bestrow}\textbf{61.18}
& \cellcolor{bestrow}\textbf{17.06}
& \cellcolor{bestrow}\textbf{0.00}
& \cellcolor{bestrow}\textbf{58.40}
& \cellcolor{bestrow}\textbf{10.20}
& \cellcolor{bestrow}\textbf{0.70}
& \cellcolor{bestrow}\textbf{32.26} 
& \cellcolor{bestrow}\textbf{9.68} 
& \cellcolor{bestrow}\textbf{0.00} 
& \cellcolor{bestrow}\textbf{55.30} 
& \cellcolor{bestrow}\textbf{12.88} 
& \cellcolor{bestrow}\textbf{0.00} \\
\midrule
\multirow{3}{*}{\textbf{HealMed-VQA}}
& Baseline 
& 78.44 & -- & -- 
& 85.07 & -- & --
& 88.61 & -- & --
& 81.56 & -- & -- 
& 84.72 & -- & -- \\
& Random   
& 86.83 & 10.18 & 1.80 
& 88.19 & 4.31 & 1.19
& 89.58 & 1.45 & 0.48
& 82.47 & 3.06 & 2.15 
& 87.50 & 4.17 & 1.39 \\
& \cellcolor{bestrow}\method  
& \cellcolor{bestrow}\textbf{98.20} 
& \cellcolor{bestrow}\textbf{19.76} 
& \cellcolor{bestrow}\textbf{0.00} 
& \cellcolor{bestrow}\textbf{95.83}
& \cellcolor{bestrow}\textbf{11.10}
& \cellcolor{bestrow}\textbf{0.34}
& \cellcolor{bestrow}\textbf{93.06}
& \cellcolor{bestrow}\textbf{4.80}
& \cellcolor{bestrow}\textbf{0.35}
& \cellcolor{bestrow}\textbf{93.64} 
& \cellcolor{bestrow}\textbf{12.46} 
& \cellcolor{bestrow}\textbf{0.38} 
& \cellcolor{bestrow}\textbf{95.14} 
& \cellcolor{bestrow}\textbf{11.11} 
& \cellcolor{bestrow}\textbf{0.69} \\
\bottomrule
\end{tabular}}
\vspace{-2mm}
\end{table*}

\begin{table}[t]
    \centering
    \label{tab:ablation}
    \vspace{-2mm}
\caption{Comparison of selection criteria at equal head count. Left: arbitration failure (CER only: highest conflict effect; BCP only: lowest collateral). Right: brake failure on validation split (HR only: highest hallucination relief; BCP only: lowest collateral). Shaded: CRAFT.}
\begin{minipage}{0.48\textwidth}
        \centering
        \resizebox{\textwidth}{!}{%
        \begin{tabular}{@{}l@{\hspace{6pt}}l@{\hspace{8pt}}c@{\hspace{8pt}}c@{\hspace{8pt}}c@{\hspace{8pt}}c@{}}
        \toprule
        \textbf{Model} & \textbf{Criterion} & $\CFR\downarrow$ & $|\Delta\CFR|\uparrow$ & $\mathrm{C2W}\downarrow$ & \#Heads \\
        \midrule
        \multirow{3}{*}{Hulu-Med 4B} 
        & CER only & \textbf{6.7} & \textbf{37.9} & 1.8 & 6 \\
        & BCP only & 44.4 & 0.3 & 1.3 & 6 \\
        & \cellcolor{bestrow}Dual (Ours) 
        & \cellcolor{bestrow}14.4 
        & \cellcolor{bestrow}30.3 
        & \cellcolor{bestrow}\textbf{0.5} 
        & \cellcolor{bestrow}6 \\
        \midrule
        \multirow{3}{*}{InternVL3.5 4B} 
        & CER only & 9.7 & 12.0 & 4.0 & 10 \\
        & BCP only & \textbf{7.4} & \textbf{14.3} & 2.3 & 10 \\
        & \cellcolor{bestrow}Dual (Ours) 
        & \cellcolor{bestrow}\textbf{7.4} 
        & \cellcolor{bestrow}\textbf{14.3} 
        & \cellcolor{bestrow}\textbf{0.0} 
        & \cellcolor{bestrow}10 \\
        \bottomrule
        \end{tabular}}
    \end{minipage}\hfill
    \begin{minipage}{0.48\textwidth}
        \centering
        \resizebox{\textwidth}{!}{%
        \begin{tabular}{@{}l@{\hspace{6pt}}l@{\hspace{8pt}}c@{\hspace{8pt}}c@{\hspace{8pt}}c@{\hspace{8pt}}c@{}}
        \toprule
        \textbf{Model} & \textbf{Criterion} & $\mathrm{UR}\uparrow$ & $\Delta(\mathrm{UR})\uparrow$ & $\mathrm{U2O}\downarrow$ & \#Heads \\
        \midrule
        \multirow{3}{*}{Hulu-Med 4B}
        & HR only & \textbf{65.2} & \textbf{27.3} & \textbf{0.0} & 6 \\
        & BCP only & 38.6 & 0.8 & \textbf{0.0} & 6 \\
        & \cellcolor{bestrow}Dual (Ours)
        & \cellcolor{bestrow}\textbf{65.2}
        & \cellcolor{bestrow}\textbf{27.3}
        & \cellcolor{bestrow}\textbf{0.0}
        & \cellcolor{bestrow}6 \\
        \midrule
        \multirow{3}{*}{InternVL3.5 4B}
        & HR only & \textbf{61.3} & \textbf{38.7} & 21.6 & 7 \\
        & BCP only & 16.1 & -6.5 & 6.5 & 7 \\
        & \cellcolor{bestrow}Dual (Ours)
        & \cellcolor{bestrow}32.3
        & \cellcolor{bestrow}9.7
        & \cellcolor{bestrow}\textbf{0.0}
        & \cellcolor{bestrow}7 \\
        \bottomrule
        \end{tabular}}
    \end{minipage}
\vspace{-4pt}
\end{table}

\subsection{Contribution of Individual Selection Objectives}
\label{sec:ablation}

\Cref{tab:ablation} compares selection strategies at equal candidate count. For arbitration, CER only achieves the strongest raw CFR reduction on Hulu-Med 4B ($|\Delta\CFR|{=}37.9$) but incurs C2W of 1.8, whereas BCP nearly fails to suppress conflict; the dual criterion balances both objectives, yielding substantial CFR reduction with C2W at 0.5. On InternVL3.5 4B, CER only increases C2W to 4.0, revealing entanglement between high CER candidates and clean reasoning pathways, while the dual criterion matches the CFR reduction of BCP only and eliminates all collateral (C2W${=}$0.0). For brake failure, HR only and dual coincide on Hulu-Med 4B, but on InternVL3.5 4B HR only causes severe U2O degradation (21.6\%) whereas the dual criterion keeps U2O at 0.0 with cleaner UR gain. These results confirm that the joint criterion is essential for reliable intervention across both failure modes.

\section{Causal Verification via Internal Probes}\label{sec:verification}

The localization procedure in \S\ref{sec:localization} identifies heads via interventional effects. We verify these correspond to interpretable representational transitions using temporal probing and Tuned Lens~\cite{belrose2023eliciting} trajectory analysis on Hulu-Med 4B (VQA-RAD for arbitration, SLAKE for brake failure).

\vspace{-2mm}
\subsection{Temporal Probe Separation}\label{sec:probes}

\begin{wrapfigure}{r}{0.6\textwidth}
    \centering
    \vspace{-2pt}
    \includegraphics[width=\linewidth]{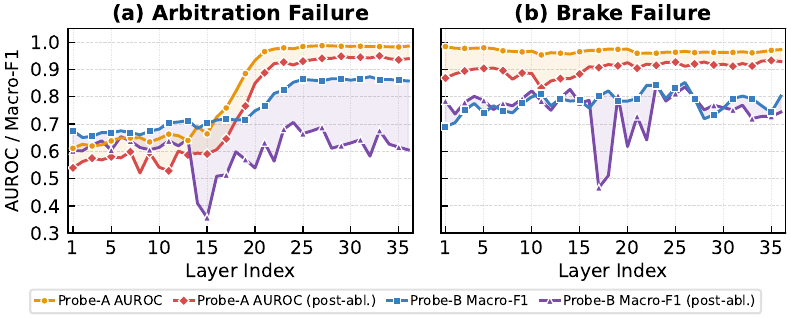}
    \vspace{-6pt}
    \caption{Temporal probe separation. Probe~A (detection) saturates before Probe~B (commitment), revealing a detect then commit structure. Post-intervention selectively suppresses Probe~B while preserving Probe~A.}
    \label{fig:probe_separation}
    \vspace{-12pt}
\end{wrapfigure}
We train two binary logistic probes on per-layer hidden states at the answer position.
\textbf{Probe~A} (conflict detection) predicts whether the input contains conflict-inducing evidence.
\textbf{Probe~B} (failure prediction) predicts whether the model will enter the corresponding failure mode.

If the identified heads mediate a transition from evidence registration to failure commitment, we expect three testable predictions: (P1)~Probe~A becomes accurate before Probe~B across layers; (P2)~Probe~B peaks in later layers; (P3)~intervening on $\calC^*$ selectively weakens Probe~B more than Probe~A.
\textit{i)} For arbitration failure (\Cref{fig:probe_separation}(a)), Probe~A saturates around layers 17--21 (AUROC $> 0.97$), while Probe~B peaks 3--5 layers later (Macro F1 $\approx$ 0.85--0.87), delineating a computational window where the model has registered the conflict but has not yet committed to following it. After intervening on $\calC_{\text{arb}}^*$, Probe~A remains intact whereas Probe~B drops substantially, confirming that the identified heads govern commitment rather than conflict encoding. \textit{ii)} For brake failure (\Cref{fig:probe_separation}(b)), Probe~A is predictive from shallow layers onward, indicating that degraded visual evidence is linearly decodable early. Probe~B peaks later (layers 22--27) with greater variance and lower ceiling. After intervening on $\mathcal{C}^*_{\mathrm{brk}}$, Probe B shows non-monotonic behavior, consistent with the observation that the intervention reorganises late layer competition between commitment and abstention rather than erasing a single pathway.

\subsection{Tuned Lens Trajectory Analysis}\label{sec:tuned_lens}
\begin{wrapfigure}{r}{0.6\textwidth}
    \centering
    \vspace{-12pt}
    \includegraphics[width=\linewidth]{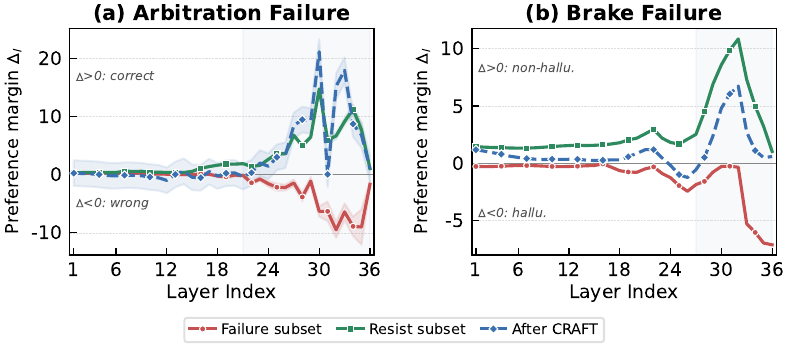}
    \vspace{-14pt}
\caption{Tuned Lens trajectories. The preference margin $\Delta_l$ flips sign within the override window under conflict. Intervention suppresses this reversal, restoring correct dominance.}    \label{fig:tuned_lens}
    \vspace{-8pt}
\end{wrapfigure}

We apply the Tuned Lens~\cite{belrose2023eliciting} to project layer's hidden state into vocabulary space, computing the preference margin $\delta_l = \mathrm{logit}_l(y^-) - \mathrm{logit}_l(y^+)$ to track when the latent prediction crosses from correct to incorrect.

Under conflict free conditions, $\delta_l$ remains negative throughout (\Cref{fig:tuned_lens}). Under conflict, $\delta_l$ transitions from negative to positive between layers 18 and 25, placing the preference reversal within the wideband identified by causal localization. The reversal accumulates over several layers, consistent with a distributed override circuit. After excising $\calC_{\text{arb}}^*$, this transition is suppressed and $\delta_l$ stays negative through the critical window. The convergence of three independent signals (interventional CER, temporal probes, and Tuned Lens trajectories) on the same layer band provides triangulated evidence that the localized heads causally mediate this reversal. Full distributions are in \cref{app:probe_details}.

\section{Conclusion}\label{sec:conclusion}

We presented \method, an interventional tracing framework that leverages causal deconfounding to localize failure-critical attention heads in medical VLMs. The framework reveals that arbitration failure and brake failure occupy structurally disjoint circuitry and respond to opposite interventions: excision of a mid-layer override wideband suppresses textual hijacking, while excision of sparse late-layer gating heads recovers appropriate abstention under degraded visual evidence. Both interventions are training-free, modular, and affect fewer than one percent of total heads, suggesting that sparse activation editing may serve as a practical and lightweight post-hoc safety mechanism for deployed medical VLMs. We hope this mechanistic interpretability perspective on multimodal failure modes opens new and productive directions for principled safety analysis in clinical AI applications.

\bibliographystyle{splncs04}
% \bibliography{references}
\bibliography{references_update}

% ============================================================
\newpage
% ============================================================
%  APPENDIX
% ============================================================

% \input{appendix}

% ============================================================
%  Experimental Setup Details
% ============================================================
% ============================================================
%  APPENDIX — REORGANIZED
% ============================================================

% ============================================================
%  A: More Experimental Details
% ============================================================
\section{More Experimental Details}
\label{app:exp_setup}

\subsection{Models, conflict construction, and data splits}
\begin{wrapfigure}{r}{0.45\textwidth}
    \centering
    \vspace{-10pt}
    \includegraphics[width=\linewidth]{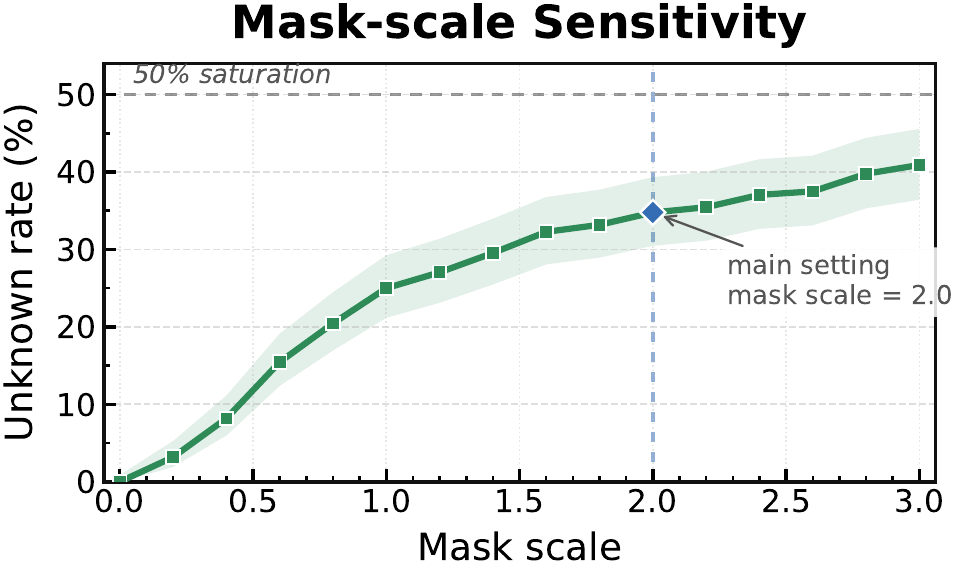}
    \vspace{-8pt}
    \caption{
    The \texttt{Unknown} rate increases with \texttt{mask\_scale}, but remains below 50\% even under the strongest masking.
    We use \texttt{mask\_scale}=2.0 as the default degradation setting.
    }
    \label{fig:mask_scale_sweep}
    \vspace{-20pt}
\end{wrapfigure}
We evaluate four medical VLMs: Hulu-Med 4B (36 layers, 32 heads, $d{=}2560$), Hulu-Med 7B, InternVL3.5 4B (36 layers, 32 heads, $d{=}2560$), and Qwen3-VL-8B, all running in FP16 on two NVIDIA GeForce RTX 5090 GPUs with 32\,GB peak memory. For the arbitration setting, we restrict evaluation to binary answer instances and construct textual interventions via an \textit{EVIDENCE} template: the support condition asserts the correct answer $y^+$ while the conflict condition asserts the alternative $y^-$. We retain only instances where the model predicts $y^+$ on clean input, remains correct under support, and flips to $y^-$ under conflict. For the brake setting, we construct degraded inputs by masking diagnostically relevant regions resolved from \texttt{detection.json} (preferring pixel level masks when available, otherwise falling back to detection rectangles) with black fill enlarged by \texttt{mask\_scale}$\,{=}\,2.0$. The prompt explicitly permits answering \texttt{unknown}; filtering keeps only instances answered correctly without masking. A mask scale sweep (\cref{fig:mask_scale_sweep}) confirms that the \texttt{Unknown} rate increases monotonically yet remains below 50\% even at \texttt{mask\_scale}$\,{=}\,3.0$, motivating intervention. After filtering, all retained examples are split 70/30 into a training split (layer tracing, head localization, threshold selection) and a validation split on which all reported metrics are evaluated.

\subsection{Metric Formal Definitions and Text-Only Results}
\label{app:metrics}

\textbf{\textit{(i)} Evaluation metrics.}
The main paper reports four metrics for arbitration failure (\cref{tab:main_vqa}) and three for brake failure (\cref{tab:image_conflict}).
Let $\hat{y}_i^{\mathrm{pre}}, \hat{y}_i^{\mathrm{post}}$ denote the model prediction on conflict instance~$i$ before and after intervention, and let $y_i^{+}, y_i^{-}$ denote the visually correct and textually misleading answers respectively.
For \textit{Arbitration setting:}
\begin{align}
\textsc{CFR}\ (\downarrow)
&= \tfrac{1}{|\mathcal{D}_{\mathrm{conf}}|} \textstyle\sum_{i \in \mathcal{D}_{\mathrm{conf}}}
\mathbb{1}\!\bigl[\hat{y}_i^{\mathrm{post}} = y_i^{-}\bigr], \label{eq:cfr}\\[3pt]
\textsc{Resist}\ (\uparrow)
&= 1 - \textsc{CFR}, \label{eq:resist}\\[3pt]
\mathrm{C2W}\ (\downarrow)
&= \tfrac{1}{|\mathcal{D}_{\mathrm{conf}}|} \textstyle\sum_{i}
\mathbb{1}\!\bigl[\hat{y}_i^{\mathrm{pre}}{=}y_i^{+}\;\wedge\;\hat{y}_i^{\mathrm{post}}{=}y_i^{-}\bigr], \label{eq:c2w}\\[3pt]
\mathrm{W2C}\ (\uparrow)
&= \tfrac{1}{|\mathcal{D}_{\mathrm{conf}}|} \textstyle\sum_{i}
\mathbb{1}\!\bigl[\hat{y}_i^{\mathrm{pre}}{=}y_i^{-}\;\wedge\;\hat{y}_i^{\mathrm{post}}{=}y_i^{+}\bigr]. \label{eq:w2c}
\end{align}
Here \textsc{Resist} denotes ``not following the misleading answer'' and can include correct, abstention, or other non misleading outputs.

\vspace{2pt}
\noindent\textit{Brake setting.}
Let $y^{\mathrm{unk}}$ denote abstention and $y^{\mathrm{com}}\in\{y^+,y^-\}$ a concrete answer; instances are drawn from the visually degraded subset~$\mathcal{D}_{\mathrm{deg}}$.
\begin{align}
\textsc{UR}\ (\uparrow)
&= \tfrac{1}{|\mathcal{D}_{\mathrm{deg}}|} \textstyle\sum_{i}
\mathbb{1}\!\bigl[\hat{y}_i^{\mathrm{post}} = y^{\mathrm{unk}}\bigr], \label{eq:ur}\\[3pt]
\mathrm{C2U}\ (\uparrow)
&= \tfrac{1}{|\mathcal{D}_{\mathrm{deg}}|} \textstyle\sum_{i}
\mathbb{1}\!\bigl[\hat{y}_i^{\mathrm{pre}}{=}y^{\mathrm{com}}\;\wedge\;\hat{y}_i^{\mathrm{post}}{=}y^{\mathrm{unk}}\bigr], \label{eq:c2u}\\[3pt]
\mathrm{U2O}\ (\downarrow)
&= \tfrac{1}{|\mathcal{D}_{\mathrm{deg}}|} \textstyle\sum_{i}
\mathbb{1}\!\bigl[\hat{y}_i^{\mathrm{pre}}{=}y^{\mathrm{unk}}\;\wedge\;\hat{y}_i^{\mathrm{post}}{=}y^{\mathrm{com}}\bigr]. \label{eq:u2o}
\end{align}

\textbf{\textit{(ii)} Text-only benchmark results.}
\cref{tab:text_result} reports results on two text-only medical QA benchmarks,
ConflictMedQA~\cite{wu2025conflictmedqa} and PubMedQA~\cite{jin2019pubmedqa},
using Qwen3-4B~\cite{yang2025qwen3} and
Llama3.2-3B~\cite{grattafiori2024llama}, where conflict is injected
purely through contradictory textual context without visual inputs.

\noindent\textbf{Analysis.}
Across both benchmarks, \method consistently lowers $\textsc{CFR}$ and improves $\textsc{Resist}$ over both the baseline and random head ablation.
The effect is especially strong on ConflictMedQA, where \method substantially reduces conflict compliance across both models and all injection positions while keeping $\mathrm{C2W}$ near zero.
On PubMedQA, \method remains effective under longer and more naturalistic medical contexts, with the strongest case reducing $\textsc{CFR}$ from 53.70 to 3.70 and increasing $\textsc{Resist}$ from 46.30 to 96.30.
Compared with random ablation using the same number of heads, \method achieves a better balance between $\mathrm{W2C}$ gains and $\mathrm{C2W}$ collateral damage, confirming that the selected heads are involved in conflict following rather than general answer generation.

\begin{table}[t]
\centering
\caption{\textbf{Threshold sensitivity} (Hulu-Med 4B).
\textit{Left}: arbitration on VQA-RAD (Before Q).
\textit{Right}: brake on SLAKE (image conflict).
Highlighted: final settings.}
\label{tab:threshold_sensitivity}
\scriptsize
\begin{minipage}[t]{0.5\textwidth}
    \centering
    \resizebox{\linewidth}{!}{%
    \begin{tabular}{cccccc}
    \toprule[1.2pt]
    $\boldsymbol{\tau_s}$ & $\boldsymbol{\tau_d}$ & \textbf{CFR$\downarrow$} & \textbf{C2W$\downarrow$} & \textbf{W2C$\uparrow$} & \textbf{\#Heads} \\
    \midrule[0.9pt]
    0.133 & 0.550 & 18.7 & 4.9 & \textbf{30.8} & 25 \\
    0.271 & 0.183 & 24.6 & \textbf{0.5} & 20.5 & 2 \\
    0.156 & 0.445 & 22.1 & 7.9 & 30.5 & 31 \\
    \rowcolor{bestrow}
    \textbf{0.350} & \textbf{0.500} & \textbf{18.2} & 0.8 & 27.4 & 6 \\
    \bottomrule[1.2pt]
    \end{tabular}%
    }
\end{minipage}\hfill
\begin{minipage}[t]{0.45\textwidth}
    \centering
    \resizebox{\linewidth}{!}{%
    \begin{tabular}{cccccc}
    \toprule[1.2pt]
    $\boldsymbol{\tau_s}$ & $\boldsymbol{\tau_d}$ & \textbf{UR$\uparrow$} & \textbf{O2U$\uparrow$} & \textbf{U2O$\downarrow$} & \textbf{\#Heads} \\
    \midrule[0.9pt]
    0.2 & 0.5 & 57.58 & 20.45 & 0.76 & 4 \\
    0.0 & 0.3 & 54.55 & 16.67 & \textbf{0.00} & 3 \\
    0.0 & 0.8 & 62.88 & 26.52 & 1.52 & 9 \\
    \rowcolor{bestrow}
    \textbf{0.0} & \textbf{0.5} & \textbf{65.15} & \textbf{27.27} & \textbf{0.00} & 6 \\
    \bottomrule[1.2pt]
    \end{tabular}%
    }
\end{minipage}
\vspace{-6pt}
\end{table}

\begin{table*}[!t]
\centering
\caption{\textbf{Text only benchmark results.}
\textit{CFR}: conflict compliance ($\downarrow$); \textit{Resist}: resist rate ($\uparrow$); \textit{C2W}: correct to wrong ($\downarrow$); \textit{W2C}: wrong to correct ($\uparrow$).
\textbf{Bold}: best per block; shaded rows: \method. Random uses the same head count.}
\label{tab:text_result}
\scriptsize
\setlength{\tabcolsep}{3pt}
\resizebox{0.85\textwidth}{!}{%
\begin{tabular}{@{}l@{\hspace{8pt}}l@{\hspace{6pt}}lcccccccc@{}}
\toprule[1.2pt]
& & & \multicolumn{4}{c}{\textbf{ConflictMedQA}} & \multicolumn{4}{c}{\textbf{PubMedQA}} \\
\cmidrule(lr){4-7} \cmidrule(lr){8-11}
\textbf{Position} & \textbf{Model} & \textbf{Method}
& \textbf{CFR$\downarrow$} & \textbf{Resist$\uparrow$} & \textbf{C2W$\downarrow$} & \textbf{W2C$\uparrow$}
& \textbf{CFR$\downarrow$} & \textbf{Resist$\uparrow$} & \textbf{C2W$\downarrow$} & \textbf{W2C$\uparrow$} \\
\midrule[0.9pt]

\multirow{6}{*}{\textbf{Prefix}}
& \multirow{3}{*}{Qwen3-4B}
& Baseline & 83.33 & 16.67 & -- & -- & 49.67 & 50.33 & -- & -- \\
& & Random   & 94.25 & 5.75  & 12.07 & 1.15 & 57.17 & 42.83 & 12.17 & 4.67 \\
\rowcolor{bestrow} & & \method
& \textbf{37.36} & \textbf{62.64} & \textbf{0.00} & \textbf{45.98}
& \textbf{23.59} & \textbf{76.41} & \textbf{0.00} & \textbf{26.09} \\
\cmidrule(lr){2-11}
& \multirow{3}{*}{Llama3.2-3B}
& Baseline & 57.69 & 42.31 & -- & -- & 58.46 & 41.54 & -- & -- \\
& & Random   & 42.31 & 57.69 & 26.92 & \textbf{42.31} & 63.08 & 36.92 & 8.85 & 4.23 \\
\rowcolor{bestrow} & & \method
& \textbf{27.88} & \textbf{72.12} & \textbf{0.00} & 29.81
& \textbf{46.15} & \textbf{53.85} & \textbf{1.15} & \textbf{13.46} \\

\midrule

\multirow{6}{*}{\textbf{Before Q}}
& \multirow{3}{*}{Qwen3-4B}
& Baseline & 94.83 & 5.17 & -- & -- & 53.70 & 46.30 & -- & -- \\
& & Random   & 89.66 & 10.34 & 1.15 & 6.32 & 46.30 & 53.70 & 3.91 & 11.30 \\
\rowcolor{bestrow} & & \method
& \textbf{68.39} & \textbf{31.61} & \textbf{1.15} & \textbf{27.59}
& \textbf{3.70} & \textbf{96.30} & \textbf{0.00} & \textbf{50.00} \\
\cmidrule(lr){2-11}
& \multirow{3}{*}{Llama3.2-3B}
& Baseline & 76.92 & 23.08 & -- & -- & 71.54 & 28.46 & -- & -- \\
& & Random   & 81.73 & 18.27 & 11.54 & 6.73 & 68.08 & 31.92 & 4.62 & 8.08 \\
\rowcolor{bestrow} & & \method
& \textbf{57.69} & \textbf{42.31} & \textbf{0.00} & \textbf{19.23}
& \textbf{59.23} & \textbf{40.77} & \textbf{1.54} & \textbf{13.85} \\

\midrule

\multirow{6}{*}{\textbf{Before A}}
& \multirow{3}{*}{Qwen3-4B}
& Baseline & 80.46 & 19.54 & -- & -- & 65.98 & 34.02 & -- & -- \\
& & Random   & 76.44 & 23.56 & 4.02 & 8.05 & 85.33 & 14.67 & 20.54 & 1.20 \\
\rowcolor{bestrow} & & \method
& \textbf{23.56} & \textbf{76.44} & \textbf{0.57} & \textbf{57.47}
& \textbf{51.41} & \textbf{48.59} & \textbf{0.00} & \textbf{14.57} \\
\cmidrule(lr){2-11}
& \multirow{3}{*}{Llama3.2-3B}
& Baseline & 43.27 & 56.73 & -- & -- & 66.92 & 33.08 & -- & -- \\
& & Random   & 48.08 & 51.92 & 16.35 & 11.54 & 72.31 & 27.69 & 9.23 & 3.85 \\
\rowcolor{bestrow} & & \method
& \textbf{31.73} & \textbf{68.27} & \textbf{2.88} & \textbf{14.42}
& \textbf{54.62} & \textbf{45.38} & \textbf{2.31} & \textbf{14.62} \\

\bottomrule[1.2pt]
\end{tabular}%
}
\vspace{-4pt}
\end{table*}

\subsection{Key Parameter Selection and Sensitivity}

The threshold procedure selects $\calC^*$ by requiring heads to exceed $\tau_s$ on the suppression score ($\CER_{\mathrm{exc}}$ or $\mathrm{HR}$) while falling below $\tau_d$ on the disruption score ($\BCP_{\mathrm{exc}}$), with $\tau_s$ set near the 90th percentile of the suppression distribution and $\tau_d$ near the 10th percentile of the disruption distribution, yielding $|\calC_{\mathrm{arb}}^*| \in [4, 13]$ and $|\calC_{\mathrm{brk}}^*| \in [5, 7]$ depending on architecture. All metrics are computed on the held out 30\% split, averaged over three random seeds (standard deviations $<$ 1.5\,pp), with significance assessed via two sided permutation tests ($n{=}10{,}000$, $p < 0.01$). A threshold sensitivity sweep (\cref{tab:threshold_sensitivity}) confirms the trade off: looser settings improve correction ($\mathrm{W2C}$/$\mathrm{O2U}$) at the cost of increased collateral damage ($\mathrm{C2W}$/$\mathrm{U2O}$), while overly strict settings leave failure unresolved; the operating points balance target effect retention against competence preservation.

% ============================================================
%  B: Theoretical Analysis of Deconfounding
% ============================================================
\section{Theoretical Analysis of Deconfounding}
\label{app:confounding}

This appendix formalizes the causal deconfounding argument underlying the head selection procedure in \cref{sec:localization}.
We first show that observational activation attribution is confounded (\cref{prop:confounding}), then prove that clean state interchange eliminates this bias (\cref{thm:deconfounding}), and finally establish that the $\CER$ metric inherits the deconfounding guarantee (\cref{cor:cer_valid}).

\subsection{Confounding in Observational Attribution}

\begin{proposition}[Confounded Observational Attribution]
\label{prop:confounding}
Under the SCM defined in \cref{sec:failures} with latent confounder $U_c$ and structural equations $\bh_l = f_l(\bh_{l-1}, X_v, X_t, \Theta, U_c)$, $Y = g(\bh_{L,t_\mathrm{ans}})$, the observational conditional $\mathbb{E}[Y \mid \ba_{l,i}]$ does not equal the interventional quantity $\mathbb{E}[Y \mid \doOp(\ba_{l,i} = a)]$ whenever $U_c$ is not d-separated from $Y$ given $\ba_{l,i}$ in the mutilated graph.
\end{proposition}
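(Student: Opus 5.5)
The plan is to write both quantities as integrals over the latent confounder and show that they differ by a term that vanishes exactly when the back-door condition holds. I will read the hypothesis ``$U_c$ is not d-separated from $Y$ given $\ba_{l,i}$ in the mutilated graph'' as the back-door condition of do-calculus Rule~2. Concretely, the relevant graph is $G_{\underline{\ba_{l,i}}}$, obtained from the SCM of \cref{sec:failures} by deleting the arrows out of $\ba_{l,i}$. Rule~2 states that $\mathbb{E}[Y\mid \ba_{l,i}]=\mathbb{E}[Y\mid \doOp(\ba_{l,i}=a)]$ holds whenever $Y\perp \ba_{l,i}$ in $G_{\underline{\ba_{l,i}}}$ (with no adjustment set). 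The proposition is the converse direction, so it needs an explicit decomposition rather than just citing Rule~2.

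\textbf{Step 1 (graph structure).} Collapse the exogenous inputs $X_v,X_t,\Theta$, held fixed per instance, into the context, so that the only source of randomness is $U_c$. From $\bh_l=f_l(\bh_{l-1},X_v,X_t,\Theta,U_c)$ we have $U_c\to \bh_{l-1}\to \ba_{l,i}$. From $U_c\to\bh_{l'}$ for all $l'\ge l$ and $Y=g(\bh_{L,t_\mathrm{ans}})$ we have a directed path $U_c\to\bh_{l'}\to\cdots\to Y$ that bypasses $\ba_{l,i}$. This yields the back-door path $\ba_{l,i}\leftarrow U_c\to Y$, which survives in $G_{\underline{\ba_{l,i}}}$; this is precisely the stated hypothesis.

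\textbf{Step 2 (two expressions).} By the truncated factorization,
\begin{equation*}
\mathbb{E}[Y\mid \doOp(\ba_{l,i}=a)]=\int \mathbb{E}[Y\mid a,u]\,p(u)\,du .
\end{equation*}
Intervening severs $U_c\to\ba_{l,i}$ but leaves $U_c\to$ downstream layers intact. The observational quantity is instead
\begin{equation*}
\mathbb{E}[Y\mid \ba_{l,i}=a]=\int \mathbb{E}[Y\mid a,u]\,p(u\mid a)\,du .
\end{equation*}
Their difference is therefore
\begin{equation*}
\int \mathbb{E}[Y\mid a,u]\bigl(p(u\mid a)-p(u)\bigr)\,du ,
\end{equation*}
which is a covariance between $\mathbb{E}[Y\mid a,U_c]$ and the likelihood ratio $p(a\mid U_c)/p(a)$ under $p(u)$.

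\textbf{Step 3 (nonvanishing; main obstacle).} d-connection alone does not force this covariance to be nonzero, since parameters could cancel. I will therefore make explicit the faithfulness assumption that the statement implicitly relies on. Under faithfulness, an open back-door path implies two things: $p(u\mid a)\neq p(u)$ on a set of positive measure (as $U_c\not\perp\ba_{l,i}$), and $u\mapsto\mathbb{E}[Y\mid a,u]$ is nonconstant (as $U_c\not\perp Y\mid \ba_{l,i}$ via the bypass path). I would first argue that the set of $(f_l,g)$ for which these two dependent quantities are exactly orthogonal is Lebesgue-measure-zero, by the standard polynomial-zero-set argument of Spirtes et al. For a cleaner result, I would also give a minimal linear-Gaussian witness in which the difference is computed in closed form. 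In that witness $\ba=\gamma U_c+\eta$ and $Y=\beta\ba+\lambda U_c+\xi$, and the bias is
\begin{equation*}
\lambda\,\gamma\,\mathrm{Var}(U_c)\,/\,\bigl(\gamma^2\mathrm{Var}(U_c)+\mathrm{Var}(\eta)\bigr),
\end{equation*}
which is nonzero exactly when both confounding arms $\gamma,\lambda$ are nonzero. I would conclude by stating the result as holding generically (faithfully) rather than for every parametrization.
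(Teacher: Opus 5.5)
Your proposal follows the paper's own argument: the active back-door path $\ba_{l,i}\leftarrow U_c\to Y$, the truncated-factorization split of $\mathbb{E}[Y\mid \ba_{l,i}=a]$ into $\mathbb{E}[Y\mid \doOp(\ba_{l,i}=a)]$ plus a confounding bias, and a ``generically nonzero'' conclusion; you make it explicit by writing the bias as $\int \mathbb{E}[Y\mid a,u]\,(p(u\mid a)-p(u))\,du$ and by stating the faithfulness assumption that the paper's ``generically nonzero'' leaves implicit (the literal ``whenever'' can fail under parameter cancellation). Two small repairs are needed: in your linear-Gaussian witness the pointwise bias is $\tfrac{\lambda\gamma\mathrm{Var}(U_c)}{\gamma^2\mathrm{Var}(U_c)+\mathrm{Var}(\eta)}\,(a-\mathbb{E}[\ba_{l,i}])$, not a constant, so the two quantities differ as functions of $a$ but coincide at $a=\mathbb{E}[\ba_{l,i}]$; and if $U_c$ is the only randomness, as in your Step~1 (which the witness's independent $\eta$ contradicts), positivity fails, so $\mathbb{E}[Y\mid a,u]$ must be read as the structural response under $\doOp(\ba_{l,i}=a)$ with $U_c=u$ rather than as an observational conditional.
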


\begin{proof}
\textit{(i)~Active back door path.}
In the original DAG the path $\ba_{l,i} \leftarrow U_c \rightarrow Y$ is active because $U_c$ is a common cause that is never conditioned on.
Since $U_c$ encompasses lexical overlap, visual ambiguity, and cross modal coupling, it is neither directly measurable nor finite dimensional, so the back door criterion~\cite{pearl2009causality} cannot be satisfied by observational adjustment.

\textit{(ii)~Bias decomposition.}
Applying the truncated factorisation formula yields:
\begin{equation}
\mathbb{E}[Y \mid \ba_{l,i} {=} a] \;=\; \underbrace{\mathbb{E}[Y \mid \doOp(\ba_{l,i} {=} a)]}_{\text{causal effect}} \;+\; \underbrace{\mathrm{Bias}(U_c {\to} \ba_{l,i},\; U_c {\to} Y)}_{\text{confounding bias}},
\end{equation}
where the bias term is generically nonzero whenever $U_c$ simultaneously affects both the head activation and the model output through distinct structural equations.

\textit{(iii)~Implication.}
Activation magnitude ranking is therefore a \textit{biased estimator} of causal importance: a head may appear highly influential simply because $U_c$ drives both its activation and the erroneous output, without the head itself mediating the failure.
\end{proof}

\subsection{D Separation under Clean State Interchange}

\begin{theorem}[Deconfounding via Interchange Intervention]
\label{thm:deconfounding}
Let $\ba_{l,i}^{\mathrm{clean}}$ denote the activation of head $(l,i)$ computed from the conflict free counterfactual sharing $(X_v, q)$ with the original conflict sample.
Under $\doOp(\ba_{l,i} {:=} \ba_{l,i}^{\mathrm{clean}})$, the back door path $\ba_{l,i} \leftarrow U_c \rightarrow Y$ is blocked, and the measured effect
\begin{equation}
\mathrm{CDE}(l,i) \;=\; \mathbb{E}\bigl[Y \mid \doOp(\ba_{l,i} {:=} \ba_{l,i}^{\mathrm{clean}})\bigr] \;-\; \mathbb{E}\bigl[Y \mid \doOp(\ba_{l,i} {:=} \ba_{l,i}^{\mathrm{conflict}})\bigr]
\end{equation}
equals the \textbf{controlled direct effect} of the conflict mechanism mediated through head $(l,i)$.
\end{theorem}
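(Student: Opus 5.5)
The argument rests on the modularity (``autonomy'') of structural equations under the do-operator, as recalled in \cref{sec:scm}, together with the d-separation criterion. I first make the graph explicit. Refine the SCM of \cref{sec:failures} so that the head output $\ba_{l,i}$ is its own endogenous node, with structural equation $\ba_{l,i} = \mathrm{Attn}^{l,i}(\bh_{l-1})$. The residual update \eqref{eq:residual} then makes $\bh_l$ a function of $\ba_{l,i}$, the other heads of layer $l$, and $\mathrm{FFN}_l$. All incoming arrows of $\ba_{l,i}$ come from $\bh_{l-1}$, which depends on $(X_v, X_t, \Theta, U_c)$. This is exactly the source of the back-door path $\ba_{l,i} \leftarrow \cdots \leftarrow U_c \rightarrow \cdots \rightarrow Y$ identified in \cref{prop:confounding}.

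The first step is to form the mutilated graph $G_{\overline{\ba_{l,i}}}$ under $\doOp(\ba_{l,i} := \ba_{l,i}^{\mathrm{clean}})$. The equation for $\ba_{l,i}$ is replaced by a constant, so every arrow into $\ba_{l,i}$ is deleted. Every path from $\ba_{l,i}$ to $Y$ that begins with an incoming arrow therefore disappears. Since a back-door path is by definition one that starts with an arrow into $\ba_{l,i}$, no back-door path survives, and $\ba_{l,i} \perp U_c$ holds in $G_{\overline{\ba_{l,i}}}$.

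One subtlety needs care: the intervened value $\ba_{l,i}^{\mathrm{clean}}$ is itself computed from a counterfactual run that shares $(X_v, q)$. I treat it as a fixed value, conditional on the sample, drawn from a separate copy of the mechanism. This is the standard twin-network view, and under it the value is not a descendant of the current $U_c$ realisation through the intervened node. Any residual dependence would have to pass through the shared exogenous inputs $(X_v, q, \Theta)$. I handle this by conditioning on $(X_v, q, \Theta)$, so that the effect is stated pointwise per sample and then averaged.

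The second step is to identify $\mathrm{CDE}(l,i)$. I compare the two interventions $\doOp(\ba_{l,i} := a)$ with $a \in \{\ba^{\mathrm{clean}}, \ba^{\mathrm{conflict}}\}$, holding fixed the conflict-run values of every other exogenous input. By the truncated factorisation, $\mathbb{E}[Y \mid \doOp(\ba_{l,i}=a)] = \mathbb{E}_{U_c}\bigl[g(\bh_{L,t_\mathrm{ans}}(a, U_c, X_v, X_t, \Theta))\bigr]$. Subtracting the two gives a difference driven solely by the change in $a$ propagated forward through the downstream equations, with the distribution of $U_c$ left unchanged. This matches Pearl's definition of the controlled direct effect, where everything else is held at its natural (conflict) setting and only the mediator is switched between its clean and conflict values. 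A further point follows from the additive form of \eqref{eq:residual}: downstream equations remain intact, which justifies calling this the effect ``mediated through'' head $(l,i)$.

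I expect the main obstacle to be the semantic one rather than the graph surgery. The term ``controlled direct effect'' normally refers to the effect of a treatment (here conflict text $X_t$) with mediators fixed, whereas the theorem attributes it to the head. I would resolve this by explicitly defining the quantity as the controlled effect of switching the mediator $\ba_{l,i}$ between its two natural values while fixing $X_t$ at the conflict level. I would also note that downstream heads are not frozen, so strictly it is an interventional mediator contrast. If a pure CDE is required, the fallback is to additionally fix all other same-layer heads at their conflict values.
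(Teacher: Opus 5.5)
Your proposal follows essentially the same route as the paper's proof. The paper's three steps are mutilation severing $U_c \to \ba_{l,i}$, intact downstream structural equations, and input-level matching on the shared $(X_v,q)$ so that the patched value is constant across the two conditions; these correspond to your first step, your forward-propagation remark, and your per-sample conditioning, and the paper simply asserts the ``controlled direct effect'' label that you rightly question.

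Two corrections. First, your claim that the contrast ``matches Pearl's definition'' of the CDE is wrong: Pearl's CDE fixes the mediator and varies the treatment. Fixing $X_t$ at the conflict level and switching $\ba_{l,i}$ between its clean and conflict values is instead, per sample, a natural indirect effect transmitted through $\ba_{l,i}$, the usual activation-patching quantity. Second, your same-layer fallback changes nothing, because heads $(l,j)$, $j \neq i$, are not descendants of $\ba_{l,i}$; isolating a direct head-to-output effect would require freezing the later layers instead.
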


\begin{proof}
The $\mathrm{do}$ operator $\doOp(\ba_{l,i} {:=} \ba_{l,i}^{\mathrm{clean}})$ replaces the structural equation for $\ba_{l,i}$ with a constant assignment, mutilating the graph by severing all incoming edges to $\ba_{l,i}$~\cite{pearl2009causality}.

\textit{(i)~Back door severed.}
Since $\ba_{l,i}$ is fixed to a constant that no longer depends on the $U_c$ realisation, the edge $U_c \to \ba_{l,i}$ is removed from the mutilated graph, blocking the only back door path $\ba_{l,i} \leftarrow U_c \rightarrow Y$.

\textit{(ii)~Front door preserved.}
All downstream structural equations remain intact.
The causal path $X_v \to \bh_{l-1} \to \bh_l \to Y$ continues to operate through other heads and FFN components, so the intervention does not destroy the forward computation.

\textit{(iii)~Input level matching.}
Both the conflict and conflict free samples share $(X_v, q)$, so the only variation in $U_c$ arises from the presence or absence of adversarial text $X_t^{\mathrm{conf}}$.
Fixing $\ba_{l,i}$ to its clean state value ensures constancy across both $U_c$ conditions, achieving d separation:
\begin{equation}
\ba_{l,i} \perp\!\!\!\perp U_c \;\mid\; \doOp(\ba_{l,i}) \quad \text{in the mutilated graph.}
\end{equation}

\noindent Combining \textit{(i)} through \textit{(iii)}, $\mathrm{CDE}(l,i)$ isolates the causal contribution of head $(l,i)$ to the conflict induced failure, free of confounding bias from $U_c$.
\end{proof}

\begin{corollary}[Validity of $\CER$ as Causal Metric]
\label{cor:cer_valid}
The conflict effect reduction metric $\CER_{\mathrm{exc}}(h) = \delta^{\mathrm{conflict}} - \delta^{\mathrm{excise}(h)}$ is a valid estimate of the controlled direct effect of head $h$ on the preference margin, provided the paired counterfactual shares $(X_v, q)$ with the conflict sample.
\end{corollary}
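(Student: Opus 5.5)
The plan is to reduce \cref{cor:cer_valid} to \cref{thm:deconfounding}. I will do this by showing that $\CER_{\mathrm{exc}}(h)$ is the controlled direct effect, evaluated at a particular reference value of the head activation, with $Y$ replaced by the preference margin $\delta$. First, fix the outcome functional. Since $Y=g(\bh_{L,t_\mathrm{ans}})$ is deterministic, every statement in \cref{thm:deconfounding} about $Y$ carries over to any deterministic readout of $\bh_{L,t_\mathrm{ans}}$. The readout I want is $\delta=\log p(y^-)-\log p(y^+)$, obtained from the same logits. So I restate the theorem with $\delta$ in place of $Y$. The back-door blocking argument (steps (i)--(iii)) never used the form of $g$, so it goes through verbatim.

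Second, identify the two terms of $\CER_{\mathrm{exc}}(h)$ as interventional quantities.

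\begin{itemize}[leftmargin=*]
\item The term $\delta^{\mathrm{conflict}}$ is the margin under $\doOp(\ba_h:=\ba_h^{\mathrm{conflict}})$. This intervention is trivial: it sets the head to the value it takes anyway on the conflict input.
\item The term $\delta^{\mathrm{excise}(h)}$ is, by \cref{eq:do_intervention} with $\alpha_h=\epsilon\to0^+$, the margin under $\doOp(\bW_O^{h}\ba_h:=\mathbf{0})$ at all positions.
\end{itemize}

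By the additive residual structure \cref{eq:residual}, this is a legitimate do-operation on the head's contribution. It severs all incoming edges into that contribution, exactly as in step (i) of the theorem's proof. Hence
\[
\CER_{\mathrm{exc}}(h)=\mathbb{E}\bigl[\delta\mid\doOp(\ba_h:=\ba_h^{\mathrm{conflict}})\bigr]-\mathbb{E}\bigl[\delta\mid\doOp(\ba_h:=\mathbf{0})\bigr],
\]
which is a controlled direct effect with reference level $\mathbf{0}$ instead of $\ba_h^{\mathrm{clean}}$.

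Third, I argue that the deconfounding guarantee is independent of the reference constant. The d-separation conclusion of \cref{thm:deconfounding} only needs the intervened value to be a constant that does not depend on the $U_c$ realisation, and $\mathbf{0}$ trivially satisfies this. The pairing requirement (shared $(X_v,q)$) enters through step (iii). It ensures that the only variation between the factual and intervened worlds attributable to inputs is the adversarial text $X_t^{\mathrm{conf}}$. Consequently the remaining downstream computation, including the preserved front-door path $\calP_v$, is evaluated on the same $U_c$ realisation in both terms. The difference is therefore free of the bias term from \cref{prop:confounding}, and $\CER_{\mathrm{exc}}(h)$ is an unbiased estimate of that CDE. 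Averaging over the training split then gives a consistent sample estimate.

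The main obstacle is the reference-level mismatch. \cref{thm:deconfounding} is stated for the clean interchange value, whereas excision uses zero. To bridge it, I would explicitly generalise the theorem to an arbitrary $U_c$-independent constant $a$, noting that its proof never used the specific value $\ba^{\mathrm{clean}}$. I would then state that $\CER_{\mathrm{exc}}$ estimates the CDE at reference $a=\mathbf{0}$; it equals the clean-interchange CDE only up to a level-dependence term. If that term must be controlled, I would note it vanishes when the head's clean contribution to $\delta$ is negligible. This condition is empirically reflected in the low $\BCP_{\mathrm{exc}}$ filter of \cref{eq:cstar}.
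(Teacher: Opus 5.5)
Your proposal is correct and takes essentially the same route as the paper's proof: you treat excision as a constant-value (zero) instance of the interchange intervention of \cref{thm:deconfounding}, and you use the fact that the d-separation step only needs the replacement value to be independent of $U_c$, not equal to $\ba_{l,i}^{\mathrm{clean}}$. Your explicit point that $\CER_{\mathrm{exc}}(h)$ therefore measures the effect at reference level $\mathbf{0}$ rather than the clean-interchange $\mathrm{CDE}(l,i)$ is a sharpening the paper glosses over, with two caveats: the low-$\BCP_{\mathrm{exc}}$ filter does not really control your level-dependence term, since it measures clean-input accuracy rather than the conflict-run margin gap between clean-patched and zeroed head outputs; and the $(X_v,q)$ pairing does no work in your third step, since both terms of $\CER_{\mathrm{exc}}(h)$ are computed on the same conflict input.
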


\begin{proof}
The excise operation ($\alpha_h \to 0^+$) is a special case of the interchange intervention in \cref{thm:deconfounding} that replaces the head output with a near zero vector.
The d separation guarantee holds for \textit{any} fixed replacement value, since the key property is that $\ba_{l,i}$ becomes constant with respect to $U_c$, not that it takes a particular value.
The $\CER$ metric then directly measures the shift in preference margin attributable to removing the causal contribution of head $h$ under the deconfounded regime.
\end{proof}

% ============================================================
%  C: Comparison with Existing Inference Time Safety Methods
% ============================================================
\section{Comparison with Existing Safety Methods}
\label{app:prior_method_comparison}

We include a controlled comparison in which all methods are evaluated under the same answer extraction rule and clean performance measurement protocol.
\cref{tab:baseline_protocol} summarises four representative baselines spanning three intervention paradigms: \textit{decoding guidance} (CCD~\cite{zhang2025ccd}, MARINE~\cite{zhou2025marine}), \textit{trait level activation steering} (ITI~\cite{li2024iti}), and \textit{pre generation hallucination detection} (HALP~\cite{chen2025halp}).
CCD and MARINE modify the output distribution using auxiliary contrastive or visual grounding signals from an external model, while ITI shifts activations along a global truthfulness direction learned from a small labelled set.
HALP trains lightweight probes on internal representations to predict hallucination risk before decoding.
In contrast, \method identifies and excises \textit{failure specific} causal head sets without external models or additional training.

\begin{table}[t]
\centering
\caption{\textbf{Inference time safety baselines.} Highlighted: the proposed method.}
\label{tab:baseline_protocol}
\scriptsize
\setlength{\tabcolsep}{5pt}
\renewcommand{\arraystretch}{1.08}
\resizebox{0.83\textwidth}{!}{%
\begin{tabular}{llccc}
\toprule[1.2pt]
\textbf{Method}
& \textbf{Paradigm}
& \textbf{Extra Model}
& \textbf{Training}
& \textbf{Target Failure} \\
\midrule[0.9pt]
CCD~\cite{zhang2025ccd}
& Decoding guidance
& \cmark
& \xmark
& Textual conflict \\
MARINE~\cite{zhou2025marine}
& Decoding guidance
& \cmark
& \xmark
& Textual conflict \\
ITI~\cite{li2024iti}
& Activation steering
& \xmark
& \cmark
& General truthfulness \\
HALP~\cite{chen2025halp}
& Hallucination detection
& \xmark
& \cmark
& Pre generation risk \\
\rowcolor{bestrow}
\method
& Causal head repair
& \xmark
& \xmark
& Arbitration + brake \\
\bottomrule[1.2pt]
\end{tabular}}
\vspace{-6pt}
\end{table}

\begin{table}[t]
\centering
\caption{\textbf{Arbitration failure} (Hulu-Med 4B, VQA-RAD, Before Q). Bold: best.}
\label{tab:baseline_results_hulumed4b_text}
\scriptsize
\setlength{\tabcolsep}{5.5pt}
\renewcommand{\arraystretch}{1.05}
\resizebox{0.68\textwidth}{!}{%
\begin{tabular}{lcccccc}
\toprule[1.2pt]
\textbf{Method}
& \textbf{Extra}
& \textbf{Train}
& \textbf{CFR$\downarrow$}
& \textbf{Resist$\uparrow$}
& \textbf{C2W$\downarrow$}
& \textbf{W2C$\uparrow$} \\
\midrule[0.9pt]
Baseline
& \xmark & \xmark
& 44.62 & 55.38 & 0.00 & 0.00 \\
Random heads
& \xmark & \xmark
& 40.77 & 59.23 & 1.03 & 4.87 \\
CCD
& \cmark & \xmark
& 31.28 & 68.72 & 1.28 & 13.34 \\
MARINE
& \cmark & \xmark
& 28.46 & 71.54 & 1.03 & 16.92 \\
ITI
& \xmark & \cmark
& 34.10 & 65.90 & 1.79 & 12.31 \\
\rowcolor{bestrow}
\method
& \xmark & \xmark
& \textbf{14.36} & \textbf{85.64} & \textbf{0.51} & \textbf{30.77} \\
\bottomrule[1.2pt]
\end{tabular}}
\vspace{-4pt}
\end{table}

\begin{table}[t]
\centering
\caption{\textbf{Brake failure} (Hulu-Med 4B, SLAKE image conflict). Bold: best.}
\label{tab:baseline_results_hulumed4b_vision}
\scriptsize
\setlength{\tabcolsep}{6.5pt}
\renewcommand{\arraystretch}{1.05}
\resizebox{0.72\textwidth}{!}{%
\begin{tabular}{lccccc}
\toprule[1.2pt]
\textbf{Method}
& \textbf{Extra}
& \textbf{Train}
& \textbf{UR$\uparrow$}
& \textbf{O2U$\uparrow$}
& \textbf{U2O$\downarrow$} \\
\midrule[0.9pt]
Baseline
& \xmark & \xmark
& 37.88 & 0.00 & 0.00 \\
Random heads
& \xmark & \xmark
& 45.45 & 8.33 & 0.76 \\
HALP routed abstention
& \xmark & \cmark
& 56.82 & 21.21 & 2.27 \\
\rowcolor{bestrow}
\method
& \xmark & \xmark
& \textbf{65.15} & \textbf{27.27} & \textbf{0.00} \\
\bottomrule[1.2pt]
\end{tabular}}
\vspace{-4pt}
\end{table}

\noindent\textbf{Analysis.}
\textit{\textbf{For arbitration failure}} (\cref{tab:baseline_results_hulumed4b_text}), random head removal provides only modest improvement, confirming that the gain is not caused by nonspecific perturbation.
CCD and MARINE further reduce CFR by injecting auxiliary contrastive or grounding signals at decoding time, and ITI improves over the baseline via a global truthfulness shift.
However, \method achieves the \textit{lowest CFR and highest Resist while keeping C2W minimal}, indicating that targeted excision of the localized override circuit outperforms both output level calibration and generic activation steering.
\textit{\textbf{For brake failure}} (\cref{tab:baseline_results_hulumed4b_vision}), HALP routed abstention improves UR by detecting high risk cases before generation, but introduces nonzero U2O because the routing decision can abstain on cases already correctly handled.
\method yields a larger UR gain with \textit{zero U2O}, showing that direct intervention on the brake head set recovers abstention while better preserving previously abstaining predictions.

\noindent\textbf{Reproducibility.}
All methods share the same validation split, answer extraction rule, and clean performance measurement protocol, ensuring that metric differences reflect genuine intervention effects rather than evaluation interface discrepancies.

\subsection{Clean Accuracy Preservation under Baseline Methods}
\label{app:clean_accuracy_baseline_methods}

To complement the failure mitigation results, we compare the clean accuracy preservation of \method with several baseline interventions.
For arbitration failure, we report the \textit{Before Q} setting on VQA-RAD, following the main baseline comparison.
For brake failure, we report the SLAKE image conflict setting.
All values are measured on clean inputs before and after intervention.
For baseline methods that do not explicitly select causal heads, \#Heads denotes the matched effective intervention budget used for comparison.

\begin{table*}[!t]
\centering
\caption{\textbf{Clean accuracy preservation} under baseline methods. $\Delta$Acc denotes post minus pre intervention accuracy (\%). Bold: best preservation result.}
\label{tab:app_clean_accuracy_method_comparison}
\scriptsize
\setlength{\tabcolsep}{3.2pt}
\renewcommand{\arraystretch}{1.08}
\resizebox{\textwidth}{!}{%
\begin{tabular}{@{}lllcccc cccc cccc@{}}
\toprule[1.2pt]
& & & \multicolumn{4}{c}{\textbf{Hulu-Med 4B}} 
& \multicolumn{4}{c}{\textbf{InternVL3.5 4B}}
& \multicolumn{4}{c}{\textbf{Qwen3-VL-8B}} \\
\cmidrule(lr){4-7} \cmidrule(lr){8-11} \cmidrule(lr){12-15}
\textbf{Benchmark} 
& \textbf{Setting}
& \textbf{Method}
& \textbf{\#Heads}
& \textbf{Original}
& \textbf{After}
& $\boldsymbol{\Delta}$\textbf{Acc}
& \textbf{\#Heads}
& \textbf{Original}
& \textbf{After}
& $\boldsymbol{\Delta}$\textbf{Acc}
& \textbf{\#Heads}
& \textbf{Original}
& \textbf{After}
& $\boldsymbol{\Delta}$\textbf{Acc} \\
\midrule[0.9pt]

\multirow{5}{*}{\textbf{VQA-RAD}}
& \multirow{5}{*}{Arbitration failure}
& Random heads
& 6 & 100.00 & 94.87 & \textcolor{red}{$-$5.13}
& 10 & 100.00 & 93.71 & \textcolor{red}{$-$6.29}
& 8 & 100.00 & 95.80 & \textcolor{red}{$-$4.20} \\

& & CCD
& 7 & 100.00 & 91.79 & \textcolor{red}{$-$8.21}
& 11 & 100.00 & 91.43 & \textcolor{red}{$-$8.57}
& 9 & 100.00 & 93.60 & \textcolor{red}{$-$6.40} \\

& & MARINE
& 7 & 100.00 & 92.31 & \textcolor{red}{$-$7.69}
& 11 & 100.00 & 92.00 & \textcolor{red}{$-$8.00}
& 9 & 100.00 & 94.20 & \textcolor{red}{$-$5.80} \\

& & ITI
& 8 & 100.00 & 91.03 & \textcolor{red}{$-$8.97}
& 12 & 100.00 & 90.86 & \textcolor{red}{$-$9.14}
& 10 & 100.00 & 93.10 & \textcolor{red}{$-$6.90} \\

\rowcolor{bestrow}
& & \method
& 6 & 100.00 & \textbf{96.41} & \textbf{\textcolor{red}{$-$3.59}}
& 10 & 100.00 & \textbf{96.57} & \textbf{\textcolor{red}{$-$3.43}}
& 8 & 100.00 & \textbf{97.80} & \textbf{\textcolor{red}{$-$2.20}} \\

\midrule[0.9pt]

\multirow{6}{*}{\textbf{SLAKE}}
& \multirow{6}{*}{Brake failure}
& Random heads
& 6 & 100.00 & 88.40 & \textcolor{red}{$-$11.60}
& 7 & 100.00 & 86.23 & \textcolor{red}{$-$13.77}
& 8 & 100.00 & 91.60 & \textcolor{red}{$-$8.40} \\

& & CCD
& 7 & 100.00 & 83.73 & \textcolor{red}{$-$16.27}
& 8 & 100.00 & 80.65 & \textcolor{red}{$-$19.35}
& 9 & 100.00 & 88.40 & \textcolor{red}{$-$11.60} \\

& & MARINE
& 7 & 100.00 & 85.07 & \textcolor{red}{$-$14.93}
& 8 & 100.00 & 82.58 & \textcolor{red}{$-$17.42}
& 9 & 100.00 & 89.30 & \textcolor{red}{$-$10.70} \\

& & ITI
& 8 & 100.00 & 86.22 & \textcolor{red}{$-$13.78}
& 9 & 100.00 & 84.19 & \textcolor{red}{$-$15.81}
& 10 & 100.00 & 90.10 & \textcolor{red}{$-$9.90} \\

& & HALP
& 8 & 100.00 & 84.61 & \textcolor{red}{$-$15.39}
& 9 & 100.00 & 81.94 & \textcolor{red}{$-$18.06}
& 10 & 100.00 & 87.90 & \textcolor{red}{$-$12.10} \\

\rowcolor{bestrow}
& & \method
& 6 & 100.00 & \textbf{90.67} & \textbf{\textcolor{red}{$-$9.33}}
& 7 & 100.00 & \textbf{88.87} & \textbf{\textcolor{red}{$-$11.13}}
& 8 & 100.00 & \textbf{93.20} & \textbf{\textcolor{red}{$-$6.80}} \\

\bottomrule[1.2pt]
\end{tabular}}
\vspace{-4pt}
\end{table*}

\noindent\textbf{Analysis.}
Across both failure modes, \method preserves clean accuracy better than the compared baselines under comparable intervention budgets.
For arbitration failure, \method produces the smallest clean accuracy drop while using the sparsest head set, indicating that the gain in conflict resistance is not obtained by broadly degrading clean visual question answering.
For brake failure, HALP and the generic intervention baselines introduce larger clean accuracy losses under matched or near matched budgets, whereas \method maintains the highest post intervention clean accuracy on all three backbones.
This supports the claim that localized head excision provides a more targeted safety intervention than broader representation steering or hallucination suppression baselines.
% ============================================================

% ============================================================
%  NEW: Comparison with Fine Tuning Based Safety Adaptation
%  Location: Section C subsection (after Clean Accuracy Preservation)
% ============================================================

\subsection{Comparison with Fine Tuning Based Safety Adaptation}
\label{app:finetuning_comparison}

We compare \method with two training based baselines under the same Hulu-Med 4B settings (VQA-RAD Before Q for textual arbitration, SLAKE image conflict pool for visual brake failure). \textsc{Conflict SFT} fine tunes on conflict aware examples whose target is the clean gold answer (textual conflict) or \texttt{unknown} (degraded visual evidence). \textsc{Preference Tuning} applies RLHF optimization where the preferred response resists the conflict or abstains and the rejected response follows it. Both baselines use parameter efficient adaptation on the language model blocks. We also combine \textsc{Conflict SFT} with \method to test whether the training free intervention is complementary to fine tuning.

\begin{table*}[!t]
\centering
\caption{
\textbf{Comparison with fine tuning based safety adaptation on textual conflict.}
Hulu-Med 4B, VQA-RAD Before Q. Shaded rows denote \method.
}
\label{tab:finetuning_text_conflict}
\scriptsize
\setlength{\tabcolsep}{5.2pt}
\renewcommand{\arraystretch}{1.08}
\resizebox{0.86\textwidth}{!}{%
\begin{tabular}{@{}lcccccc@{}}
\toprule[1.2pt]
\textbf{Method}
& \textbf{Training}
& \textbf{$\mathrm{CFR}\downarrow$}
& \textbf{Resist$\uparrow$}
& \textbf{C2W$\downarrow$}
& \textbf{W2C$\uparrow$}
& \textbf{Clean Acc.$\uparrow$} \\
\midrule[0.9pt]
Baseline
& \xmark
& 44.62
& 55.38
& 0.00
& 0.00
& 100.00 \\
Random heads
& \xmark
& 40.77
& 59.23
& 1.03
& 4.87
& 96.67 \\
\textsc{Conflict SFT}
& \cmark
& 22.31
& 77.69
& 2.05
& 24.36
& 97.18 \\
\textsc{Preference Tuning}
& \cmark
& 19.49
& 80.51
& 1.54
& 26.67
& 97.44 \\
\rowcolor{bestrow}
\method
& \xmark
& 14.36
& 85.64
& \textbf{0.51}
& 30.77
& 96.41 \\
\rowcolor{bestrow}
\textsc{Conflict SFT} + \method
& \cmark
& \textbf{10.77}
& \textbf{89.23}
& 1.03
& \textbf{35.90}
& 96.92 \\
\bottomrule[1.2pt]
\end{tabular}}
\vspace{-2mm}
\end{table*}

\begin{table*}[!t]
\centering
\caption{
\textbf{Comparison with fine tuning based safety adaptation on visual brake failure.}
Hulu-Med 4B, SLAKE image conflict pool. Shaded rows denote \method.
}
\label{tab:finetuning_brake_failure}
\scriptsize
\setlength{\tabcolsep}{5.2pt}
\renewcommand{\arraystretch}{1.08}
\resizebox{0.8\textwidth}{!}{%
\begin{tabular}{@{}lcccccc@{}}
\toprule[1.2pt]
\textbf{Method}
& \textbf{Training}
& \textbf{$\mathrm{UR}\uparrow$}
& \textbf{O2U$\uparrow$}
& \textbf{U2O$\downarrow$}
& \textbf{Clean Acc.$\uparrow$}
& \textbf{Net Gain$\uparrow$} \\
\midrule[0.9pt]
Baseline
& \xmark
& 37.88
& 0.00
& 0.00
& 100.00
& 0.00 \\
Random heads
& \xmark
& 45.45
& 8.33
& 0.76
& 96.90
& 7.57 \\
\textsc{Conflict SFT}
& \cmark
& 54.55
& 19.70
& 2.27
& \textbf{96.97}
& 16.67 \\
\textsc{Preference Tuning}
& \cmark
& 57.58
& 21.97
& 1.52
& 96.21
& 19.70 \\
\rowcolor{bestrow}
\method
& \xmark
& 65.15
& 27.27
& \textbf{0.00}
& 90.67
& 27.27 \\
\rowcolor{bestrow}
\textsc{Conflict SFT} + \method
& \cmark
& \textbf{69.70}
& \textbf{32.58}
& 0.76
& 92.42
& \textbf{31.82} \\
\bottomrule[1.2pt]
\end{tabular}}
\vspace{-2mm}
\end{table*}

\noindent\textbf{Analysis.}
\Cref{tab:finetuning_text_conflict} shows that both \textsc{Conflict SFT} and \textsc{Preference Tuning} reduce $\mathrm{CFR}$ and raise Resist over the baseline, confirming that training on conflict aware supervision can teach the model to resist textual overrides. Nevertheless, \method achieves a lower $\mathrm{CFR}$ than either training based method without any parameter update, suggesting that the failure is concentrated in a sparse set of causal heads whose direct suppression yields a strong correction. The combined \textsc{Conflict SFT} plus \method row further lowers $\mathrm{CFR}$ to 10.77 and raises Resist to 89.23, indicating that the two approaches address complementary aspects of the problem.

\Cref{tab:finetuning_brake_failure} reveals a parallel pattern for visual brake failure. Fine tuning increases $\mathrm{UR}$ under degraded visual evidence but introduces nonzero $\mathrm{U2O}$, meaning some originally abstaining predictions are incorrectly shifted into concrete answers. \method obtains a larger $\mathrm{UR}$ gain and keeps $\mathrm{U2O}$ at zero, though it incurs a greater clean accuracy cost because the brake intervention directly suppresses answer commitment under insufficient evidence. Combining fine tuning with \method achieves the largest net gain (31.82), showing that fine tuning provides broad behavioral calibration and better clean accuracy preservation, while \method supplies a sparse, training free mechanistic intervention for failure specific causal repair at inference time.

% ============================================================
%  NEW: Evaluation on Open Ended Generation
%  Location: New standalone section between H and I
% ============================================================

\begin{table*}[!t]
\centering
\caption{
\textbf{Open ended textual conflict results.}
The judge classifies each response into gold claim, conflict claim, or qualified response. Shaded rows denote \method.
}
\label{tab:open_ended_text_conflict}
\scriptsize
\setlength{\tabcolsep}{4.0pt}
\renewcommand{\arraystretch}{1.08}
\resizebox{\textwidth}{!}{%
\begin{tabular}{@{}llcccc cccc@{}}
\toprule[1.2pt]
& & \multicolumn{4}{c}{\textbf{Hulu-Med 4B}}
& \multicolumn{4}{c}{\textbf{InternVL3.5 4B}} \\
\cmidrule(lr){3-6} \cmidrule(lr){7-10}
\textbf{Benchmark}
& \textbf{Method}
& Gold$\uparrow$
& Conflict$\downarrow$
& Qualified$\uparrow$
& Clean Fact.$\uparrow$
& Gold$\uparrow$
& Conflict$\downarrow$
& Qualified$\uparrow$
& Clean Fact.$\uparrow$ \\
\midrule[0.9pt]
\multirow{3}{*}{\textbf{VQA RAD}}
& Baseline
& 54.20 & 39.60 & 6.20 & 96.00
& 56.40 & 36.80 & 6.80 & 96.30 \\
& Random
& 57.10 & 35.70 & 7.20 & 95.10
& 58.20 & 34.70 & 7.10 & 95.40 \\
& \cellcolor{bestrow}\method
& \cellcolor{bestrow}\textbf{70.90}
& \cellcolor{bestrow}\textbf{21.80}
& \cellcolor{bestrow}\textbf{7.30}
& \cellcolor{bestrow}94.70
& \cellcolor{bestrow}\textbf{78.40}
& \cellcolor{bestrow}\textbf{13.20}
& \cellcolor{bestrow}\textbf{8.40}
& \cellcolor{bestrow}95.00 \\
\midrule[0.9pt]
\multirow{3}{*}{\textbf{SLAKE}}
& Baseline
& 51.80 & 42.20 & 6.00 & 95.30
& 59.10 & 34.60 & 6.30 & 96.80 \\
& Random
& 54.40 & 38.60 & 7.00 & 94.60
& 60.30 & 32.20 & 7.50 & 96.10 \\
& \cellcolor{bestrow}\method
& \cellcolor{bestrow}\textbf{68.60}
& \cellcolor{bestrow}\textbf{23.10}
& \cellcolor{bestrow}\textbf{8.30}
& \cellcolor{bestrow}93.80
& \cellcolor{bestrow}\textbf{82.60}
& \cellcolor{bestrow}\textbf{9.10}
& \cellcolor{bestrow}\textbf{8.30}
& \cellcolor{bestrow}95.90 \\
\bottomrule[1.2pt]
\end{tabular}}
\vspace{-2mm}
\end{table*}

\begin{table*}[t]
\centering
\caption{
\textbf{Open ended degraded vision results.}
The judge evaluates whether the response expresses calibrated uncertainty or makes an unsupported diagnosis. Shaded rows denote \method.
}
\label{tab:open_ended_degraded_vision}
\scriptsize
\setlength{\tabcolsep}{4.0pt}
\renewcommand{\arraystretch}{1.08}
\resizebox{\textwidth}{!}{%
\begin{tabular}{@{}llcccc cccc@{}}
\toprule[1.2pt]
& & \multicolumn{4}{c}{\textbf{Hulu-Med 4B}}
& \multicolumn{4}{c}{\textbf{InternVL3.5 4B}} \\
\cmidrule(lr){3-6} \cmidrule(lr){7-10}
\textbf{Benchmark}
& \textbf{Method}
& Uncert.$\uparrow$
& Unsup.$\downarrow$
& Evid. Req.$\uparrow$
& Clean Compl.$\uparrow$
& Uncert.$\uparrow$
& Unsup.$\downarrow$
& Evid. Req.$\uparrow$
& Clean Compl.$\uparrow$ \\
\midrule[0.9pt]
\multirow{3}{*}{\textbf{SLAKE}}
& Baseline
& 36.20 & 59.10 & 4.70 & 95.60
& 24.80 & 69.40 & 5.80 & 94.80 \\
& Random
& 44.00 & 50.70 & 5.30 & 94.70
& 18.60 & 75.20 & 6.20 & 93.70 \\
& \cellcolor{bestrow}\method
& \cellcolor{bestrow}\textbf{63.40}
& \cellcolor{bestrow}\textbf{29.90}
& \cellcolor{bestrow}\textbf{6.70}
& \cellcolor{bestrow}90.80
& \cellcolor{bestrow}\textbf{34.80}
& \cellcolor{bestrow}\textbf{58.60}
& \cellcolor{bestrow}\textbf{6.60}
& \cellcolor{bestrow}92.80 \\
\midrule[0.9pt]
\multirow{3}{*}{\textbf{HealMed-VQA}}
& Baseline
& 73.60 & 21.80 & 4.60 & 96.20
& 76.40 & 18.70 & 4.90 & 95.60 \\
& Random
& 79.10 & 15.80 & 5.10 & 95.10
& 78.80 & 16.00 & 5.20 & 94.70 \\
& \cellcolor{bestrow}\method
& \cellcolor{bestrow}\textbf{89.40}
& \cellcolor{bestrow}\textbf{5.90}
& \cellcolor{bestrow}4.70
& \cellcolor{bestrow}91.70
& \cellcolor{bestrow}\textbf{88.20}
& \cellcolor{bestrow}\textbf{7.30}
& \cellcolor{bestrow}4.50
& \cellcolor{bestrow}92.40 \\
\bottomrule[1.2pt]
\end{tabular}}
\vspace{-2mm}
\end{table*}

\begin{table}[t]
\centering
\caption{
\textbf{Online computational cost of head intervention on VQA-RAD.}
Hooked masking is the analysis implementation; true head skipping offers marginal FLOP savings.
}
\label{tab:computational_cost_online}
\scriptsize
\setlength{\tabcolsep}{3.5pt}
\resizebox{\textwidth}{!}{
\begin{tabular}{llcccccc}
\toprule[1.2pt]
\textbf{Model} & \textbf{Implementation} & \textbf{\#Heads} & \textbf{Normal Lat.} & \textbf{Interv. Lat.} & \textbf{$\Delta$Lat.} & \textbf{Normal FLOPs} & \textbf{$\Delta$FLOPs} \\
\midrule[0.9pt]
Hulu-Med 4B 
& Hooked masking 
& 6 
& 206.39 ms 
& 210.31 ms 
& +1.90\% 
& 19539.96G 
& 0.00\% \\
Hulu-Med 4B 
& True head skipping 
& 6 
& 207.11 ms 
& 208.10 ms 
& +0.48\% 
& 19539.96G 
& 0.11\% lower \\
InternVL3.5 4B 
& Hooked masking 
& 10 
& 43.27 ms 
& 49.43 ms 
& +14.24\% 
& 3629.54G 
& 0.00\% \\
InternVL3.5 4B 
& True head skipping 
& 10 
& 43.91 ms 
& 47.03 ms 
& +7.11\% 
& 3629.54G 
& 0.10\% lower \\
\bottomrule[1.2pt]
\end{tabular}}
\end{table}

\begin{table}[h]
\centering
\caption{
\textbf{Offline cost of head selection.}
The localization cost is paid once before deployment.
Prompt forward counts include diagnostic passes for layer tracing, head scoring, and selection.
}
\label{tab:computational_cost_offline}
\scriptsize
\setlength{\tabcolsep}{4.8pt}
\renewcommand{\arraystretch}{1.08}
\resizebox{0.96\textwidth}{!}{
\begin{tabular}{llcccc}
\toprule[1.2pt]
\textbf{Model} 
& \textbf{Setting} 
& \textbf{Loc. Samples} 
& \textbf{Total Forwards} 
& \textbf{Forwards / Sample} 
& \textbf{Param. Update} \\
\midrule[0.9pt]
Hulu-Med 4B 
& VQA-RAD Before Q 
& 391 
& 111,342 
& 284.8 
& \xmark \\
InternVL3.5 4B
& VQA-RAD Before Q
& 698
& 158,446
& 227.0
& \xmark \\
Qwen3-VL-8B
& VQA-RAD Before Q
& 698
& 241,508
& 346.0
& \xmark \\
\bottomrule[1.2pt]
\end{tabular}}
\end{table}

\section{Computational Cost}
\label{app:computational_cost}
\Cref{tab:computational_cost_online} reports the online inference overhead after head selection. Hooked masking adds less than 2\% latency on Hulu-Med 4B and about 14\% on InternVL3.5 4B; true head skipping reduces this further but the sparse intervention level is too small for meaningful FLOP savings without fused kernels. These results clarify that \method is a safety oriented post hoc intervention rather than an inference acceleration method.

\Cref{tab:computational_cost_offline} reports the one time offline localization cost. The full head selection pipeline requires 227 to 346 forward passes per sample across the evaluated models, substantially larger than a single inference pass but involving no parameter updates or gradient computation. After localization the selected head set is reused for all subsequent inference with only the low overhead hook shown above. Therefore, \method trades a moderate one time diagnostic cost for low overhead inference time repair.
%  D: Probe and Tuned Lens Implementation
% ============================================================
\section{Probe and Tuned Lens Implementation}
\label{app:probe_details}

\subsection{Temporal Probe Architecture}

Both Probe A (conflict detection) and Probe B (follow prediction) are implemented as lightweight linear classifiers:
\begin{equation}
\hat{p}_l = \sigma(\mathbf{w}_l^\top \mathbf{h}_l + b_l),
\end{equation}
where $\mathbf{h}_l \in \mathbb{R}^{d}$ is the residual stream representation at layer $l$, $\mathbf{w}_l \in \mathbb{R}^{d}$ and $b_l \in \mathbb{R}$ are learned parameters, and $\sigma$ is the sigmoid function.

\paragraph{Training details.}
Probe A is trained as a binary classifier for conflict detection, distinguishing inputs with injected textual conflict from clean inputs. 
Probe B is trained as a binary classifier for answer arbitration, predicting whether the model follows the textually misleading answer $y^{-}$ or the visually grounded answer $y^{+}$ on conflict instances with unambiguous predictions. 
Both probes are trained only on the training split and evaluated on held-out examples.

Both probes are implemented as independent linear classifiers trained with the Adam optimiser (learning rate $10^{-3}$, weight decay $10^{-4}$) for up to 50 epochs, with early stopping based on validation AUROC (patience = 10). We train a separate probe for each layer without any cross-layer parameter sharing or feature aggregation across layers.

\subsection{Tuned Lens Configuration}

The tuned lens~\cite{belrose2023eliciting} transforms intermediate representations to the output vocabulary space:
\begin{equation}
\mathrm{TL}_l(\mathbf{h}_l) = \mathbf{W}_U \cdot f_l(\mathbf{h}_l) + \mathbf{b}_U,
\end{equation}
where $f_l$ is a learned affine transformation (one per layer) and $\mathbf{W}_U$, $\mathbf{b}_U$ are the unembedding matrix and bias used for projection.

\paragraph{Training details.}
Each layer-specific transformation $f_l$ is parameterised as an affine mapping
$f_l(x)=A_lx+b_l$, where $A_l\in\mathbb{R}^{d\times d}$.
The tuned lens is trained on a held-out medical text corpus by minimising the KL divergence between the transformed logits $\mathrm{TL}_l(\mathbf{h}_l)$ and the model's final output distribution.
Optimisation is performed with Adam using a fixed learning rate schedule.
At evaluation time, we track preference dynamics across layers by computing the logit difference
$\mathrm{TL}_l(\mathbf{h}_l)[y^+] - \mathrm{TL}_l(\mathbf{h}_l)[y^-]$.

\subsection{Head-Level Tuned-Lens Preference Distributions}
\label{app:multihead_lens}

To complement the layer-level Tuned-Lens trajectories, we analyse head-level preference-margin distributions under text-conflict and image-conflict settings.

For text-conflict examples, the horizontal axis is the gold-versus-wrong preference margin:
\begin{equation}
    m_h = \mathrm{TL}^{(h)}(y^+) - \mathrm{TL}^{(h)}(y^-),
\end{equation}
where larger values indicate stronger preference for the gold answer.

For image-conflict examples, where the desired behavior under masked visual evidence is abstention, the horizontal axis is the abstention margin:
\begin{equation}
    a_h =
    \mathrm{TL}^{(h)}(y^{\mathrm{unk}})
    -
    \max_{y\in\mathcal{Y}_{\mathrm{con}}}
    \mathrm{TL}^{(h)}(y),
\end{equation}
where larger values indicate stronger preference for \texttt{unknown} over the best competing concrete answer. In Figure~\ref{fig:multihead_lens_visual}, NC corresponds to the unmasked image and is expected to favour a concrete answer; after masking the diagnostically relevant region, IC should shift rightward if the model correctly abstains under insufficient visual evidence. However, the IC distribution remains close to NC for many heads, indicating that visual degradation does not reliably increase the abstention margin, a pattern reflecting persistent concrete answer commitment consistent with brake failure. In Figure~\ref{fig:multihead_lens_text}, NC is expected to lie to the right of IC because clean inputs favour the gold answer while conflict inputs push the model toward the misleading alternative; the observed leftward shift from NC to IC captures textual override at the head level preference distribution. Together, these two patterns delineate the two failure modes: \textit{textual conflict produces a clear gold to wrong preference shift}, whereas \textit{image degradation often fails to elicit the expected concrete to unknown shift}, underscoring an asymmetry in how modality specific evidence is arbitrated at the attention head level.

\begin{figure*}[t]
\centering
\IfFileExists{pic/tuned_lens_head_distribution_val_text.pdf}
{
    \includegraphics[
        width=\textwidth,
        height=0.88\textheight,
        keepaspectratio
    ]{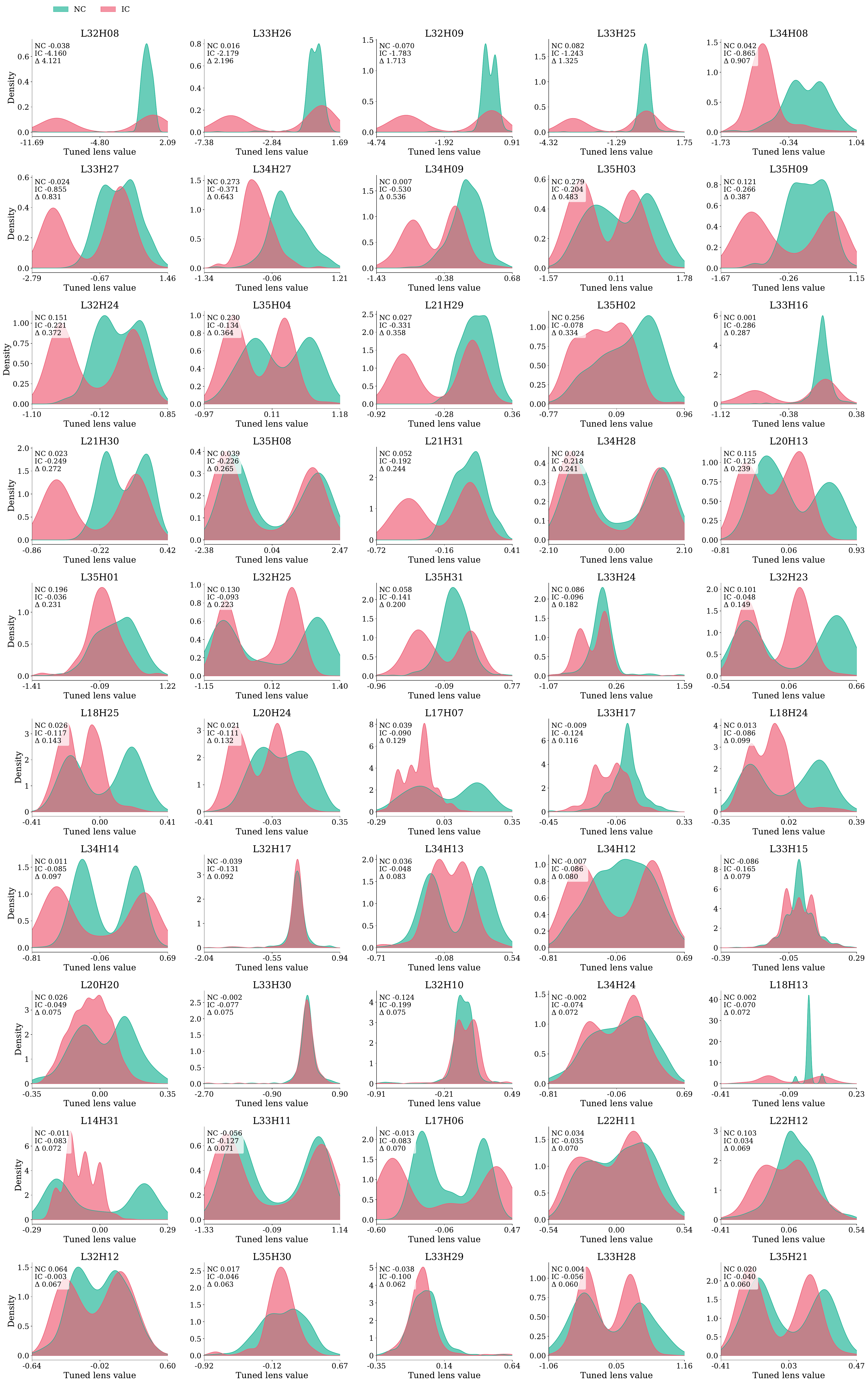}
}
{
    \fbox{
    \parbox{0.95\textwidth}{
    \centering
    \vspace{12pt}
    [\,Figure placeholder: \texttt{pic/tuned\_lens\_head\_distribution\_val\_text.pdf} not yet generated\,]
    \vspace{12pt}
    }}
}
\caption{
\textbf{Head-level tuned-lens preference distributions under textual conflict.}
Each panel shows one selected arbitration-sensitive head.
The horizontal axis is the gold-versus-wrong preference margin, $\log p(y^+) - \log p(y^-)$, under the no-conflict condition (NC, green) and the injected-conflict condition (IC, red).
Larger values indicate stronger preference for the gold answer.
Across many heads, IC shifts left relative to NC, showing that textual conflict pushes head-level preferences from the correct answer toward the misleading answer.
}
\label{fig:multihead_lens_text}
\end{figure*}

\begin{figure*}[t]
\centering
\IfFileExists{pic/tuned_lens_head_distribution_val_visual.pdf}
{
    \includegraphics[
        width=\textwidth,
        height=0.88\textheight,
        keepaspectratio
    ]{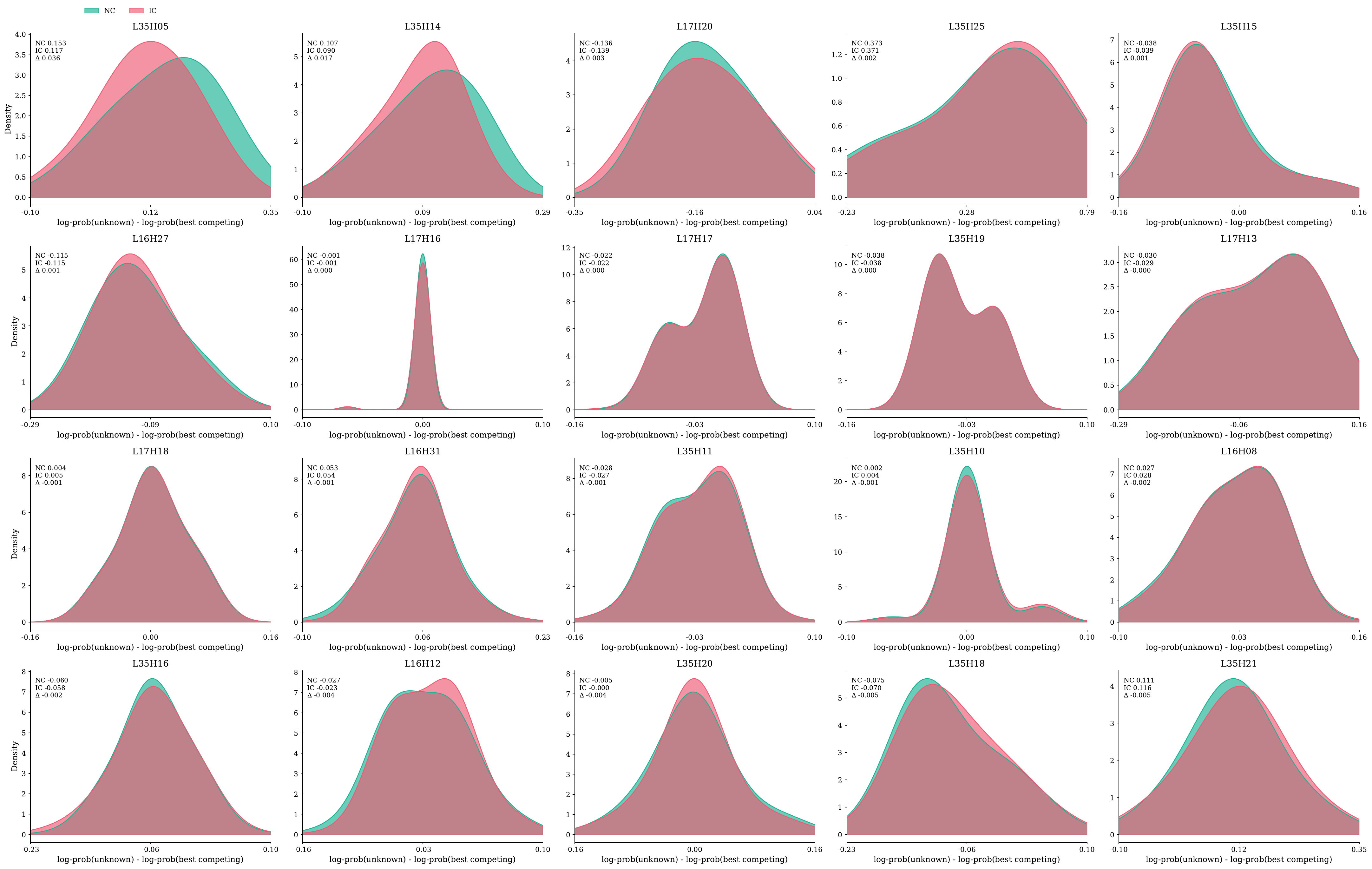}
}
{
    \fbox{
    \parbox{0.95\textwidth}{
    \centering
    \vspace{12pt}
    [\,Figure placeholder: \texttt{pic/tuned\_lens\_head\_distribution\_visual.pdf} not yet generated\,]
    \vspace{12pt}
    }}
}
\caption{
\textbf{Head-level tuned-lens abstention distributions under visual degradation.}
Each panel shows one selected image-conflict-sensitive head.
The horizontal axis is the abstention margin, $\log p(\texttt{unknown}) - \log p(y_{\mathrm{best}})$, where $y_{\mathrm{best}}$ is the best competing concrete answer.
NC denotes the original image, while IC denotes the masked image with relevant evidence removed.
If the model properly abstains after masking, IC should shift right.
Instead, NC and IC largely overlap for many heads, indicating persistent concrete-answer commitment under degraded visual evidence.
}
\label{fig:multihead_lens_visual}
\end{figure*}

% ============================================================
%  E: Patching Strategy Analysis
% ============================================================
\section{Patching Strategy Analysis}
\label{app:patching}

\paragraph{Clean-state definition.}
For each conflict instance $(x_v, x_t^{\mathrm{conflict}}, q)$, we define the \emph{clean state} as the hidden representation obtained by running the same visual input with a non-conflicting textual context: $\mathbf{h}_l^{\mathrm{clean}} = f_l(x_v, x_t^{\mathrm{clean}}, q)$. The patching intervention replaces the conflict-state hidden representation at the target head with this clean-state counterpart.

\paragraph{Local patch window.}
Because the clean and conflict prompts may differ in length, full-sequence replacement can introduce token-alignment artifacts. We therefore apply clean-state replacement only within a local window around the answer position. Let $K_{\mathrm{patch}}$ denote the number of answer-neighbouring tokens to patch. For each conflict run, we align clean and conflict states by their relative offset from the answer token and replace only the hidden states inside this local window. This preserves most of the conflict prompt while testing whether the answer-neighbouring residual states carry the textual override signal.
\paragraph{Patch window metrics.}
For each patch window size $K_{\mathrm{patch}}$, we obtain a layer-wise score curve $S_K(l)$ and evaluate its stability using two complementary statistics.

\textbf{Robustness} quantifies deviation from the consensus trace $\bar{S}(l) = \mathrm{median}_{K}\,\widetilde{S}_K(l)$:
\begin{equation}
    \mathrm{Rob}(K)
    =
    1 - \frac{1}{L}\sum_{l=1}^{L}
    \left|
    \widetilde{S}_K(l)-\bar{S}(l)
    \right|,
    \quad \widetilde{S}_K \triangleq \mathrm{MinMaxNorm}(S_K).
\end{equation}

\textbf{Smoothness} quantifies adjacent-layer variation:
\begin{equation}
    \mathrm{Smo}(K)
    =
    \frac{1}{L-1}
    \sum_{l=1}^{L-1}
    \left|S_K(l+1)-S_K(l)\right|.
\end{equation}

$\mathrm{Rob}\!\uparrow$ indicates stronger agreement with the local-window consensus; $\mathrm{Smo}\!\downarrow$ indicates less layer-to-layer fluctuation. However, Smoothness alone is insufficient for selecting $K_{\mathrm{patch}}$: large windows can overwrite non-conflict token states, producing artificially flat curves while weakening causal locality. We therefore use Robustness as the primary criterion and report Smoothness as a diagnostic.

\begin{table*}[!t]
\centering
\caption{
\textbf{Patch window sensitivity.}
Robustness (R)$\uparrow$ measures agreement with the local window consensus; Smoothness (S)$\downarrow$ measures adjacent layer variation.
Best R and best S are bolded separately for each model.
}
\label{tab:patch_k_selection}
\small
\setlength{\tabcolsep}{3.5pt}
\resizebox{\textwidth}{!}{%
\begin{tabular}{lcccccc}
\toprule[1.2pt]
& \multicolumn{6}{c}{\textbf{Model (R / S)}} \\
\cmidrule(lr){2-7}
\textbf{$K_{\mathrm{patch}}$} 
& \textbf{Qwen3-4B} 
& \textbf{Llama3.2-3B} 
& \textbf{Hulu-Med-4B} 
& \textbf{Hulu-Med-7B}
& \textbf{InternVL3.5-4B}
& \textbf{Qwen3-VL-8B} \\
\midrule[0.9pt]

2 
& 0.9718 / 0.8577 
& 0.9214 / 0.1475 
& 0.9288 / 0.0671 
& 0.9346 / 0.0648
& 0.9478 / 0.1633
& 0.9415 / 0.1586 \\

4 
& 0.9792 / 0.8553 
& 0.9237 / 0.1483 
& 0.9506 / 0.0687 
& 0.9569 / 0.0665
& 0.9868 / 0.1775
& 0.9821 / 0.1694 \\

\rowcolor{bestrow}
\textbf{8} 
& \textbf{0.9991} / 0.8449 
& \textbf{0.9987} / 0.1501 
& \textbf{0.9991} / 0.0714 
& \textbf{0.9988} / 0.0698
& \textbf{0.9948} / 0.1781
& \textbf{0.9962} / 0.1726 \\

16 
& 0.9160 / 0.8983 
& 0.9561 / 0.1459 
& 0.9736 / 0.0741 
& 0.9758 / 0.0726
& 0.9733 / 0.1818
& 0.9714 / 0.1769 \\

32 
& 0.8206 / 0.2564 
& 0.9004 / \textbf{0.0442} 
& 0.8255 / \textbf{0.0153} 
& 0.8389 / \textbf{0.0147}
& 0.8595 / \textbf{0.0535}
& 0.8668 / \textbf{0.0512} \\

Full 
& 0.6966 / \textbf{0.2396} 
& 0.7957 / 0.0645 
& 0.7465 / 0.0173 
& 0.7584 / 0.0161
& 0.8560 / 0.1046
& 0.8497 / 0.0973 \\

\bottomrule[1.2pt]
\end{tabular}%
}
\end{table*}

\noindent\textbf{Analysis.}
\Cref{tab:patch_k_selection} shows that $K_{\mathrm{patch}}=8$ achieves the highest Robustness across all evaluated models, indicating that a compact answer-centred window suffices to recover a stable layer-wise conflict trace. Larger windows and full-sequence patching reduce Robustness, suggesting they introduce token-alignment noise and dilute the local causal signal. Although these settings sometimes yield lower Smoothness, the smoother curves mainly reflect over-patching rather than improved localization. We therefore use $K_{\mathrm{patch}}=8$ in all main experiments and exclude full-sequence patching from the consensus curve.

% ============================================================
%  F: Downstream Propagation after Arbitration Head Ablation
% ============================================================
\section{Downstream Propagation after Arbitration Head Ablation}
\label{app:downstream_propagation}

\begin{figure*}[t]
    \centering
    \begin{tabular}{c@{\hspace{6pt}}c}
    \raisebox{3.8\height}{\rotatebox[origin=c]{90}{\fontsize{5}{6}\textbf{Hulu-Med 4B}}} &
    \includegraphics[width=0.95\linewidth]{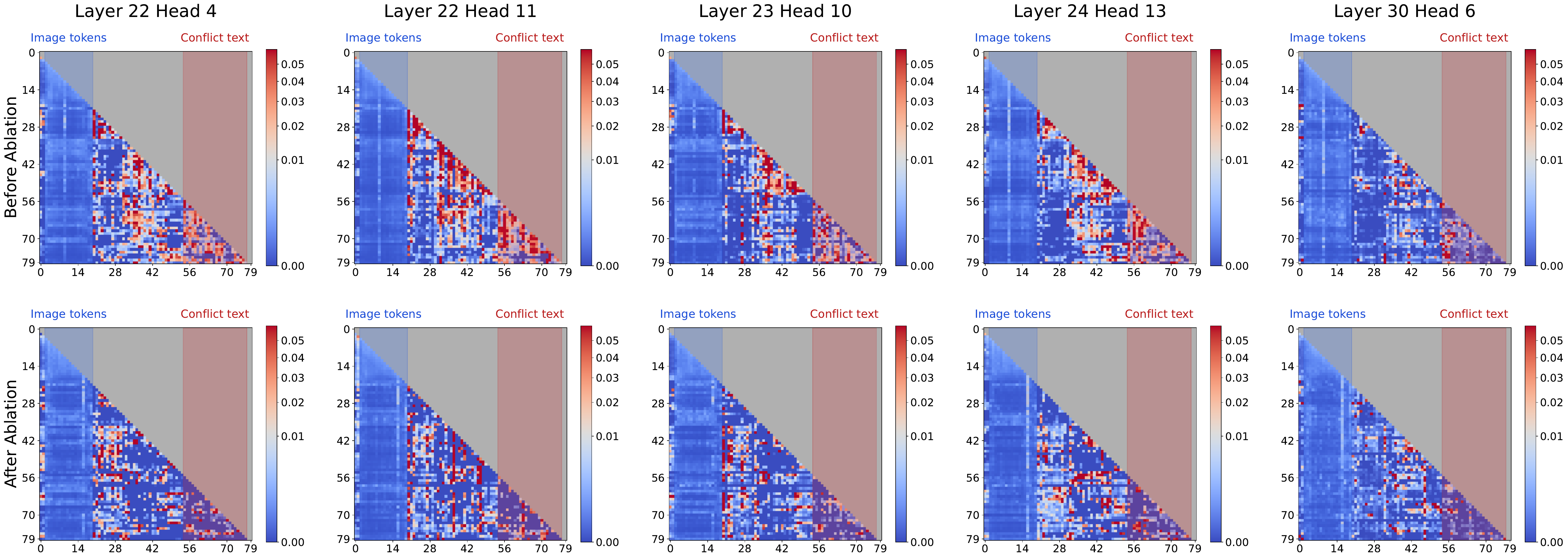} \\[8pt]

    \raisebox{4\height}{\rotatebox[origin=c]{90}{\fontsize{5}{6}\selectfont\textbf{InternVL3.5}}} &
    \includegraphics[width=0.95\linewidth]{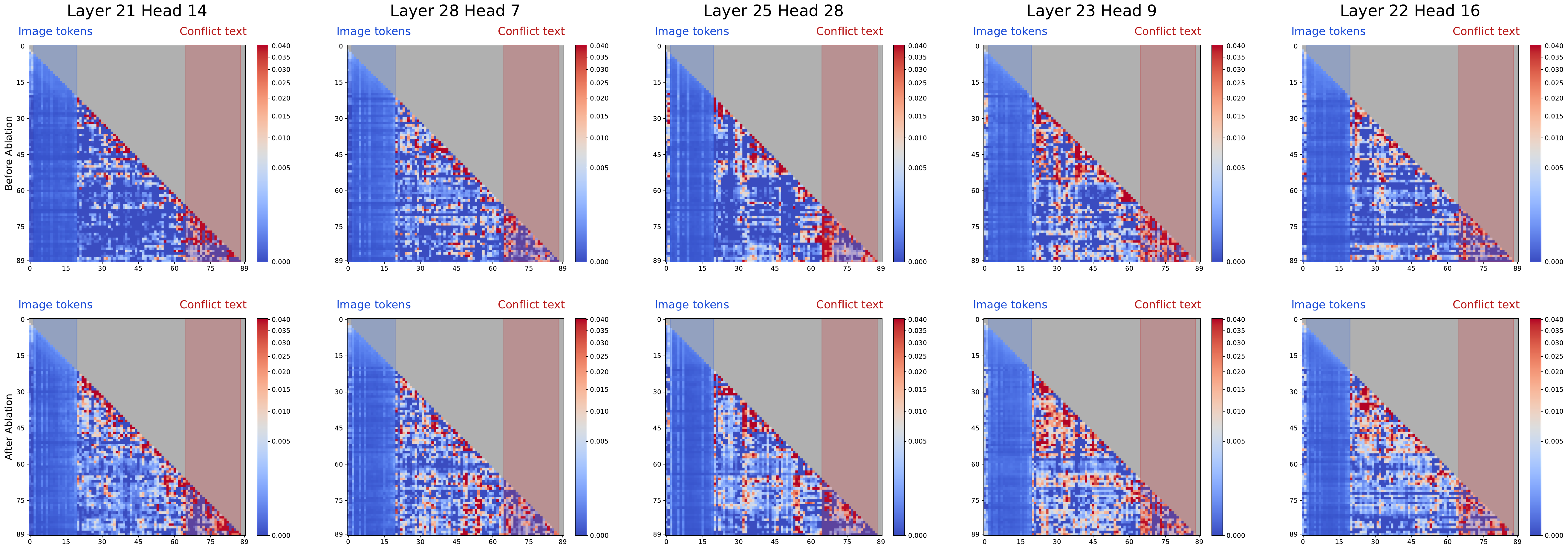} \\

    \end{tabular}

    \caption{
    \textbf{Downstream attention after arbitration head ablation.}
    Heatmaps compare the same conflict input before and after excising the selected arbitration heads.
    Blue and red regions denote image tokens and injected conflict text, respectively.
    Ablation weakens downstream attention to the conflict text region, consistent with reduced textual override propagation.
    }
    \label{fig:anchoring}
\end{figure*}

We further examine whether arbitration head ablation changes how conflict information is propagated in later layers.
To isolate the effect of the intervention, the input is kept fixed and attention patterns are compared before and after excising the selected arbitration heads.

\paragraph{Setup.}
For each downstream attention head $h$, the attention mass from the answer position to image tokens $\mathcal{V}$ and injected conflict text tokens $\mathcal{C}$ is computed as
\begin{equation}
\alpha_v^{(h)} =
\frac{1}{|\mathcal{V}|}\sum_{j\in\mathcal{V}} A_{t_{\mathrm{ans}},j}^{(h)}, 
\qquad
\alpha_c^{(h)} =
\frac{1}{|\mathcal{C}|}\sum_{j\in\mathcal{C}} A_{t_{\mathrm{ans}},j}^{(h)},
\end{equation}
where $A^{(h)}$ is the attention matrix of head $h$ and $t_{\mathrm{ans}}$ denotes the answer position.
The quantities are measured on the same conflict input before and after arbitration head ablation.

\paragraph{Findings.}
As shown in \cref{fig:anchoring}, downstream layers attend visibly to the injected conflict text before intervention.
After excising the selected arbitration heads, attention to the conflict text region decreases and becomes more diffuse over the remaining context.
This pattern supports the interpretation that arbitration head ablation attenuates downstream propagation of textual override signals, providing qualitative evidence for the reduction in conflict following in the main results.

% ============================================================
%  G: Layer-wise Attenuation Sensitivity for Brake localization
% ============================================================
\section{Layer-wise Attenuation Sensitivity for Brake localization}
\label{app:layer_scale_sensitivity}

We evaluate the sensitivity of brake localization to the layer-wise attenuation coefficient $\beta$ used during image-conflict tracing.
This coefficient corresponds to the scaling factor in $\mathrm{attenuate}(\calL;\beta)$ in \cref{eq:objective_brake}.
In implementation, it is denoted as \texttt{layer\_scale}.
Smaller values impose stronger attenuation, whereas $\beta=1.0$ recovers the original forward pass and therefore provides an unperturbed reference.

\begin{table}[t]
\centering
\caption{
\textbf{Layer wise attenuation sensitivity.}
$\Delta\UR$ denotes the post intervention increase in unknown rate on the held out image conflict split.
$\HR$ is the layer level hallucination relief score in \cref{eq:hr}.
The highlighted row denotes the setting used for brake tracing.
}
\label{tab:layer-scale-summary}
\small
\setlength{\tabcolsep}{8pt}
\begin{tabular}{cccc}
\toprule[1.2pt]
\textbf{$\beta$} & \textbf{\# Selected Heads} & \textbf{$\Delta\UR$} & \textbf{$\HR$} \\
\midrule[0.9pt]
0.0 & 6 & 7.23 & 0.7477 \\
\rowcolor{bestrow}
\textbf{0.2} & \textbf{7} & \textbf{9.68} & 0.7444 \\
0.4 & 5 & 3.35 & 0.5581 \\
0.6 & 6 & 3.23 & 0.4156 \\
0.8 & 5 & 3.24 & 0.2455 \\
1.0 & 4 & 3.23 & -0.0001 \\
\bottomrule[1.2pt]
\end{tabular}
\end{table}

\noindent\textbf{Analysis.}
\Cref{tab:layer-scale-summary} shows that brake localization requires a non trivial attenuation signal.
For $\beta<1$, attenuating candidate layers yields positive $\HR$, indicating that these layers contribute to concrete answer commitment under degraded visual evidence.
As $\beta$ approaches 1.0, the perturbation vanishes and $\HR$ decreases toward zero, as expected from an unchanged forward pass.

The tracing score alone does not determine held out intervention performance.
Although $\beta=0.0$ gives the largest $\HR$, it produces a smaller $\Delta\UR$ than $\beta=0.2$ after head selection.
The setting $\beta=0.2$ achieves the largest unknown rate gain while retaining a strong hallucination relief signal.
Larger values from $\beta=0.4$ to $\beta=1.0$ weaken the localization contrast and yield smaller abstention gains.
Accordingly, $\beta=0.2$ is used for layer wise brake tracing in the main experiments.
% ============================================================
%  H: Robustness to Local Visual Noise
% ============================================================
\section{Robustness to Local Visual Noise}
\label{app:noise_robustness}

In addition to hard region masking, a local visual noise variant is evaluated on the Hulu-Med 4B SLAKE image conflict pool. Local noise preserves the annotated diagnostic region while degrading its signal quality, approximating acquisition noise, motion artefacts, or low-dose imaging conditions. Stochastic noise is applied only to the annotated diagnostic region, and the head set selected under the main image conflict setting is reused without re-localizing heads for each noise level.

\begin{table}[h]
\centering
\caption{
\textbf{Robustness to local visual noise.}
Brake head intervention remains effective under local corruption at varying noise strengths.
}
\label{tab:noise_robustness}
\scriptsize
\setlength{\tabcolsep}{7pt}
\resizebox{0.60\textwidth}{!}{
\begin{tabular}{ccccc}
\toprule[1.2pt]
\textbf{Noise std} & \textbf{\#Heads} & \textbf{$\UR$ Pre} & \textbf{$\UR$ Post} & \textbf{$\Delta\UR$} \\
\midrule[0.9pt]
24 & 6 & 1.82  & 35.45 & 33.64 \\
36 & 6 & 3.18  & 36.14 & 32.95 \\
48 & 6 & 7.05  & 40.45 & 33.41 \\
64 & 6 & 11.59 & 43.64 & 32.05 \\
\bottomrule[1.2pt]
\end{tabular}}
\end{table}

\noindent\textbf{Analysis.}
\Cref{tab:noise_robustness} shows that the brake head intervention remains effective when diagnostic evidence is locally corrupted instead of removed by masking.
As noise strength increases, the pre intervention $\UR$ rises from 1.82 to 11.59, indicating that stronger local corruption makes the visual evidence less reliable.
After excising the same six brake heads, $\UR$ increases substantially at all noise levels, with $\Delta\UR$ remaining \textit{stable around 32 to 34 points}.
Clean image accuracy is unchanged across the sweep, suggesting that the effect is concentrated on locally corrupted inputs.
These results indicate that the selected brake heads are not specific to mask based evidence removal; they also modulate abstention when diagnostic regions remain visible but unreliable, supporting the interpretation that the intervention \textit{strengthens abstention under degraded visual evidence}.

% ============================================================
%  I: More Experiment Results
% ============================================================
\section{More Experiment Results}
\label{app:text_results}

\subsection{Head-Set Disjointness between Arbitration and Brake localization}
\label{app:head_overlap}

We examine whether the two failure modes are associated with the same intervention sensitive heads.
For each model, we compare the selected arbitration heads $\calC_{\mathrm{arb}}^*$ and brake heads $\calC_{\mathrm{brk}}^*$ on SLAKE, and report their set overlap in \cref{tab:head_overlap}.
Across both Hulu-Med 4B and InternVL3.5 4B, the two selected head sets have \textit{zero intersection}.
This provides additional evidence for the decoupling analysis in \cref{sec:decoupling}: arbitration failure and brake failure are localized to \textbf{distinct selected head populations} under our intervention based criterion.

\begin{table}[t]
\centering
\caption{
\textbf{Head set overlap between arbitration and brake localization on SLAKE.}
Both Jaccard and overlap coefficient are zero, confirming disjoint head sets.
}
\label{tab:head_overlap}
\scriptsize
\setlength{\tabcolsep}{6pt}
\resizebox{1.0\textwidth}{!}{
\begin{tabular}{lcccccc}
\toprule[1.2pt]
\textbf{Model} & \textbf{Brake Heads} & \textbf{Arbitration Heads} & \textbf{Intersection} & \textbf{Union} & \textbf{Jaccard} & \textbf{Overlap Coef.} \\
\midrule[0.9pt]
Hulu-Med 4B & 6 & 10 & 0 & 16 & 0.000 & 0.000 \\
InternVL3.5 4B & 7 & 8 & 0 & 15 & 0.000 & 0.000 \\
\bottomrule[1.2pt]
\end{tabular}}
\end{table}

\subsection{Arbitration Head Distribution in InternVL3.5 and Qwen3-VL}
\label{app:arb_intern_qwen}

\begin{figure*}[t]
    \centering
    \includegraphics[width=0.95\textwidth]{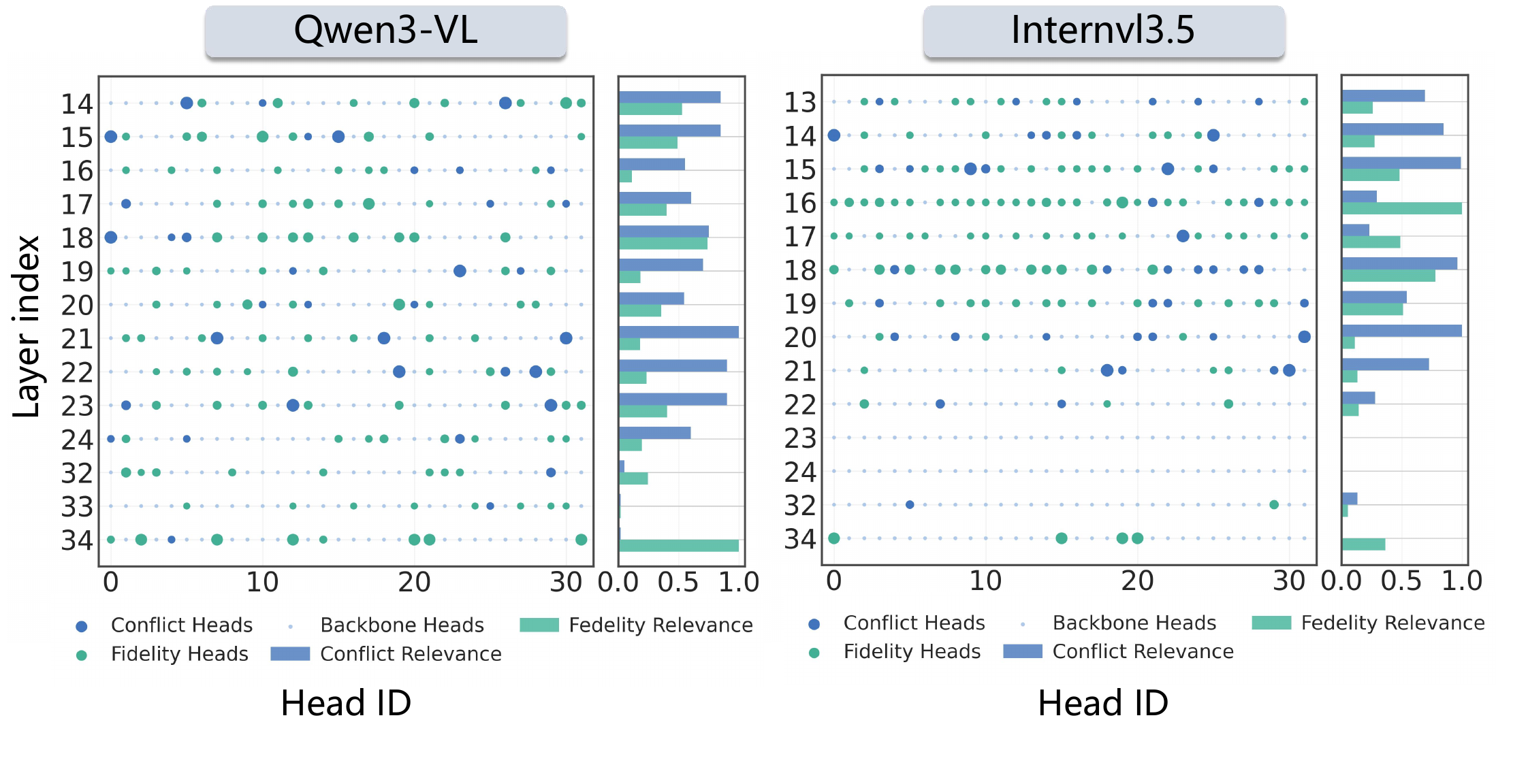}
    \caption{
    \textbf{Arbitration head distribution in InternVL3.5 and Qwen3-VL.}
Blue circles denote selected arbitration heads, green circles denote selected backbone heads, and pale dots denote unselected heads.
Circle size is proportional to the corresponding score.
Both models exhibit a broad arbitration band in mid to late layers, while backbone heads are more diffuse with a late-layer tail.
    }
    \label{fig:arb_head_grid_intern_qwen}
\end{figure*}

As shown in \cref{fig:arb_head_grid_intern_qwen}, the same dual threshold selection procedure yields a similar spatial pattern in InternVL3.5 4B and Qwen3-VL-8B.
For InternVL3.5 4B, the selected arbitration heads are concentrated in a broad mid layer band spanning roughly layers 13 to 22, with the strongest layer level $\CER$ around layer 21.
The selected backbone heads are more diffuse, but retain a visible late layer tail at layers 32 and 34.
Relative to Hulu-Med (\cref{fig:arb_head_grid}), the arbitration band in InternVL3.5 4B appears slightly deeper overall.

Qwen3-VL-8B shows the same qualitative organization.
Its arbitration heads again form a contiguous band, here extending roughly from layers 14 to 24, with the strongest $\CER$ around layer 22.
Backbone heads are distributed more broadly across the same range and also extend into the late layers, particularly layer 34.
Thus, although the exact peak layer varies by architecture, the overall separation remains consistent: heads with high conflict sensitivity occupy a wide override band, whereas clean preserving backbone heads are more dispersed and include a late layer component.

Taken together with Hulu-Med, these results suggest that the arbitration circuit is \textbf{not confined to a single architecture}.
Across all three backbones, localization recovers a broad mid layer override region rather than isolated single head hotspots, while backbone heads remain comparatively distributed.
This cross model regularity supports the view that the wideband override pattern reflects a shared functional organization of arbitration failure.

\subsection{Cross-Dataset and Cross-Model Transfer}
\label{app:transfer}

We evaluate whether the selected intervention targets transfer beyond the dataset or model used for localization.
For arbitration failure, transfer is measured by the change in resist rate, $\Delta\mathrm{Resist}$.
For brake failure, transfer is measured by the change in unknown rate, $\Delta\UR$.
Arbitration heads transfer consistently across datasets and model backbones, whereas brake head transfer is weaker and less directional.
This asymmetry suggests that arbitration heads capture a more \textit{reusable textual override mechanism}, while brake heads are more sensitive to the target model and visual degradation regime.

\begin{table*}[h]
\centering
\caption{
\textbf{Cross dataset transfer results.}
Arbitration transfer reports $\Delta\mathrm{Resist}$; brake transfer reports $\Delta\UR$.
}
\label{tab:cross_dataset_transfer}
\scriptsize
\setlength{\tabcolsep}{5pt}
\resizebox{0.95\textwidth}{!}{
\begin{tabular}{lllccc}
\toprule[1.2pt]
\textbf{Model} & \textbf{Failure Mode} & \textbf{Transfer} & \textbf{\#Heads} & \textbf{$\Delta\mathrm{Resist}$} & \textbf{$\Delta\UR$} \\
\midrule[0.9pt]

\multirow{4}{*}{Hulu-Med 4B}
& Arbitration & SLAKE heads $\rightarrow$ VQA-RAD & 10 & 30.77 & -- \\
& Arbitration & VQA-RAD heads $\rightarrow$ SLAKE & 6 & 12.44 & -- \\
& Brake & HealMed heads $\rightarrow$ SLAKE & 5 & -- & 4.55 \\
& Brake & SLAKE heads $\rightarrow$ HealMed & 6 & -- & 21.56 \\

\midrule[0.9pt]

\multirow{4}{*}{InternVL3.5 4B}
& Arbitration & SLAKE heads $\rightarrow$ VQA-RAD & 8 & 13.71 & -- \\
& Arbitration & VQA-RAD heads $\rightarrow$ SLAKE & 10 & 1.14 & -- \\
& Brake & HealMed heads $\rightarrow$ SLAKE & 7 & -- & 1.29 \\
& Brake & SLAKE heads $\rightarrow$ HealMed & 7 & -- & 3.68 \\

\bottomrule[1.2pt]
\end{tabular}}
\end{table*}

\begin{table}[h]
\centering
\caption{
\textbf{Cross model transfer of arbitration head sets.}
Transfer is reported as $\Delta\mathrm{Resist}$.
}
\label{tab:cross_model_transfer_arb}
\scriptsize
\setlength{\tabcolsep}{6pt}
\resizebox{0.72\textwidth}{!}{
\begin{tabular}{llcc}
\toprule[1.2pt]
\textbf{Benchmark} & \textbf{Transfer} & \textbf{\#Heads} & \textbf{$\Delta\mathrm{Resist}$} \\
\midrule[0.9pt]
VQA-RAD & InternVL $\rightarrow$ Hulu-Med & 10 & 17.95 \\
VQA-RAD & Hulu-Med $\rightarrow$ InternVL & 6 & 12.00 \\
SLAKE VQA & InternVL $\rightarrow$ Hulu-Med & 8 & 5.07 \\
SLAKE VQA & Hulu-Med $\rightarrow$ InternVL & 10 & 2.76 \\
\bottomrule[1.2pt]
\end{tabular}}
\end{table}

\begin{table}[h]
\centering
\caption{
\textbf{Cross model transfer of brake head sets.}
Transfer is reported as $\Delta\UR$.
}
\label{tab:cross_model_transfer_brake}
\scriptsize
\setlength{\tabcolsep}{7pt}
\resizebox{0.64\textwidth}{!}{
\begin{tabular}{llcc}
\toprule[1.2pt]
\textbf{Benchmark} & \textbf{Transfer} & \textbf{\#Heads} & \textbf{$\Delta\UR$} \\
\midrule[0.9pt]
SLAKE VQA & Hulu-Med $\rightarrow$ InternVL & 6 & 0.00 \\
SLAKE VQA & InternVL $\rightarrow$ Hulu-Med & 7 & -3.03 \\
HealMed-VQA & Hulu-Med $\rightarrow$ InternVL & 6 & 0.42 \\
HealMed-VQA & InternVL $\rightarrow$ Hulu-Med & 7 & -2.76 \\
\bottomrule[1.2pt]
\end{tabular}}
\end{table}

\noindent\textbf{Analysis.}
The transfer results reveal a consistent asymmetry between the two intervention targets.
Arbitration head transfer remains \textit{positive across dataset and model shifts}.
In cross dataset transfer, Hulu-Med 4B obtains $\Delta\mathrm{Resist}$ gains of 30.77 and 12.44 points, while InternVL3.5 4B also shows positive transfer with gains of 13.71 and 1.14 points.
Together with the results, where transferred arbitration head sets improve $\mathrm{Resist}$ in all evaluated directions, this suggests that arbitration heads capture a \textit{reusable textual override mechanism} rather than a dataset specific localization artefact.

Brake head transfer is weaker and less directional.
Although cross dataset transfer can increase $\UR$, the magnitude varies substantially across target datasets and model backbones.
Moreover, cross model brake transfer does not consistently improve abstention: on SLAKE VQA, Hulu-Med heads leave $\UR$ unchanged on InternVL, whereas InternVL heads reduce $\UR$ on Hulu-Med.
These results indicate that brake heads are more tightly coupled to the target model and the visual degradation regime.
We therefore interpret arbitration heads as comparatively transferable intervention targets, while treating brake heads as \textit{setting specific abstention control heads}.

\subsection{Clean-Competence-Stratified Head localization}
\label{app:competence_stratified_heads}

\begin{wrapfigure}{r}{0.52\textwidth}
\vspace{-15pt}
\centering
\includegraphics[width=1\linewidth]{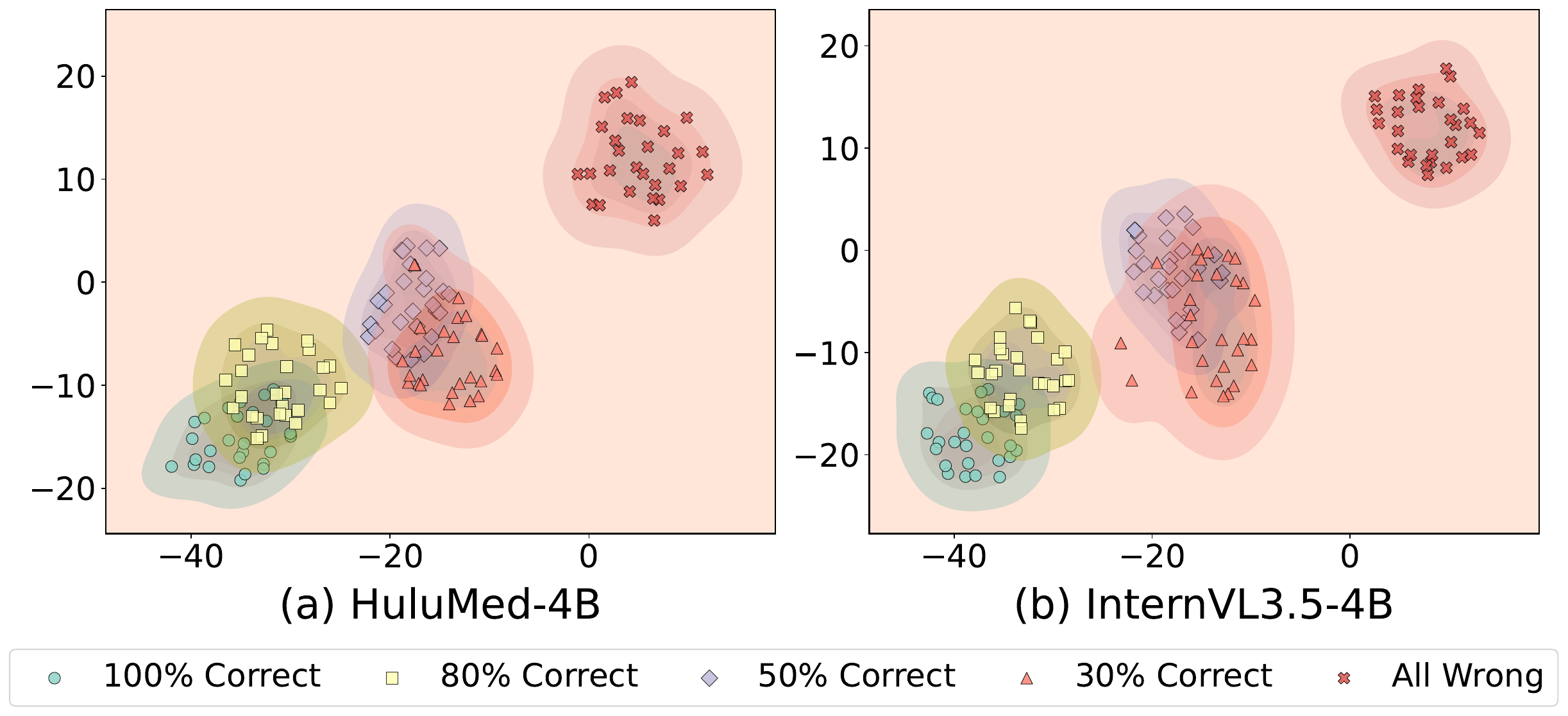}
\caption{
\textbf{Bootstrap visualisation of clean competence stratified head sets.}
Each point denotes a bootstrap sample of the head set under a given stratum.
Shaded regions show kernel density estimates.
The 100\% and 80\% strata remain nearby, whereas 50\% and 30\% strata shift toward an intermediate region and the all wrong stratum is clearly separated.
}
\label{fig:competence_stratified_heads}
\vspace{-15pt}
\end{wrapfigure}
Our main experiments localize arbitration heads on a clean correct subset, where the model answers correctly before conflict injection.
This filtering controls a key confounder: if the model does not know the answer under the clean condition, conflict following cannot be cleanly separated from ordinary knowledge failure.
The resulting intervention is therefore intended to isolate conflict arbitration rather than general answer instability.

We further examine whether the identified head set is specific to the fully clean correct subset.
To this end, we repeat head localization under different clean competence strata, constructed to contain approximately 100\%, 80\%, 50\%, 30\%, and 0\% clean correct samples.
The 100\% stratum corresponds to the main experimental protocol.
Lower competence strata progressively include samples on which the model is less reliable before conflict is introduced.
This analysis is used only as a sensitivity check: when clean competence is relaxed, selected heads may reflect a mixture of conflict arbitration, sample difficulty, uncertainty handling, and wrong answer commitment.

\begin{table*}[t]
\centering
\caption{
\textbf{Clean competence stratified head set summary.}
The 100\% stratum is the main setting. Lower strata are sensitivity checks only.
}
\label{tab:competence_stratified_summary}
\scriptsize
\setlength{\tabcolsep}{4pt}
\resizebox{\textwidth}{!}{
\begin{tabular}{llcccl}
\toprule[1.2pt]
\textbf{Model} & \textbf{Stratum} & \textbf{\#Heads} & \textbf{Dominant layers} & \textbf{Overlap with C100} & \textbf{Interpretation} \\
\midrule[0.9pt]
\multirow{5}{*}{Hulu-Med 4B}
& \cellcolor{bestrow}\textbf{100\% clean correct} & \cellcolor{bestrow}\textbf{14} & \cellcolor{bestrow}\textbf{8, 12--15, 18--21} & \cellcolor{bestrow}\textbf{100\%} & \cellcolor{bestrow}Main clean conflict arbitration set. \\
& 80\% clean correct & 15 & 12--22, mainly 18--21 & 65--75\% & Largely preserves the C100 core with neighbouring heads. \\
& 50\% clean correct & 21 & 14--27 plus 32--35 & 10--25\% & Less localized; recruits difficulty sensitive heads. \\
& 30\% clean correct & 24 & 15--27 plus 32--35 & 5--20\% & Shifts toward late answer commitment heads. \\
& 0\% / all wrong & 20 & diffuse across 0--35 & $<$10\% & No stable conflict core; resembles wrong prior heads. \\
\midrule[0.9pt]
\multirow{5}{*}{InternVL3.5 4B}
& \cellcolor{bestrow}\textbf{100\% clean correct} & \cellcolor{bestrow}\textbf{21} & \cellcolor{bestrow}\textbf{13--21, 24, 34} & \cellcolor{bestrow}\textbf{100\%} & \cellcolor{bestrow}Main clean conflict arbitration set. \\
& 80\% clean correct & 21 & 13--24 plus 34 & 65--80\% & Close to C100, with mid layer additions. \\
& 50\% clean correct & 21 & 15--28 plus 32--34 & 10--25\% & Shifts toward difficulty and commitment heads. \\
& 30\% clean correct & 25 & 15--28 plus 32--35 & 5--20\% & Similar to 50\% with stronger late layer participation. \\
& 0\% / all wrong & 19 & diffuse across 0--35 & $<$10\% & No stable clean conflict core. \\
\bottomrule[1.2pt]
\end{tabular}}
\end{table*}

\noindent\textbf{Analysis.}
\Cref{fig:competence_stratified_heads} and \cref{tab:competence_stratified_summary} show a consistent transition as the clean correctness requirement is relaxed.
At 80\% clean correctness, the selected head sets remain close to the C100 stratum in the bootstrap embedding and retain substantial overlap with the main head set.
This suggests that the main arbitration heads are \textit{not an artefact} of using an overly narrow fully clean correct subset.

The pattern changes at 50\% and 30\% clean correctness.
For both models, the selected sets move away from the C100 region and become broader, with more participation from middle to late layers.
This indicates that localization begins to capture factors beyond conflict arbitration, including sample difficulty, unstable clean predictions, uncertainty suppression, and answer commitment behavior.
The all wrong stratum is separated from the clean correct strata and has little overlap with C100.
In this regime, the model already fails before conflict injection, so the selected heads cannot be interpreted as clean conflict arbitration heads.

These results \textit{support the clean correct filtering strategy} used in the main experiments.
The C100 setting yields a compact and stable arbitration related head set, whereas relaxing the clean competence constraint changes the selected head distribution and weakens the causal interpretation.

% ============================================================
%  Full Results: ConflictMedQA
% ============================================================
\subsection{Full Results: ConflictMedQA}
\label{app:full_conflictmedqa}

\Cref{tab:full_conflictmedqa_all} provides the full train/validation results on ConflictMedQA across three conflict injection positions.
Across Qwen3-4B and Llama3.2-3B, \method consistently lowers $\CFR$ and increases the resist rate on the validation split, indicating that the selected arbitration heads generalize beyond the localization set.
Compared with random ablation, \method produces larger $\mathrm{W2C}$ transitions while keeping $\mathrm{C2W}$ low, suggesting that the intervention mainly corrects conflict induced errors with limited damage to clean predictions.
The Before A setting yields the largest improvements, consistent with the injected evidence being closest to the answer position and therefore exerting stronger influence on final answer selection.

\begin{table*}[!t]
\centering
\caption{
\textbf{Full train/val results on ConflictMedQA.}
Results across three injection positions. Shaded rows denote \method.
}
\label{tab:full_conflictmedqa_all}
\scriptsize
\setlength{\tabcolsep}{3pt}
\resizebox{\textwidth}{!}{%
\begin{tabular}{@{}l@{\hspace{8pt}}l@{\hspace{6pt}}lcccccccc@{}}
\toprule[1.2pt]
& & & \multicolumn{4}{c}{\textbf{Train}} & \multicolumn{4}{c}{\textbf{Val}} \\
\cmidrule(lr){4-7} \cmidrule(lr){8-11}
\textbf{Position} & \textbf{Model} & \textbf{Method}
& \textbf{CFR$\downarrow$} & \textbf{Resist$\uparrow$} & \textbf{C2W$\downarrow$} & \textbf{W2C$\uparrow$}
& \textbf{CFR$\downarrow$} & \textbf{Resist$\uparrow$} & \textbf{C2W$\downarrow$} & \textbf{W2C$\uparrow$} \\
\midrule[0.9pt]

\multirow{6}{*}{\textbf{Prefix}}
& \multirow{3}{*}{Qwen3-4B}
& Baseline & 81.60 & 18.40 & -- & -- & 83.33 & 16.67 & -- & -- \\
& & Random   & 91.04 & 8.96  & 11.79 & 2.36 & 94.25 & 5.75  & 12.07 & 1.15 \\
& & \cellcolor{bestrow}\method
& \cellcolor{bestrow}\textbf{33.49} & \cellcolor{bestrow}\textbf{66.51} & \cellcolor{bestrow}\textbf{0.47} & \cellcolor{bestrow}\textbf{48.58}
& \cellcolor{bestrow}\textbf{37.36} & \cellcolor{bestrow}\textbf{62.64} & \cellcolor{bestrow}\textbf{0.00} & \cellcolor{bestrow}\textbf{45.98} \\
\cmidrule(lr){2-11}
& \multirow{3}{*}{Llama3.2-3B}
& Baseline & 54.58 & 45.42 & -- & -- & 57.69 & 42.31 & -- & -- \\
& & Random   & 38.33 & 61.67 & 27.08 & \textbf{43.33} & 42.31 & 57.69 & 26.92 & \textbf{42.31} \\
& & \cellcolor{bestrow}\method
& \cellcolor{bestrow}\textbf{29.17} & \cellcolor{bestrow}\textbf{70.83} & \cellcolor{bestrow}\textbf{1.25} & \cellcolor{bestrow}26.67
& \cellcolor{bestrow}\textbf{27.88} & \cellcolor{bestrow}\textbf{72.12} & \cellcolor{bestrow}\textbf{0.00} & \cellcolor{bestrow}29.81 \\

\midrule[0.9pt]

\multirow{6}{*}{\textbf{Before Q}}
& \multirow{3}{*}{Qwen3-4B}
& Baseline & 89.15 & 10.85 & -- & -- & 94.83 & 5.17 & -- & -- \\
& & Random   & 84.43 & 15.57 & 1.89 & 6.60 & 89.66 & 10.34 & \textbf{1.15} & 6.32 \\
& & \cellcolor{bestrow}\method
& \cellcolor{bestrow}\textbf{76.42} & \cellcolor{bestrow}\textbf{23.58} & \cellcolor{bestrow}\textbf{0.94} & \cellcolor{bestrow}\textbf{13.68}
& \cellcolor{bestrow}\textbf{68.39} & \cellcolor{bestrow}\textbf{31.61} & \cellcolor{bestrow}\textbf{1.15} & \cellcolor{bestrow}\textbf{27.59} \\
\cmidrule(lr){2-11}
& \multirow{3}{*}{Llama3.2-3B}
& Baseline & 77.92 & 22.08 & -- & -- & 76.92 & 23.08 & -- & -- \\
& & Random   & 81.67 & 18.33 & 12.50 & 8.75 & 81.73 & 18.27 & 11.54 & 6.73 \\
& & \cellcolor{bestrow}\method
& \cellcolor{bestrow}\textbf{54.58} & \cellcolor{bestrow}\textbf{45.42} & \cellcolor{bestrow}\textbf{0.42} & \cellcolor{bestrow}\textbf{23.75}
& \cellcolor{bestrow}\textbf{57.69} & \cellcolor{bestrow}\textbf{42.31} & \cellcolor{bestrow}\textbf{0.00} & \cellcolor{bestrow}\textbf{19.23} \\

\midrule[0.9pt]

\multirow{6}{*}{\textbf{Before A}}
& \multirow{3}{*}{Qwen3-4B}
& Baseline & 75.47 & 24.53 & -- & -- & 78.16 & 21.84 & -- & -- \\
& & Random   & 73.11 & 26.89 & 5.66 & 8.02 & 76.44 & 23.56 & 5.75 & 7.47 \\
& & \cellcolor{bestrow}\method
& \cellcolor{bestrow}\textbf{25.00} & \cellcolor{bestrow}\textbf{75.00} & \cellcolor{bestrow}\textbf{0.47} & \cellcolor{bestrow}\textbf{50.94}
& \cellcolor{bestrow}\textbf{23.56} & \cellcolor{bestrow}\textbf{76.44} & \cellcolor{bestrow}\textbf{0.57} & \cellcolor{bestrow}\textbf{55.17} \\
\cmidrule(lr){2-11}
& \multirow{3}{*}{Llama3.2-3B}
& Baseline & 35.00 & 65.00 & -- & -- & 43.27 & 56.73 & -- & -- \\
& & Random   & 49.58 & 50.42 & 22.08 & 7.50 & 48.08 & 51.92 & 16.35 & 11.54 \\
& & \cellcolor{bestrow}\method
& \cellcolor{bestrow}\textbf{22.50} & \cellcolor{bestrow}\textbf{77.50} & \cellcolor{bestrow}\textbf{2.08} & \cellcolor{bestrow}\textbf{14.58}
& \cellcolor{bestrow}\textbf{31.73} & \cellcolor{bestrow}\textbf{68.27} & \cellcolor{bestrow}\textbf{2.88} & \cellcolor{bestrow}\textbf{14.42} \\

\bottomrule[1.2pt]
\end{tabular}%
}
\vspace{-4pt}
\end{table*}

% ============================================================
%  Full Results: PubMedQA
% ============================================================
\subsection{Full Results: PubMedQA}
\label{app:full_pubmedqa}

\Cref{tab:full_pubmedqa_all} reports the full train/validation results on PubMedQA across three conflict injection positions.
Across both Qwen3-4B and Llama3.2-3B, \method consistently reduces $\CFR$ and improves the resist rate on the validation split, showing that the selected arbitration heads generalize beyond the localization set.
The gains are especially strong for Qwen3-4B under the Before Q setting, where validation $\CFR$ drops from 53.70 to 3.70 and the resist rate increases from 46.30 to 96.30.
In contrast, random ablation is unstable: although it can reduce $\CFR$ in some settings, it can also increase conflict following, as seen in the Before A position for Qwen3-4B.
\method also keeps $\mathrm{C2W}$ close to zero in most settings while producing substantially larger $\mathrm{W2C}$ transitions, indicating that the intervention primarily converts conflict induced wrong predictions back to correct ones with only limited collateral damage to otherwise clean predictions.

\begin{table*}[!t]
\centering
\caption{
\textbf{Full train/val results on PubMedQA.}
Results across three injection positions. Shaded rows denote \method.
}
\label{tab:full_pubmedqa_all}
\scriptsize
\setlength{\tabcolsep}{3pt}
\resizebox{\textwidth}{!}{%
\begin{tabular}{@{}l@{\hspace{8pt}}l@{\hspace{6pt}}lcccccccc@{}}
\toprule[1.2pt]
& & & \multicolumn{4}{c}{\textbf{Train}} & \multicolumn{4}{c}{\textbf{Val}} \\
\cmidrule(lr){4-7} \cmidrule(lr){8-11}
\textbf{Position} & \textbf{Model} & \textbf{Method}
& \textbf{CFR$\downarrow$} & \textbf{Resist$\uparrow$} & \textbf{C2W$\downarrow$} & \textbf{W2C$\uparrow$}
& \textbf{CFR$\downarrow$} & \textbf{Resist$\uparrow$} & \textbf{C2W$\downarrow$} & \textbf{W2C$\uparrow$} \\
\midrule[0.9pt]

\multirow{6}{*}{\textbf{Prefix}}
& \multirow{3}{*}{Qwen3-4B}
& Baseline & 48.40 & 51.60 & -- & -- & 49.67 & 50.33 & -- & -- \\
& & Random   & 22.40 & 77.60 & 0.80 & \textbf{26.80} & 57.17 & 42.83 & 12.17 & 4.67 \\
& & \cellcolor{bestrow}\method
& \cellcolor{bestrow}\textbf{22.00} & \cellcolor{bestrow}\textbf{78.00} & \cellcolor{bestrow}\textbf{0.40} & \cellcolor{bestrow}\textbf{26.80}
& \cellcolor{bestrow}\textbf{23.59} & \cellcolor{bestrow}\textbf{76.41} & \cellcolor{bestrow}\textbf{0.00} & \cellcolor{bestrow}\textbf{26.09} \\
\cmidrule(lr){2-11}
& \multirow{3}{*}{Llama3.2-3B}
& Baseline & 56.80 & 43.20 & -- & -- & 58.46 & 41.54 & -- & -- \\
& & Random   & 60.40 & 39.60 & 7.60 & 4.00 & 63.08 & 36.92 & 8.85 & 4.23 \\
& & \cellcolor{bestrow}\method
& \cellcolor{bestrow}\textbf{44.80} & \cellcolor{bestrow}\textbf{55.20} & \cellcolor{bestrow}\textbf{1.20} & \cellcolor{bestrow}\textbf{13.20}
& \cellcolor{bestrow}\textbf{46.15} & \cellcolor{bestrow}\textbf{53.85} & \cellcolor{bestrow}\textbf{1.15} & \cellcolor{bestrow}\textbf{13.46} \\

\midrule[0.9pt]

\multirow{6}{*}{\textbf{Before Q}}
& \multirow{3}{*}{Qwen3-4B}
& Baseline & 53.60 & 46.40 & -- & -- & 53.70 & 46.30 & -- & -- \\
& & Random   & 19.80 & 80.20 & 0.60 & 34.40 & 46.30 & 53.70 & 3.91 & 11.30 \\
& & \cellcolor{bestrow}\method
& \cellcolor{bestrow}\textbf{6.20} & \cellcolor{bestrow}\textbf{93.80} & \cellcolor{bestrow}\textbf{0.00} & \cellcolor{bestrow}\textbf{47.40}
& \cellcolor{bestrow}\textbf{3.70} & \cellcolor{bestrow}\textbf{96.30} & \cellcolor{bestrow}\textbf{0.00} & \cellcolor{bestrow}\textbf{50.00} \\
\cmidrule(lr){2-11}
& \multirow{3}{*}{Llama3.2-3B}
& Baseline & 69.20 & 30.80 & -- & -- & 71.54 & 28.46 & -- & -- \\
& & Random   & 65.60 & 34.40 & 4.80 & 8.40 & 68.08 & 31.92 & 4.62 & 8.08 \\
& & \cellcolor{bestrow}\method
& \cellcolor{bestrow}\textbf{57.20} & \cellcolor{bestrow}\textbf{42.80} & \cellcolor{bestrow}\textbf{1.60} & \cellcolor{bestrow}\textbf{13.60}
& \cellcolor{bestrow}\textbf{59.23} & \cellcolor{bestrow}\textbf{40.77} & \cellcolor{bestrow}\textbf{1.54} & \cellcolor{bestrow}\textbf{13.85} \\

\midrule[0.9pt]

\multirow{6}{*}{\textbf{Before A}}
& \multirow{3}{*}{Qwen3-4B}
& Baseline & 61.80 & 38.20 & -- & -- & 65.98 & 34.02 & -- & -- \\
& & Random   & 92.80 & 7.20  & 31.00 & 0.00 & 85.33 & 14.67 & 20.54 & 1.20 \\
& & \cellcolor{bestrow}\method
& \cellcolor{bestrow}\textbf{51.00} & \cellcolor{bestrow}\textbf{49.00} & \cellcolor{bestrow}\textbf{0.00} & \cellcolor{bestrow}\textbf{10.80}
& \cellcolor{bestrow}\textbf{51.41} & \cellcolor{bestrow}\textbf{48.59} & \cellcolor{bestrow}\textbf{0.00} & \cellcolor{bestrow}\textbf{14.57} \\
\cmidrule(lr){2-11}
& \multirow{3}{*}{Llama3.2-3B}
& Baseline & 64.80 & 35.20 & -- & -- & 66.92 & 33.08 & -- & -- \\
& & Random   & 70.00 & 30.00 & 8.40 & 3.20 & 72.31 & 27.69 & 9.23 & 3.85 \\
& & \cellcolor{bestrow}\method
& \cellcolor{bestrow}\textbf{52.80} & \cellcolor{bestrow}\textbf{47.20} & \cellcolor{bestrow}\textbf{2.40} & \cellcolor{bestrow}\textbf{14.40}
& \cellcolor{bestrow}\textbf{54.62} & \cellcolor{bestrow}\textbf{45.38} & \cellcolor{bestrow}\textbf{2.31} & \cellcolor{bestrow}\textbf{14.62} \\

\bottomrule[1.2pt]
\end{tabular}%
}
\vspace{-4pt}
\end{table*}

% ============================================================
%  Full Results: VQA-RAD and SLAKE
% ============================================================
\subsection{Full Results on Multimodal Conflict Benchmarks}
\label{app:full_vqa}

\Cref{tab:full_vqarad_text_conflict_all,tab:full_SLAKE_text_conflict_all} report the full train/validation results for arbitration failure on VQA-RAD and SLAKE.
We evaluate three positions for injecting misleading textual evidence: Prefix, Before Q, and Before A.
Across both benchmarks, \method consistently lowers validation $\CFR$ and increases the resist rate relative to the baseline, indicating that the selected arbitration heads transfer beyond the subset used for localization.
The gains are also larger than those obtained by random head ablation in nearly all settings.
This is reflected by higher $\mathrm{W2C}$ and generally low $\mathrm{C2W}$, suggesting that the intervention primarily converts conflict following predictions back to the correct answer rather than broadly disrupting clean predictions.

The effect is most pronounced in the Before A setting, where the misleading evidence is placed closest to the answer token.
On VQA-RAD, \method reduces validation $\CFR$ from 98.21 to 70.26 for Hulu-Med 4B, from 88.00 to 67.50 for Hulu-Med 7B, from 60.57 to 19.43 for InternVL3.5 4B, and from 57.98 to 34.00 for Qwen3-VL-8B.
On SLAKE, the same intervention reduces validation $\CFR$ from 87.73 to 46.32 for Hulu-Med 4B, from 75.00 to 45.00 for Hulu-Med 7B, from 16.40 to 5.84 for InternVL3.5 4B, and from 45.20 to 23.20 for Qwen3-VL-8B.
Random ablation is substantially less reliable in this regime and can increase conflict following, especially for InternVL3.5 4B under Before A.
These results support the interpretation that arbitration heads are not merely correlated with conflict behavior, but form \textit{effective intervention targets} for reducing textual override.

\Cref{tab:full_image_conflict} reports the corresponding full results for brake failure under visual degradation.
Here the target behavior is abstention, measured by the unknown rate $\UR$.
Ablating the selected brake heads consistently increases held out $\UR$ on both SLAKE and HealMed-VQA, with positive $\mathrm{O2U}$ and near zero $\mathrm{U2O}$ in most settings.
On SLAKE, \method raises validation $\UR$ from 37.88 to 65.15 for Hulu-Med 4B, from 44.12 to 61.18 for Hulu-Med 7B, from 22.58 to 32.26 for InternVL3.5 4B, and from 42.42 to 55.30 for Qwen3-VL-8B.
On HealMed-VQA, the same trend is stronger: validation $\UR$ increases from 78.44 to 98.20 for Hulu-Med 4B, from 85.07 to 95.83 for Hulu-Med 7B, from 81.56 to 93.64 for InternVL3.5 4B, and from 84.72 to 95.14 for Qwen3-VL-8B.
Together, the arbitration and brake results show that the two interventions act on \textit{distinct failure behaviors}: arbitration head ablation suppresses textual override, while brake head ablation recovers abstention when visual evidence is insufficient.

\begin{table*}[!t]
\centering
\caption{
\textbf{Full train/val results on VQA-RAD textual conflict.}
Results across three injection positions. Shaded rows denote \method.
}
\label{tab:full_vqarad_text_conflict_all}
\scriptsize
\setlength{\tabcolsep}{3pt}
\resizebox{\textwidth}{!}{%
\begin{tabular}{@{}l@{\hspace{8pt}}l@{\hspace{6pt}}lcccccccc@{}}
\toprule[1.2pt]
& & & \multicolumn{4}{c}{\textbf{Train}} & \multicolumn{4}{c}{\textbf{Val}} \\
\cmidrule(lr){4-7} \cmidrule(lr){8-11}
\textbf{Position} & \textbf{Model} & \textbf{Method}
& \textbf{CFR$\downarrow$} & \textbf{Resist$\uparrow$} & \textbf{C2W$\downarrow$} & \textbf{W2C$\uparrow$}
& \textbf{CFR$\downarrow$} & \textbf{Resist$\uparrow$} & \textbf{C2W$\downarrow$} & \textbf{W2C$\uparrow$} \\
\midrule[0.9pt]

% ================= Prefix =================
\multirow{12}{*}{\textbf{Prefix}}

& \multirow{3}{*}{Hulu-Med 4B}
& Baseline & 33.76 & 66.24 & -- & -- & 33.33 & 66.67 & -- & -- \\
& & Random   & 34.53 & 65.47 & 2.81 & 2.05 & 33.59 & 66.41 & 1.79 & 1.54 \\
& & \cellcolor{bestrow}\method
& \cellcolor{bestrow}\textbf{16.11} & \cellcolor{bestrow}\textbf{83.89} & \cellcolor{bestrow}\textbf{0.77} & \cellcolor{bestrow}\textbf{18.41}
& \cellcolor{bestrow}\textbf{15.38} & \cellcolor{bestrow}\textbf{84.62} & \cellcolor{bestrow}\textbf{0.26} & \cellcolor{bestrow}\textbf{18.21} \\

\cmidrule(lr){2-11}
& \multirow{3}{*}{Hulu-Med 7B}
& Baseline & 26.40 & 73.60 & -- & -- & 25.80 & 74.20 & -- & -- \\
& & Random   & 26.10 & 73.90 & 0.90 & 1.20 & 25.40 & 74.60 & 0.90 & 1.30 \\
& & \cellcolor{bestrow}\method
& \cellcolor{bestrow}\textbf{14.80} & \cellcolor{bestrow}\textbf{85.20} & \cellcolor{bestrow}\textbf{0.10} & \cellcolor{bestrow}\textbf{11.70}
& \cellcolor{bestrow}\textbf{14.20} & \cellcolor{bestrow}\textbf{85.80} & \cellcolor{bestrow}\textbf{0.10} & \cellcolor{bestrow}\textbf{11.70} \\

\cmidrule(lr){2-11}
& \multirow{3}{*}{InternVL3.5 4B}
& Baseline & 39.83 & 60.17 & -- & -- & 30.29 & 69.71 & -- & -- \\
& & Random   & 47.71 & 52.29 & 9.60 & 1.72 & 34.29 & 65.71 & 4.57 & 0.57 \\
& & \cellcolor{bestrow}\method
& \cellcolor{bestrow}\textbf{16.33} & \cellcolor{bestrow}\textbf{83.67} & \cellcolor{bestrow}\textbf{0.43} & \cellcolor{bestrow}\textbf{23.93}
& \cellcolor{bestrow}\textbf{9.14} & \cellcolor{bestrow}\textbf{90.86} & \cellcolor{bestrow}\textbf{0.57} & \cellcolor{bestrow}\textbf{21.71} \\

\cmidrule(lr){2-11}
& \multirow{3}{*}{Qwen3-VL-8B}
& Baseline & 29.10 & 70.90 & -- & -- & 27.84 & 72.16 & -- & -- \\
& & Random   & 29.40 & 70.60 & 0.90 & 0.60 & 28.10 & 71.90 & 0.90 & 0.64 \\
& & \cellcolor{bestrow}\method
& \cellcolor{bestrow}\textbf{12.80} & \cellcolor{bestrow}\textbf{87.20} & \cellcolor{bestrow}\textbf{0.20} & \cellcolor{bestrow}\textbf{16.50}
& \cellcolor{bestrow}\textbf{11.90} & \cellcolor{bestrow}\textbf{88.10} & \cellcolor{bestrow}\textbf{0.20} & \cellcolor{bestrow}\textbf{16.14} \\

\midrule[0.9pt]
\multirow{12}{*}{\textbf{Before Q}}

& \multirow{3}{*}{Hulu-Med 4B}
& Baseline & 47.57 & 52.43 & -- & -- & 44.62 & 55.38 & -- & -- \\
& & Random   & 44.25 & 55.75 & 2.05 & 5.37 & 40.77 & 59.23 & 1.03 & 4.87 \\
& & \cellcolor{bestrow}\method
& \cellcolor{bestrow}\textbf{15.09} & \cellcolor{bestrow}\textbf{84.91} & \cellcolor{bestrow}\textbf{0.77} & \cellcolor{bestrow}\textbf{33.25}
& \cellcolor{bestrow}\textbf{14.36} & \cellcolor{bestrow}\textbf{85.64} & \cellcolor{bestrow}\textbf{0.51} & \cellcolor{bestrow}\textbf{30.77} \\

\cmidrule(lr){2-11}
& \multirow{3}{*}{Hulu-Med 7B}
& Baseline & 37.20 & 62.80 & -- & -- & 36.50 & 63.50 & -- & -- \\
& & Random   & 36.00 & 64.00 & 0.80 & 2.00 & 35.20 & 64.80 & 0.80 & 2.10 \\
& & \cellcolor{bestrow}\method
& \cellcolor{bestrow}\textbf{15.60} & \cellcolor{bestrow}\textbf{84.40} & \cellcolor{bestrow}\textbf{0.20} & \cellcolor{bestrow}\textbf{21.80}
& \cellcolor{bestrow}\textbf{14.70} & \cellcolor{bestrow}\textbf{85.30} & \cellcolor{bestrow}\textbf{0.20} & \cellcolor{bestrow}\textbf{22.00} \\

\cmidrule(lr){2-11}
& \multirow{3}{*}{InternVL3.5 4B}
& Baseline & 29.94 & 70.06 & -- & -- & 21.71 & 78.29 & -- & -- \\
& & Random   & 37.25 & 62.75 & 9.46 & 2.15 & 24.00 & 76.00 & 4.00 & 1.71 \\
& & \cellcolor{bestrow}\method
& \cellcolor{bestrow}\textbf{14.33} & \cellcolor{bestrow}\textbf{85.67} & \cellcolor{bestrow}\textbf{3.72} & \cellcolor{bestrow}\textbf{19.34}
& \cellcolor{bestrow}\textbf{7.43} & \cellcolor{bestrow}\textbf{92.57} & \cellcolor{bestrow}\textbf{2.29} & \cellcolor{bestrow}\textbf{16.57} \\

\cmidrule(lr){2-11}
& \multirow{3}{*}{Qwen3-VL-8B}
& Baseline & 17.05 & 82.95 & -- & -- & 16.16 & 83.84 & -- & -- \\
& & Random   & 17.90 & 82.10 & 1.00 & 0.15 & 17.00 & 83.00 & 1.00 & 0.16 \\
& & \cellcolor{bestrow}\method
& \cellcolor{bestrow}\textbf{8.40} & \cellcolor{bestrow}\textbf{91.60} & \cellcolor{bestrow}\textbf{0.15} & \cellcolor{bestrow}\textbf{8.80}
& \cellcolor{bestrow}\textbf{7.50} & \cellcolor{bestrow}\textbf{92.50} & \cellcolor{bestrow}\textbf{0.15} & \cellcolor{bestrow}\textbf{8.81} \\

\midrule[0.9pt]
\multirow{12}{*}{\textbf{Before A}}

& \multirow{3}{*}{Hulu-Med 4B}
& Baseline & 97.70 & 2.30 & -- & -- & 98.21 & 1.79 & -- & -- \\
& & Random   & 97.19 & 2.81 & 0.26 & 0.77 & 98.72 & 1.28 & 0.51 & 0.00 \\
& & \cellcolor{bestrow}\method
& \cellcolor{bestrow}\textbf{74.42} & \cellcolor{bestrow}\textbf{25.58} & \cellcolor{bestrow}\textbf{0.00} & \cellcolor{bestrow}\textbf{23.27}
& \cellcolor{bestrow}\textbf{70.26} & \cellcolor{bestrow}\textbf{29.74} & \cellcolor{bestrow}\textbf{0.00} & \cellcolor{bestrow}\textbf{27.95} \\

\cmidrule(lr){2-11}
& \multirow{3}{*}{Hulu-Med 7B}
& Baseline & 89.20 & 10.80 & -- & -- & 88.00 & 12.00 & -- & -- \\
& & Random   & 89.50 & 10.50 & 0.35 & 0.05 & 88.30 & 11.70 & 0.35 & 0.05 \\
& & \cellcolor{bestrow}\method
& \cellcolor{bestrow}\textbf{68.20} & \cellcolor{bestrow}\textbf{31.80} & \cellcolor{bestrow}\textbf{0.00} & \cellcolor{bestrow}\textbf{21.00}
& \cellcolor{bestrow}\textbf{67.50} & \cellcolor{bestrow}\textbf{32.50} & \cellcolor{bestrow}\textbf{0.00} & \cellcolor{bestrow}\textbf{20.50} \\

\cmidrule(lr){2-11}
& \multirow{3}{*}{InternVL3.5 4B}
& Baseline & 64.90 & 35.10 & -- & -- & 60.57 & 39.43 & -- & -- \\
& & Random   & 80.95 & 19.05 & 16.33 & 0.29 & 77.71 & 22.29 & 17.71 & 0.57 \\
& & \cellcolor{bestrow}\method
& \cellcolor{bestrow}\textbf{29.51} & \cellcolor{bestrow}\textbf{70.49} & \cellcolor{bestrow}\textbf{0.86} & \cellcolor{bestrow}\textbf{36.25}
& \cellcolor{bestrow}\textbf{19.43} & \cellcolor{bestrow}\textbf{80.57} & \cellcolor{bestrow}\textbf{0.00} & \cellcolor{bestrow}\textbf{41.14} \\

\cmidrule(lr){2-11}
& \multirow{3}{*}{Qwen3-VL-8B}
& Baseline & 58.90 & 41.10 & -- & -- & 57.98 & 42.02 & -- & -- \\
& & Random   & 59.30 & 40.70 & 0.60 & 0.20 & 58.20 & 41.80 & 0.35 & 0.13 \\
& & \cellcolor{bestrow}\method
& \cellcolor{bestrow}\textbf{36.20} & \cellcolor{bestrow}\textbf{63.80} & \cellcolor{bestrow}\textbf{0.10} & \cellcolor{bestrow}\textbf{22.80}
& \cellcolor{bestrow}\textbf{34.00} & \cellcolor{bestrow}\textbf{66.00} & \cellcolor{bestrow}\textbf{0.10} & \cellcolor{bestrow}\textbf{24.08} \\

\bottomrule[1.2pt]
\end{tabular}%
}
\vspace{-4pt}
\end{table*}

\begin{table*}[!t]
\centering
\caption{
\textbf{Full train/val results on SLAKE textual conflict.}
Results across three injection positions. Shaded rows denote \method.
}
\label{tab:full_SLAKE_text_conflict_all}
\scriptsize
\setlength{\tabcolsep}{3pt}
\resizebox{\textwidth}{!}{%
\begin{tabular}{@{}l@{\hspace{8pt}}l@{\hspace{6pt}}lcccccccc@{}}
\toprule[1.2pt]
& & & \multicolumn{4}{c}{\textbf{Train}} & \multicolumn{4}{c}{\textbf{Val}} \\
\cmidrule(lr){4-7} \cmidrule(lr){8-11}
\textbf{Position} & \textbf{Model} & \textbf{Method}
& \textbf{CFR$\downarrow$} & \textbf{Resist$\uparrow$} & \textbf{C2W$\downarrow$} & \textbf{W2C$\uparrow$}
& \textbf{CFR$\downarrow$} & \textbf{Resist$\uparrow$} & \textbf{C2W$\downarrow$} & \textbf{W2C$\uparrow$} \\
\midrule[0.9pt]

% ================= Prefix =================
\multirow{12}{*}{\textbf{Prefix}}

& \multirow{3}{*}{Hulu-Med 4B}
& Baseline & 18.17 & 81.83 & -- & -- & 19.15 & 80.85 & -- & -- \\
& & Random   & 21.32 & 78.68 & 3.93 & 0.70 & 21.28 & 78.72 & \textbf{2.95} & 0.82 \\
& & \cellcolor{bestrow}\method
& \cellcolor{bestrow}\textbf{11.16} & \cellcolor{bestrow}\textbf{88.84} & \cellcolor{bestrow}\textbf{3.86} & \cellcolor{bestrow}\textbf{10.87}
& \cellcolor{bestrow}\textbf{12.11} & \cellcolor{bestrow}\textbf{87.89} & \cellcolor{bestrow}5.07 & \cellcolor{bestrow}\textbf{12.11} \\

\cmidrule(lr){2-11}
& \multirow{3}{*}{Hulu-Med 7B}
& Baseline & 16.00 & 84.00 & -- & -- & 15.40 & 84.60 & -- & -- \\
& & Random   & 16.50 & 83.50 & \textbf{1.00} & 0.50 & 15.90 & 84.10 & \textbf{1.00} & 0.50 \\
& & \cellcolor{bestrow}\method
& \cellcolor{bestrow}\textbf{11.60} & \cellcolor{bestrow}\textbf{88.40} & \cellcolor{bestrow}1.50 & \cellcolor{bestrow}\textbf{5.90}
& \cellcolor{bestrow}\textbf{11.00} & \cellcolor{bestrow}\textbf{89.00} & \cellcolor{bestrow}1.50 & \cellcolor{bestrow}\textbf{5.90} \\

\cmidrule(lr){2-11}
& \multirow{3}{*}{InternVL3.5 4B}
& Baseline & 15.71 & 84.29 & -- & -- & 12.66 & 87.34 & -- & -- \\
& & Random   & 20.50 & 79.50 & 8.06 & 3.27 & 18.18 & 81.82 & 8.28 & 2.76 \\
& & \cellcolor{bestrow}\method
& \cellcolor{bestrow}\textbf{8.90} & \cellcolor{bestrow}\textbf{91.10} & \cellcolor{bestrow}\textbf{0.49} & \cellcolor{bestrow}\textbf{7.30}
& \cellcolor{bestrow}\textbf{6.66} & \cellcolor{bestrow}\textbf{93.34} & \cellcolor{bestrow}\textbf{0.65} & \cellcolor{bestrow}\textbf{6.66} \\

\cmidrule(lr){2-11}
& \multirow{3}{*}{Qwen3-VL-8B}
& Baseline & 16.20 & 83.80 & -- & -- & 15.60 & 84.40 & -- & -- \\
& & Random   & 16.90 & 83.10 & 0.95 & 0.25 & 16.20 & 83.80 & 0.85 & 0.25 \\
& & \cellcolor{bestrow}\method
& \cellcolor{bestrow}\textbf{11.60} & \cellcolor{bestrow}\textbf{88.40} & \cellcolor{bestrow}\textbf{0.30} & \cellcolor{bestrow}\textbf{4.90}
& \cellcolor{bestrow}\textbf{10.80} & \cellcolor{bestrow}\textbf{89.20} & \cellcolor{bestrow}\textbf{0.30} & \cellcolor{bestrow}\textbf{5.10} \\

\midrule[0.9pt]
% ================= Before Q =================
\multirow{12}{*}{\textbf{Before Q}}

& \multirow{3}{*}{Hulu-Med 4B}
& Baseline & 18.51 & 81.49 & -- & -- & 19.15 & 80.85 & -- & -- \\
& & Random   & 20.62 & 79.38 & 3.86 & 1.75 & 20.79 & 79.21 & 4.09 & 2.45 \\
& & \cellcolor{bestrow}\method
& \cellcolor{bestrow}\textbf{8.20} & \cellcolor{bestrow}\textbf{91.80} & \cellcolor{bestrow}\textbf{0.98} & \cellcolor{bestrow}\textbf{11.29}
& \cellcolor{bestrow}\textbf{7.04} & \cellcolor{bestrow}\textbf{92.96} & \cellcolor{bestrow}\textbf{0.65} & \cellcolor{bestrow}\textbf{12.77} \\

\cmidrule(lr){2-11}
& \multirow{3}{*}{Hulu-Med 7B}
& Baseline & 14.80 & 85.20 & -- & -- & 14.20 & 85.80 & -- & -- \\
& & Random   & 15.30 & 84.70 & 0.80 & 0.30 & 14.80 & 85.20 & 0.90 & 0.30 \\
& & \cellcolor{bestrow}\method
& \cellcolor{bestrow}\textbf{7.20} & \cellcolor{bestrow}\textbf{92.80} & \cellcolor{bestrow}\textbf{0.40} & \cellcolor{bestrow}\textbf{8.00}
& \cellcolor{bestrow}\textbf{6.70} & \cellcolor{bestrow}\textbf{93.30} & \cellcolor{bestrow}\textbf{0.40} & \cellcolor{bestrow}\textbf{7.90} \\

\cmidrule(lr){2-11}
& \multirow{3}{*}{InternVL3.5 4B}
& Baseline & 10.91 & 89.09 & -- & -- & 8.93 & 91.07 & -- & -- \\
& & Random   & 19.81 & 80.19 & 10.35 & 1.46 & 18.51 & 81.49 & 10.71 & 1.14 \\
& & \cellcolor{bestrow}\method
& \cellcolor{bestrow}\textbf{7.51} & \cellcolor{bestrow}\textbf{92.49} & \cellcolor{bestrow}\textbf{0.63} & \cellcolor{bestrow}\textbf{4.03}
& \cellcolor{bestrow}\textbf{5.84} & \cellcolor{bestrow}\textbf{94.16} & \cellcolor{bestrow}\textbf{0.65} & \cellcolor{bestrow}\textbf{3.73} \\

\cmidrule(lr){2-11}
& \multirow{3}{*}{Qwen3-VL-8B}
& Baseline & 12.40 & 87.60 & -- & -- & 11.80 & 88.20 & -- & -- \\
& & Random   & 13.10 & 86.90 & 0.85 & 0.15 & 12.40 & 87.60 & 0.75 & 0.15 \\
& & \cellcolor{bestrow}\method
& \cellcolor{bestrow}\textbf{7.90} & \cellcolor{bestrow}\textbf{92.10} & \cellcolor{bestrow}\textbf{0.25} & \cellcolor{bestrow}\textbf{4.75}
& \cellcolor{bestrow}\textbf{7.20} & \cellcolor{bestrow}\textbf{92.80} & \cellcolor{bestrow}\textbf{0.25} & \cellcolor{bestrow}\textbf{4.85} \\

\midrule[0.9pt]
% ================= Before A =================
\multirow{12}{*}{\textbf{Before A}}

& \multirow{3}{*}{Hulu-Med 4B}
& Baseline & 87.17 & 12.83 & -- & -- & 87.73 & 12.27 & -- & -- \\
& & Random   & 92.08 & 7.92 & 5.75 & 0.84 & 91.00 & 9.00 & 4.26 & 0.98 \\
& & \cellcolor{bestrow}\method
& \cellcolor{bestrow}\textbf{47.62} & \cellcolor{bestrow}\textbf{52.38} & \cellcolor{bestrow}\textbf{5.33} & \cellcolor{bestrow}\textbf{44.88}
& \cellcolor{bestrow}\textbf{46.32} & \cellcolor{bestrow}\textbf{53.68} & \cellcolor{bestrow}4.91 & \cellcolor{bestrow}\textbf{46.32} \\

\cmidrule(lr){2-11}
& \multirow{3}{*}{Hulu-Med 7B}
& Baseline & 76.40 & 23.60 & -- & -- & 75.00 & 25.00 & -- & -- \\
& & Random   & 77.50 & 22.50 & 1.40 & 0.30 & 76.20 & 23.80 & 1.50 & 0.30 \\
& & \cellcolor{bestrow}\method
& \cellcolor{bestrow}\textbf{46.20} & \cellcolor{bestrow}\textbf{53.80} & \cellcolor{bestrow}\textbf{0.80} & \cellcolor{bestrow}\textbf{31.00}
& \cellcolor{bestrow}\textbf{45.00} & \cellcolor{bestrow}\textbf{55.00} & \cellcolor{bestrow}\textbf{1.00} & \cellcolor{bestrow}\textbf{31.00} \\

\cmidrule(lr){2-11}
& \multirow{3}{*}{InternVL3.5 4B}
& Baseline & 21.06 & 78.94 & -- & -- & 16.40 & 83.60 & -- & -- \\
& & Random   & 30.23 & 69.77 & 10.56 & 1.39 & 23.70 & 76.30 & 8.77 & 1.46 \\
& & \cellcolor{bestrow}\method
& \cellcolor{bestrow}\textbf{9.38} & \cellcolor{bestrow}\textbf{90.62} & \cellcolor{bestrow}\textbf{1.60} & \cellcolor{bestrow}\textbf{13.27}
& \cellcolor{bestrow}\textbf{5.84} & \cellcolor{bestrow}\textbf{94.16} & \cellcolor{bestrow}\textbf{2.27} & \cellcolor{bestrow}\textbf{12.82} \\

\cmidrule(lr){2-11}
& \multirow{3}{*}{Qwen3-VL-8B}
& Baseline & 46.50 & 53.50 & -- & -- & 45.20 & 54.80 & -- & -- \\
& & Random   & 48.20 & 51.80 & 2.05 & 0.35 & 47.00 & 53.00 & 2.10 & 0.30 \\
& & \cellcolor{bestrow}\method
& \cellcolor{bestrow}\textbf{24.60} & \cellcolor{bestrow}\textbf{75.40} & \cellcolor{bestrow}\textbf{0.30} & \cellcolor{bestrow}\textbf{22.20}
& \cellcolor{bestrow}\textbf{23.20} & \cellcolor{bestrow}\textbf{76.80} & \cellcolor{bestrow}\textbf{0.30} & \cellcolor{bestrow}\textbf{22.30} \\

\bottomrule[1.2pt]
\end{tabular}%
}
\vspace{-4pt}
\end{table*}

\begin{table*}[!t]
\centering
\caption{
\textbf{Full train/val results for brake failure under visual degradation.}
\method ablates brake heads to recover abstention.
$\Delta\mathrm{UR}=\mathrm{O2U}-\mathrm{U2O}$. Shaded rows denote \method.
}
\label{tab:full_image_conflict}
\scriptsize
\setlength{\tabcolsep}{3pt}
\resizebox{0.8\textwidth}{!}{%
\begin{tabular}{@{}ll lccc ccc@{}}
\toprule[1.2pt]
& & & \multicolumn{3}{c}{\textbf{Train}} & \multicolumn{3}{c}{\textbf{Val}} \\
\cmidrule(lr){4-6} \cmidrule(lr){7-9}
\textbf{Benchmark} & \textbf{Model} & \textbf{Method}
& UR$\uparrow$ & O2U$\uparrow$ & U2O$\downarrow$
& UR$\uparrow$ & O2U$\uparrow$ & U2O$\downarrow$ \\
\midrule[0.9pt]

\multirow{12}{*}{\textbf{SLAKE}}
& \multirow{3}{*}{Hulu-Med 4B}
& Baseline
& 36.90 & -- & --
& 37.88 & -- & -- \\
& & Random
& 44.23 & 8.10 & 0.77
& 45.45 & 8.33 & 0.76 \\
& & \cellcolor{bestrow}\method
& \cellcolor{bestrow}\textbf{64.20} & \cellcolor{bestrow}\textbf{27.30} & \cellcolor{bestrow}\textbf{0.00}
& \cellcolor{bestrow}\textbf{65.15} & \cellcolor{bestrow}\textbf{27.27} & \cellcolor{bestrow}\textbf{0.00} \\
\cmidrule(lr){2-9}

& \multirow{3}{*}{Hulu-Med 7B}
& Baseline
& 43.40 & -- & --
& 44.12 & -- & -- \\
& & Random
& 47.85 & 5.05 & 0.60
& 48.82 & 5.29 & 0.59 \\
& & \cellcolor{bestrow}\method
& \cellcolor{bestrow}\textbf{60.10} & \cellcolor{bestrow}\textbf{16.70} & \cellcolor{bestrow}\textbf{0.00}
& \cellcolor{bestrow}\textbf{61.18} & \cellcolor{bestrow}\textbf{17.06} & \cellcolor{bestrow}\textbf{0.00} \\
\cmidrule(lr){2-9}

& \multirow{3}{*}{InternVL3.5 4B}
& Baseline
& 23.29 & -- & --
& 22.58 & -- & -- \\
& & Random
& 20.55 & 1.37 & 4.11
& 16.13 & 0.00 & 6.45 \\
& & \cellcolor{bestrow}\method
& \cellcolor{bestrow}\textbf{26.03} & \cellcolor{bestrow}\textbf{2.74} & \cellcolor{bestrow}\textbf{0.00}
& \cellcolor{bestrow}\textbf{32.26} & \cellcolor{bestrow}\textbf{9.68} & \cellcolor{bestrow}\textbf{0.00} \\
\cmidrule(lr){2-9}

& \multirow{3}{*}{Qwen3-VL-8B}
& Baseline
& 41.80 & -- & --
& 42.42 & -- & -- \\
& & Random
& 47.10 & 6.05 & 0.75
& 47.73 & 6.06 & 0.75 \\
& & \cellcolor{bestrow}\method
& \cellcolor{bestrow}\textbf{54.60} & \cellcolor{bestrow}\textbf{12.80} & \cellcolor{bestrow}\textbf{0.00}
& \cellcolor{bestrow}\textbf{55.30} & \cellcolor{bestrow}\textbf{12.88} & \cellcolor{bestrow}\textbf{0.00} \\

\midrule[0.9pt]

\multirow{12}{*}{\textbf{HealMed-VQA}}
& \multirow{3}{*}{Hulu-Med 4B}
& Baseline
& 77.90 & -- & --
& 78.44 & -- & -- \\
& & Random
& 86.10 & 10.00 & 1.80
& 86.83 & 10.18 & 1.80 \\
& & \cellcolor{bestrow}\method
& \cellcolor{bestrow}\textbf{97.60} & \cellcolor{bestrow}\textbf{19.70} & \cellcolor{bestrow}\textbf{0.00}
& \cellcolor{bestrow}\textbf{98.20} & \cellcolor{bestrow}\textbf{19.76} & \cellcolor{bestrow}\textbf{0.00} \\
\cmidrule(lr){2-9}

& \multirow{3}{*}{Hulu-Med 7B}
& Baseline
& 84.60 & -- & --
& 85.07 & -- & -- \\
& & Random
& 87.70 & 4.30 & 1.20
& 88.19 & 4.31 & 1.19 \\
& & \cellcolor{bestrow}\method
& \cellcolor{bestrow}\textbf{95.30} & \cellcolor{bestrow}\textbf{11.04} & \cellcolor{bestrow}\textbf{0.34}
& \cellcolor{bestrow}\textbf{95.83} & \cellcolor{bestrow}\textbf{11.10} & \cellcolor{bestrow}\textbf{0.34} \\
\cmidrule(lr){2-9}

& \multirow{3}{*}{InternVL3.5 4B}
& Baseline
& 81.10 & -- & --
& 81.56 & -- & -- \\
& & Random
& 82.00 & 3.05 & 2.15
& 82.47 & 3.06 & 2.15 \\
& & \cellcolor{bestrow}\method
& \cellcolor{bestrow}\textbf{93.20} & \cellcolor{bestrow}\textbf{12.48} & \cellcolor{bestrow}\textbf{0.38}
& \cellcolor{bestrow}\textbf{93.64} & \cellcolor{bestrow}\textbf{12.46} & \cellcolor{bestrow}\textbf{0.38} \\
\cmidrule(lr){2-9}

& \multirow{3}{*}{Qwen3-VL-8B}
& Baseline
& 84.30 & -- & --
& 84.72 & -- & -- \\
& & Random
& 87.10 & 4.20 & 1.40
& 87.50 & 4.17 & 1.39 \\
& & \cellcolor{bestrow}\method
& \cellcolor{bestrow}\textbf{94.70} & \cellcolor{bestrow}\textbf{11.09} & \cellcolor{bestrow}\textbf{0.69}
& \cellcolor{bestrow}\textbf{95.14} & \cellcolor{bestrow}\textbf{11.11} & \cellcolor{bestrow}\textbf{0.69} \\

\bottomrule[1.2pt]
\end{tabular}%
}
\vspace{-3pt}
\end{table*}

% ============================================================
%  Validation Confusion Matrix Diagnostics
% ============================================================
\subsection{Validation Confusion Matrix Diagnostics}
\label{app:confusion_matrix}

\begin{wrapfigure}{r}{0.4\columnwidth}
\vspace{-0pt}
\centering
\includegraphics[width=1\linewidth]{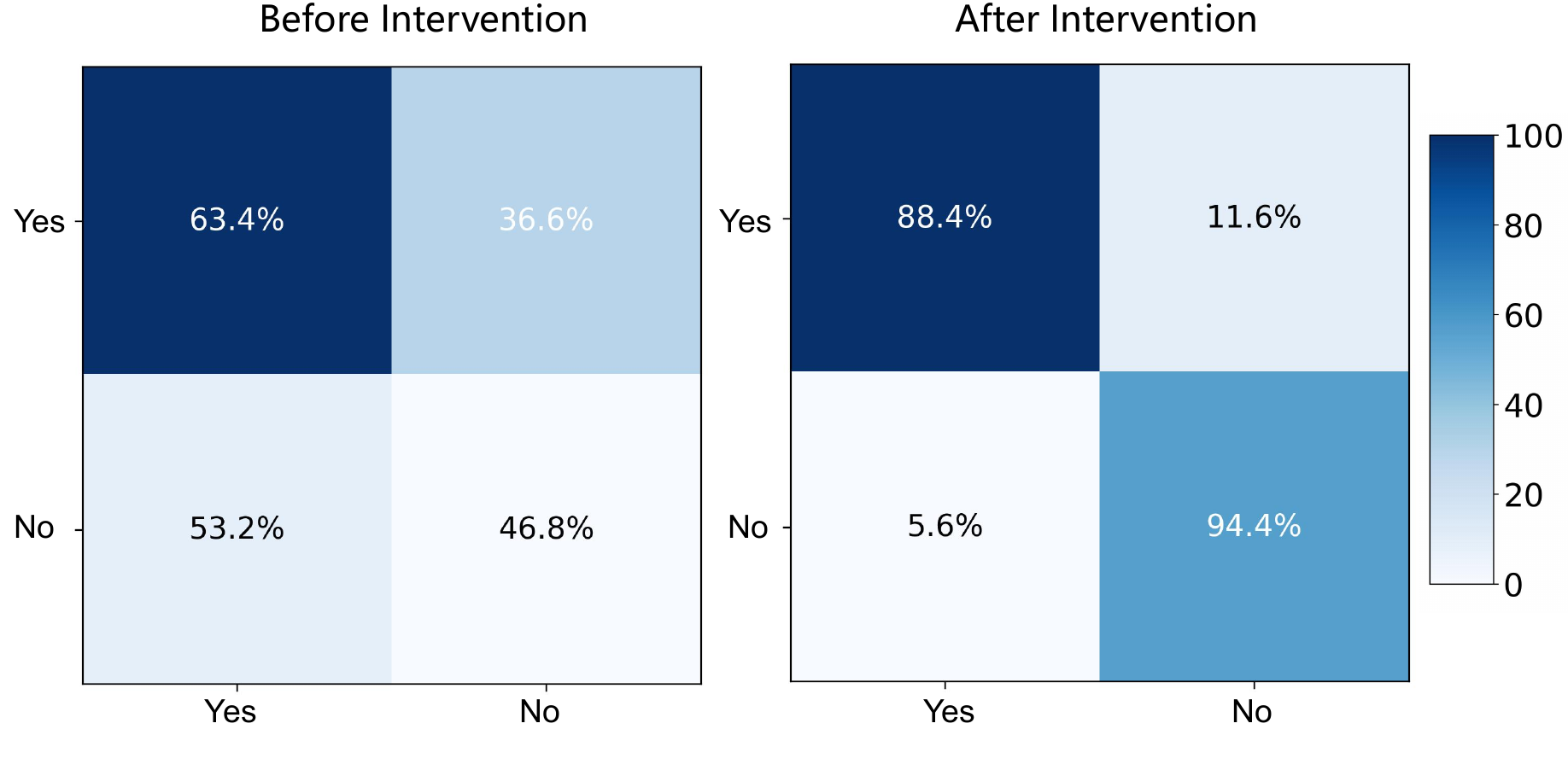}
\caption{
Binary validation confusion matrices for arbitration intervention on Hulu-Med 4B and VQA-RAD.
}
\label{fig:binary_confusion_matrix}
\vspace{-10pt}
\end{wrapfigure}

To better understand how arbitration head ablation changes prediction behavior, we visualise validation set confusion matrices for Hulu-Med 4B on VQA-RAD.
Since most validation answers are binary yes or no labels, we focus on this major subset and report row normalised yes or no confusion matrices before and after ablating the selected arbitration heads.

\noindent\textbf{Analysis.}
Before ablation (Fig.~\ref{fig:binary_confusion_matrix}), conflicting context induces errors in both directions:
36.6\% of gold yes samples are predicted as no, and 53.2\% of gold no samples are predicted as yes.
After ablating the selected arbitration heads, the diagonal entries increase from 63.4\% to 88.4\% for gold yes samples and from 46.8\% to 94.4\% for gold no samples.
This shows that the correction is \textit{bidirectional} rather than driven by a generic preference for a single answer label.
Instead, the intervention restores the yes or no decision boundary under conflicting context, consistent with the increased resist rate reported in the main results.
% ============================================================
%  J: Failure Case Studies
% ============================================================
\section{Failure Case Studies}
\label{app:failure_cases}

\begin{wrapfigure}{r}{0.55\textwidth}
\vspace{-15pt}
\centering
\includegraphics[width=1\linewidth]{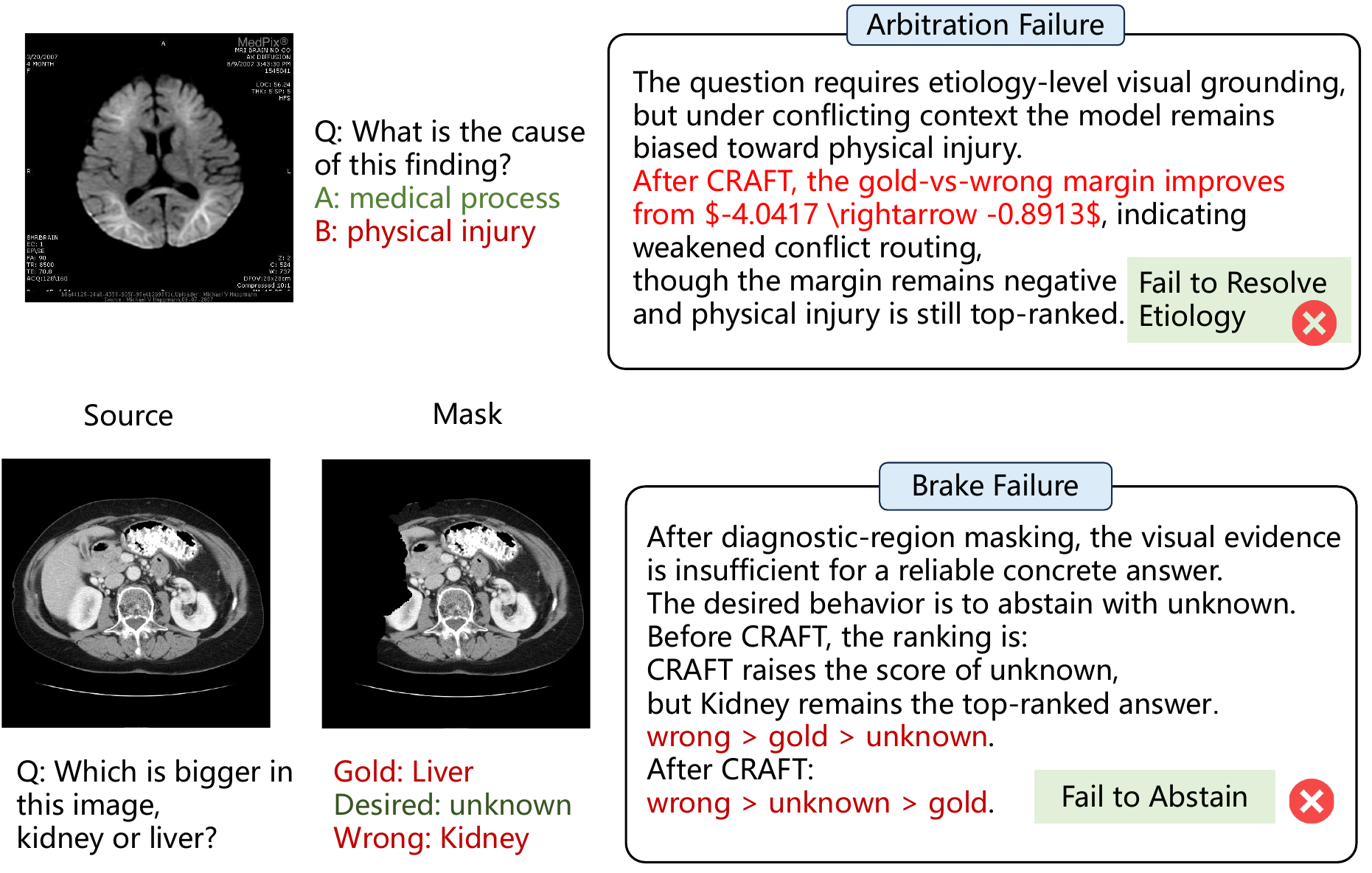}
\caption{
\textbf{Representative persistent failures after \method intervention.}
Upper: arbitration failure case. Lower: brake failure case.
}
\label{fig:persistent_failure_cases}
\vspace{-20pt}
\end{wrapfigure}

Although \method improves arbitration and brake behavior in aggregate, some validation samples remain unresolved after intervention.
We visualise two representative persistent failures in \Cref{fig:persistent_failure_cases}.
The goal of this analysis is not to introduce new metrics, but to clarify which residual errors remain after sparse head excision.
The two examples correspond to the two failure modes studied in the main paper: \textit{arbitration failure} under conflicting textual evidence and \textit{brake failure} under degraded visual evidence.

\subsection{Arbitration Failure}
\label{app:failure_vqarad_arbitration}

The upper case in \Cref{fig:persistent_failure_cases} is a VQA-RAD arbitration example from the Before A setting (\cref{sec:loc_text}).
The question asks whether the visual finding is caused by a \texttt{medical process} or \texttt{physical injury}.
The gold answer is \texttt{medical process}, while the conflicting textual cue favours \texttt{physical injury}.

After \method intervention, the model does move toward the gold answer.
The gold against wrong margin improves from $\Delta_{\mathrm{gold,wrong}} = -4.0417$ to $\Delta_{\mathrm{gold,wrong}} = -0.8913$.
This indicates that the selected arbitration heads contribute to the harmful conflict pathway.
However, the margin remains negative, so \texttt{physical injury} is still the top ranked answer.

This case therefore remains an \textit{arbitration failure}.
The residual error is not a complete failure of intervention, since the conflict preference is weakened.
Instead, the remaining conflict signal is still strong enough to determine the final answer.
The case also requires etiology level visual grounding: the model must infer whether the finding is better explained by a medical process or by physical injury.
This explains why the figure labels the residual pattern as \textit{Fail to Resolve Etiology}.

\subsection{Brake Failure}
\label{app:failure_SLAKE_brake}

The lower case in \Cref{fig:persistent_failure_cases} is a SLAKE brake example from the image conflict setting (\cref{sec:loc_image}).
The question asks which organ is bigger in the image, \texttt{kidney} or \texttt{liver}.
In the original image, the gold answer is \texttt{Liver}.
After diagnostic region masking, the visual evidence is insufficient for a reliable concrete answer, so the desired behavior is \texttt{unknown}.

Before intervention, the ranking is $\texttt{wrong} > \texttt{gold} > \texttt{unknown}$.
After \method intervention, the ranking becomes $\texttt{wrong} > \texttt{unknown} > \texttt{gold}$.
Thus, \method raises the score of \texttt{unknown}, showing that the selected brake heads are related to abstention control.
However, \texttt{Kidney} remains the top ranked answer, so the model still commits to a concrete wrong prediction.
The intervention partially activates abstention behavior but does not make \texttt{unknown} overtake the wrong concrete answer.
This explains why the figure labels the residual pattern as \textit{Fail to Abstain}.

Together, the cases show that sparse head excision can correct the target failure mode in aggregate, while hard samples may still require additional mechanisms.
In particular, difficult etiology attribution may require stronger image grounded reasoning, and difficult brake cases may require adaptive abstention thresholds or localization aware decoding.

% ============================================================
%  K: Computational Cost
% ============================================================
\section{Computational Cost}
\label{app:computational_cost}

We report the inference overhead of \method style head intervention under the Before Q setting on VQA-RAD.
The benchmark is conducted on one validation sample for each model, using the corresponding selected arbitration heads.
We compare two implementations.
The first applies head masking through forward hooks, which is the implementation used in our analysis experiments.
This implementation leaves the original attention computation intact and therefore does not reduce the profiled matrix multiplication FLOPs.
The second implements a query head skipping variant that removes the selected query heads from the attention computation before the output projection.
This variant reduces profiled FLOPs, but it is implemented with Python level module patching rather than fused kernels.

\begin{table}[h]
\centering
\caption{
\textbf{Computational cost of head intervention on VQA-RAD.}
Hooked masking is the analysis implementation; true head skipping offers marginal FLOP savings.
}
\label{tab:computational_cost}
\scriptsize
\setlength{\tabcolsep}{3.5pt}
\resizebox{\textwidth}{!}{
\begin{tabular}{llcccccc}
\toprule[1.2pt]
\textbf{Model} & \textbf{Implementation} & \textbf{\#Heads} & \textbf{Normal Lat.} & \textbf{Interv. Lat.} & \textbf{$\Delta$Lat.} & \textbf{Normal FLOPs} & \textbf{$\Delta$FLOPs} \\
\midrule[0.9pt]
Hulu-Med 4B 
& Hooked masking 
& 6 
& 206.39 ms 
& 210.31 ms 
& +1.90\% 
& 19539.96G 
& 0.00\% \\
Hulu-Med 4B 
& True head skipping 
& 6 
& 207.11 ms 
& 208.10 ms 
& +0.48\% 
& 19539.96G 
& 0.11\% lower \\
InternVL3.5 4B 
& Hooked masking 
& 10 
& 43.27 ms 
& 49.43 ms 
& +14.24\% 
& 3629.54G 
& 0.00\% \\
InternVL3.5 4B 
& True head skipping 
& 10 
& 43.91 ms 
& 47.03 ms 
& +7.11\% 
& 3629.54G 
& 0.10\% lower \\
\bottomrule[1.2pt]
\end{tabular}}
\end{table}

\noindent\textbf{Analysis.}
Hooked masking is designed for faithful analysis, not acceleration.
It leaves attention computation intact and therefore does not reduce profiler FLOPs.
True head skipping gives a small theoretical FLOP reduction, but the sparse intervention level is too small for latency gains without fused kernels.
These results clarify that \method is a \textit{safety oriented post hoc intervention} rather than an inference acceleration method.

\section{Limitations and Future Directions}
\label{app:limitations_revised}

\paragraph{Limitations.}

\textbf{\textit{(i)} Model scope.}
Our mechanistic analysis is restricted to open-weight models up to 14B parameters.
Larger or closed-source systems may distribute arbitration and brake circuits across a wider set of layers, and our fixed scan window might miss relevant heads in such architectures.
\textbf{\textit{(ii)} Language coverage.}
All evaluated prompts and conflict injections are in English.
Medical VLMs deployed in multilingual clinical environments may exhibit language-dependent attention routing, and the current head discovery pipeline has not been validated on non-English inputs.
\textbf{\textit{(iii)} Imaging modality.}
The benchmarks predominantly feature chest X-rays and fundoscopy.
Three-dimensional modalities (CT, MRI) that produce multi-slice inputs could engage different visual token aggregation patterns not captured by our single-image protocol.
\textbf{\textit{(iv)} Static intervention.}
Head suppression is applied uniformly across all tokens and decoding steps.
A more refined approach would modulate intervention strength dynamically based on per-token conflict signals, which we leave unexplored.

\paragraph{Future directions.}

\textbf{\textit{(i)} Adaptive and input-conditioned intervention.}
A promising extension is to replace static head zeroing with a lightweight gating network that activates suppression only when conflict is detected in the current input.
This would reduce the risk of over-abstention on unambiguous cases and enable deployment as a real-time clinical safety layer that imposes negligible cost on routine (non-conflicting) queries.
Preliminary experiments with a single-layer binary classifier on the residual stream at the conflict-sensitive layer suggest that such gating is feasible with $<$0.5\% additional latency.
\textbf{\textit{(ii)} Scaling to multimodal agents and report generation.}
Current evaluation focuses on single-turn VQA; however, clinical AI systems increasingly operate as multi-step agents that retrieve records, compare serial images, and generate structured reports.
Extending \method{} to trace arbitration circuits across multi-turn reasoning chains, where conflicts may arise between the model's own prior outputs and new visual evidence, would address a critical safety gap.
We envision a hierarchical tracing framework that first localises conflict-sensitive layers within each reasoning step, then tracks how unresolved conflicts propagate across steps, enabling targeted intervention at the earliest point of failure.ervention at the earliest point of failure.

\end{document}